\newif\ifpreprint
\newcommand{\bigO}{\mathcal{O}}
\newcommand{\bigOt}{\tilde{\bigO}}
\newcommand{\pcg}{PCG}
\newcommand{\fal}{Falkon}
\newcommand{\chol}{Cholesky}
\newcommand{\sap}{SAP}
\newcommand{\nsap}{NSAP}
\newcommand{\epro}{EigenPro}
\newcommand{\sko}{\texttt{Skotch}}
\newcommand{\asko}{\texttt{ASkotch}}
\newcommand{\nys}{Nystr\"{o}m}
\newcommand{\algnys}{\texttt{\nys}}
\newcommand{\orc}{\mathfrak{o}}
\newcommand{\codeurldisplaytext}{https://github.com/pratikrathore8/fast\_krr}
\newtcolorbox{relationbox}[1][]{
  colback=gray!10,
  colframe=black,
  sharp corners,
  boxrule=1pt,
  left=5pt, right=5pt, top=2pt, bottom=2pt,
  parbox=false,     % Allows normal paragraph indentation
  breakable,         % Enables the box to span pages
  % before upper={\noindent},   % Removes top border after page break
  % after lower={\noindent},     % Removes bottom border at page break
  % lower separated=false,
  title=\textit{\relation} #1,
}
\long\def\preprintcontent#1{\ifpreprint #1 \fi}
\long\def\nonpreprintcontent#1{\ifpreprint\else #1 \fi}
\newcolumntype{C}[1]{>{\centering\arraybackslash}m{#1}}
\newcommand{\iter}[3]{%
  #1% The base (a1)
  \ifx\\#2\\% Check if the second argument is empty
  \else
    _{#2}% Add subscript if provided
  \fi
  \ifx\\#3\\% Check if the third argument is empty
  \else
    ^{#3}% Add superscript if provided
  \fi
}
\newcommand{\iterd}[2]{%
  \iter{d}{#1}{#2}%
}
\newcommand{\iterw}[2]{%
  \iter{w}{#1}{#2}%
}
\newcommand{\iterv}[2]{%
  \iter{v}{#1}{#2}%
}
\newcommand{\iterz}[2]{%
  \iter{z}{#1}{#2}%
}
\newcommand{\iterS}[2]{%
  \iter{S}{#1}{#2}%
}
\newcommand{\wstar}{\iterw{\star}{}}
\newcommand{\B}{\mathcal B}
\newcommand{\I}{\mathcal I}
\newcommand{\D}{\mathcal D}
\newcommand{\E}{\mathbb E}
\newcommand{\R}{\mathbb R}
\renewcommand{\S}{\mathcal S}
\newcommand{\Ec}{\mathcal E}
\newcommand{\Hc}{\mathcal H}
\newcommand{\Lfull}{\mathcal L_{\mathrm{full}}}
\newcommand{\Lind}{\mathcal L_{\mathrm{ind}}}
\newcommand{\Pc}{\mathcal P}
\newcommand{\blksz}{b}
\newcommand{\blkszstr}{blocksize}
\newcommand{\Blkszstr}{Blocksize}
\newcommand{\bg}{b_g}
\newcommand{\lamunsc}{\lambda_{\mathrm{unsc}}}
\NewDocumentCommand{\arls}{o o}{%
  \ensuremath{%
    \mathrm{ARLS}%
    \IfValueT{#2}{_{#2}}%
    \IfValueT{#1}{^{#1}}%
  }%
}
\newcommand{\rls}{RLS}
\newcommand{\Ib}{I_\B}
\newcommand{\lra}[2]{\lfloor #1 \rfloor_{#2}}
\newcommand{\lmin}{\lambda_{\textup{min}}}
\newcommand{\lmax}{\lambda_{\textup{max}}}
\renewcommand{\null}{\textup{null}}
\newcommand{\psd}{\mathbb S_{+}}
\newcommand{\pd}{\mathbb S_{++}}
\newcommand{\Kbb}{K_{\B \B}}
\newcommand{\Knys}{\hat K_{\B\B}}
\newcommand{\deff}[1]{d^{#1}}
\newcommand{\dmof}[1]{d_{\textup{max}}^{#1}}
\newcommand{\tailcond}[2]{\bar{\kappa}_{#1:#2}}
\newcommand{\aproj}{\hat{\Pi}}
\newcommand{\aprojB}{\aproj_{\B, \rho}}
\newcommand{\proj}{\Pi}
\newcommand{\projB}{\Pi_\B}
\newcommand{\Klambd}{K_{\lambda}}
\newcommand{\pinv}[1]{#1^{+}}
\DeclareMathOperator{\Tr}{Tr}
\DeclareMathOperator{\diag}{diag}
\newcommand{\cmark}{\textcolor{blue}{\ding{51}}}%
\newcommand{\xmark}{\textcolor{purple}{\ding{55}}}%
\newcommand{\derezisnki}{Derezi\'nski}
\newcommand{\michal}{Micha\l{}}
\newcommand{\stanford}{Stanford University}
\newcommand{\umich}{University of Michigan, Ann Arbor}
\newcommand{\libsvm}{LIBSVM}
\newcommand{\openml}{OpenML}
\newcommand{\sgdml}{sGDML}
\newcommand{\torchv}{torchvision}
\newcommand{\relation}{Relation to our work.}
\newcommand{\mtrn}{Mat\'{e}rn}
\newcommand{\minimz}{\textup{minimize}}
\newcommand{\dpp}{\mathrm{DPP}}
\newcommand{\bdpp}{\B_{\dpp}}
\newcommand{\tbdpp}{{\tilde \B}_{\dpp}}
\newcommand{\hdpp}[1]{#1\text{-}\dpp}
\newcommand{\prfn}[1]{\Pr(#1)}
\begin{document}

\title{Have ASkotch: A Neat Solution for Large-scale Kernel Ridge Regression}

\author{\name Pratik Rathore \email pratikr@stanford.edu \\
       \addr Department of Electrical Engineering\\
       \stanford{}\\
       \AND
       \name Zachary Frangella \email zachary.frangella@granica.ai \\
       \addr Granica\\
       \AND
       \name Jiaming Yang \email jiamyang@umich.edu \\
       \addr Department of Electrical Engineering and Computer Science\\
       \umich{}\\
       \AND
       \name \michal{} \derezisnki{} \email derezin@umich.edu \\
       \addr Department of Electrical Engineering and Computer Science\\
       \umich{}\\
       \AND
       \name Madeleine Udell \email udell@stanford.edu \\
       \addr Department of Management Science \& Engineering\\
       \stanford{}\\
       }

\editor{}

\maketitle

\begin{abstract}%   <- trailing '%' for backward compatibility of .sty file
  Kernel ridge regression (KRR) is a fundamental computational tool, appearing in problems that range from computational chemistry to health analytics, with a particular interest due to its starring role in Gaussian process regression.
  However, full KRR solvers are challenging to scale to large datasets: both direct (i.e., \chol{} decomposition) and iterative methods (i.e., \pcg{}) incur prohibitive computational and storage costs.
  The standard approach to scale KRR to large datasets chooses a set of inducing points and solves an approximate version of the problem, inducing points KRR. 
  However, the resulting solution tends to have worse predictive performance than the full KRR solution.
  In this work, we introduce a new solver, \asko{}, for full KRR that provides better solutions faster than state-of-the-art solvers for full and inducing points KRR. 
  \asko{} is a scalable, accelerated, iterative method for full KRR that provably obtains linear convergence.
  Under appropriate conditions, we show that \asko{} obtains condition-number-free linear convergence.
  This convergence analysis rests on the theory of ridge leverage scores and determinantal point processes.
  \asko{} outperforms state-of-the-art KRR solvers on a testbed of 23 large-scale KRR regression and classification tasks derived from a wide range of application domains, demonstrating the superiority of full KRR over inducing points KRR.
  Our work opens up the possibility of as-yet-unimagined applications of full KRR across a number of disciplines.
\end{abstract}

\begin{keywords}
  kernel ridge regression, sketch-and-project methods, Nesterov acceleration, \nys{} approximation, ridge leverage scores, determinantal point processes
\end{keywords}

\section{Introduction}
Kernel ridge regression (KRR) is one of the most popular methods in machine learning, with important applications in computational chemistry \citep{stuke2019chemical,blücher2023reconstructing,parkinson2023linear}, healthcare \citep{cheng2020sparse,wu2021brain,townes2023nonnegative}, and, more recently, scientific machine learning \citep{raissi2017machine,meanti2024estimating,batlle2024kernel}. 
Given a kernel function $k(x, x')$ and training set $\{(x_i,y_i)\}^{n}_{i=1}$, \textit{full KRR} seeks the function $f$ in a reproducing kernel Hilbert space $\Hc$ that satisfies
\begin{equation}
\label{eq:rkhs_regress}
\underset{{f\in \Hc}}{\minimz}~\frac{1}{2}\sum_{i=1}^{n}\left(f(x_i)-y_i\right)^2+\frac{\lambda}{2} \|f\|_\Hc^2.
\end{equation}
The celebrated representer theorem \citep{kimeldorf1970correspondence, scholkopf2002learning} says that the solution of \eqref{eq:rkhs_regress} lies in the subspace 
\[
\Hc_{n} = \left \{ f \in \Hc: f(x) = \sum_{j=1}^{n}w^{(j)} k(x,x_j),~ \text{where}~ w^{(j)} \in \R \right \}.
\]
With this reduction, the infinite-dimensional optimization problem in \eqref{eq:rkhs_regress} collapses to a finite-dimensional, convex, least-squares problem:
\begin{equation}
\label{eq:primal_krr}
    \underset{{w \in \R^n}}{\minimz}~ \Lfull(w) \coloneqq \frac{1}{2} \| K w-y\|^2 + \frac{\lambda}{2} \| w \|_K^2.
\end{equation}
Here $K$ is a $n \times n$ kernel matrix whose entries are given by $K_{ij} = k(x_i,x_j)$. 
The optimal weights $\wstar$ in \eqref{eq:primal_krr} are the solution of the linear system
\begin{equation}
\label{eq:krr_linsys}
    \Klambd \wstar = y,
\end{equation}
where $\Klambd \coloneqq K + \lambda I$.

Despite its popularity, full KRR is challenging to scale to large datasets: the state-of-the-art methods for solving \eqref{eq:krr_linsys} have costs that are superlinear in $n$.
Direct methods (e.g., \chol{} decomposition) have $\bigO(n^3)$ computational complexity and $\bigO(n^2)$ storage complexity.
Therefore, when $n \gtrsim 10^4$, \chol{} decomposition becomes unsuitable for solving \eqref{eq:krr_linsys}.
Iterative methods such as preconditioned conjugate gradient (PCG) have per-iteration complexity $\bigO(n^2)$.
In addition to expensive iterations, most PCG methods employ low-rank preconditioners, which require $\bigO(nr)$ storage, where $r$ is a rank parameter which needs to be sufficiently large to ensure effective preconditioning.
Therefore, when $n \gtrsim 10^6$, PCG is either (i) too slow or (ii) requires too much memory to solve \eqref{eq:krr_linsys}.

The standard approach for addressing the scalability issues of full KRR is to select a set of $m$ \textit{inducing points} from the training set, and solve the \textit{inducing points KRR} problem \citep{scholkopf2002learning,rudi2015less}.
The inducing points KRR objective is
\begin{equation}
\label{eq:inducing_krr}
\underset{{w} \in \R^{m}}{\textup{minimize}}~ \Lind(w) \coloneqq \frac{1}{2}\|K_{nm}w-y\|^2+\frac{\lambda}{2}\|w\|^2_{K_{mm}},
\end{equation}
where $K_{nm}$ is the $n\times m$ matrix of the columns of $K$ identified by the inducing points, and $K_{mm}$ is the corresponding principal submatrix of $K$. The optimal weights $\wstar$ in \eqref{eq:inducing_krr} are the solution of the linear system
\begin{equation}
\label{eq:ikrr_lin_sys}
(K_{nm}^{T}K_{nm}+\lambda K_{mm}) \wstar = K_{nm}^{T}y.
\end{equation}
The linear system \eqref{eq:ikrr_lin_sys} accesses $K$ only through $K_{nm}$ and $K_{mm}$, 
reducing the storage and per-iteration computation needed to solve the KRR problem.
As a result, \chol{} decomposition has $\bigO(nm^2 + m^3)$ computational complexity and requires $\bigO(m^2)$ storage.
Consequently, when $m \gtrsim 10^5$, \chol{} is unsuitable for solving \eqref{eq:ikrr_lin_sys}.
State-of-the-art \pcg{} methods for inducing points KRR, like \fal{} \citep{rudi2017falkon,meanti2020kernel} and KRILL \citep{diaz2023robust}, 
require $\bigO(m^3)$ time to construct the preconditioner and $\bigO(m^2)$ storage.
Thus, when $m \gtrsim 10^5$, these \pcg{} methods are also unsuitable for solving \eqref{eq:ikrr_lin_sys}.

Alas, scale matters: previous work has established that increasing the number of inducing points leads to better predictive performance \citep{frangella2023randomized,diaz2023robust,abedsoltan2023toward}.
Moreover, full KRR often allows better predictive performance than inducing points KRR \citep{wang2019exact,frangella2023randomized,diaz2023robust}.
Therefore, a new, more scalable approach to KRR is needed for better prediction on large-scale tasks.

\begin{figure}[t]
    \centering
    \nonpreprintcontent{\includegraphics[width=0.8\linewidth]{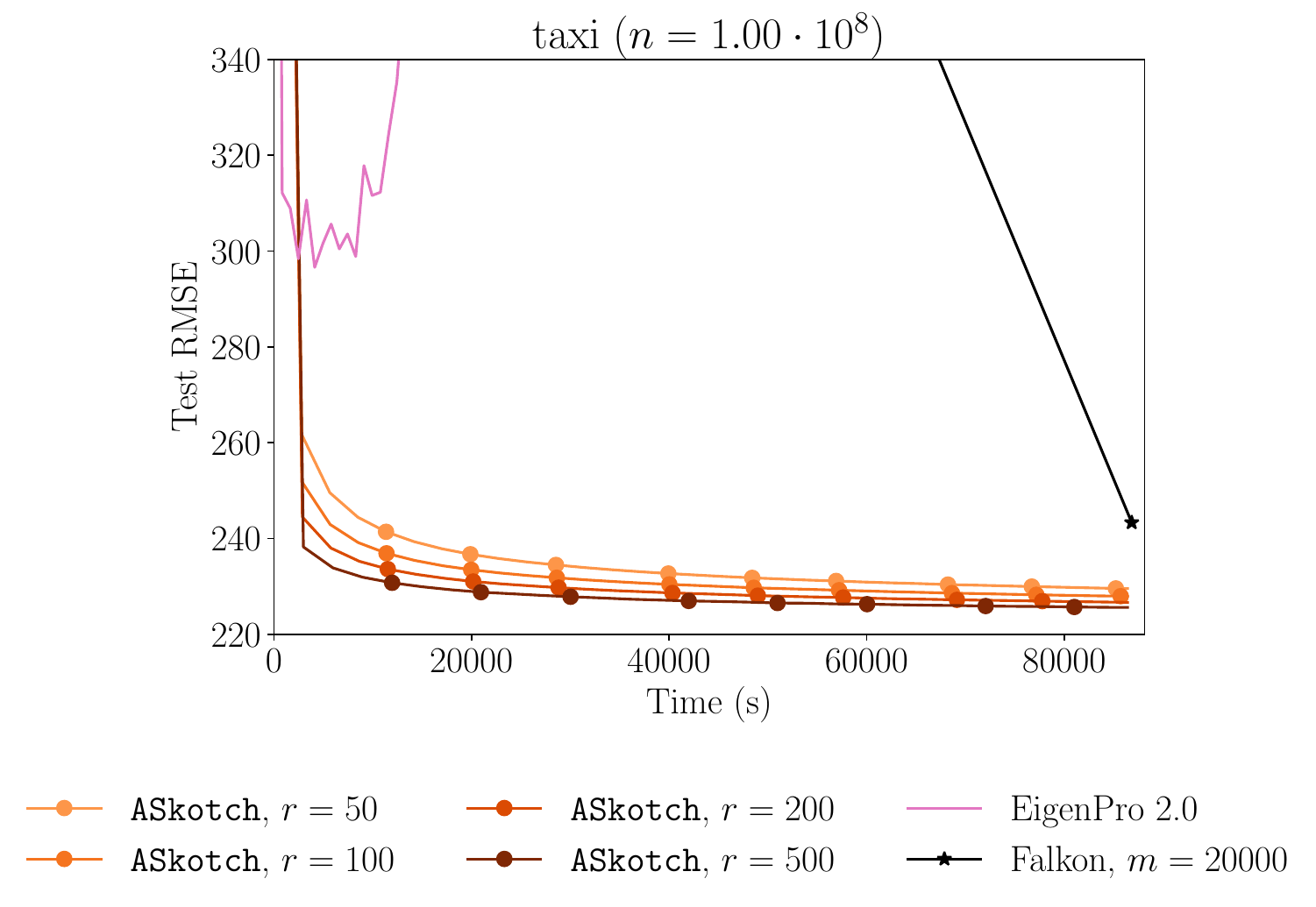}}%
    \preprintcontent{\includegraphics[width=\linewidth]{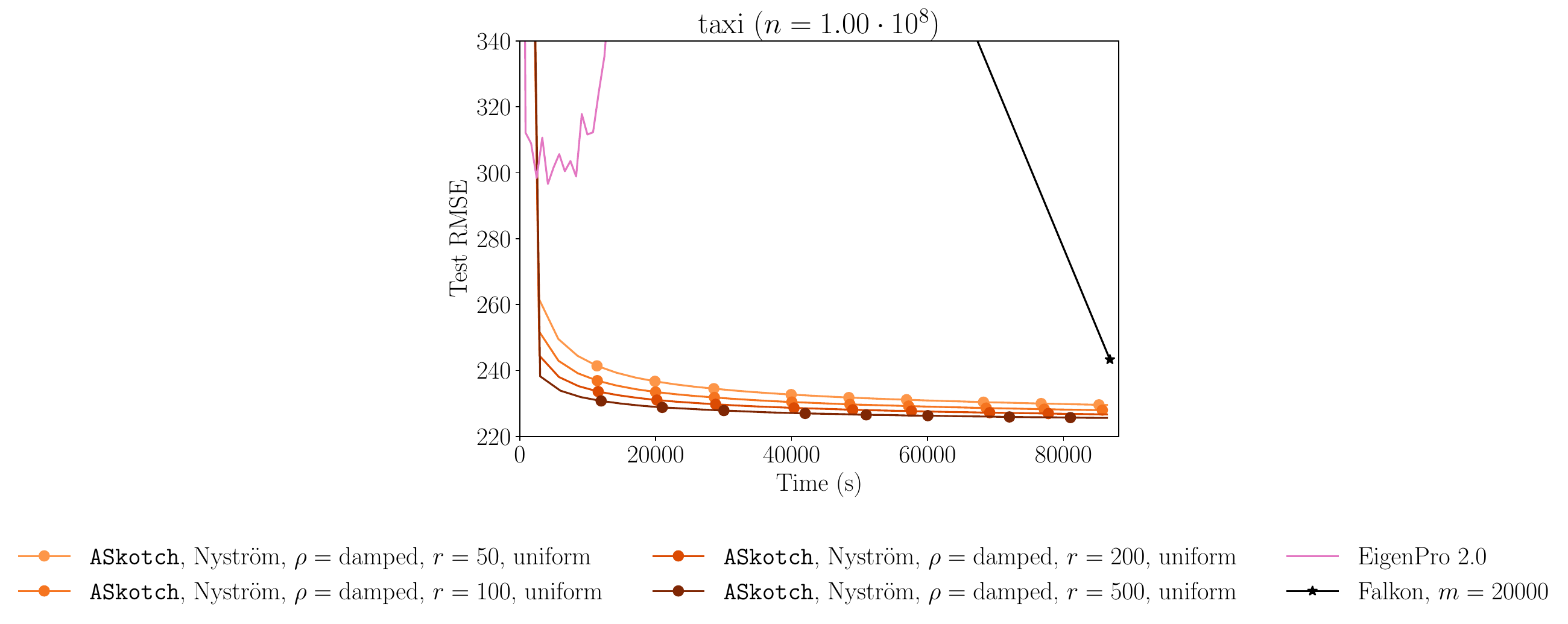}}%
    \caption{
    Full KRR is advantagenous over inducing points KRR, even for large problems.
    Our method \asko{}, run with its default hyperparameters, outperforms the state-of-the-art for both full and inducing points KRR on a subsample of the taxi dataset.
    \fal{} is limited to $m = 2 \cdot 10^4$ inducing points due to memory constraints.
    State-of-the-art Nystr{\"o}m \pcg{} methods \citep{frangella2023randomized} Gaussian Nystr{\"o}m \citep{frangella2023randomized} and Randomly Pivoted Cholesky \citep{diaz2023robust,epperly2025embrace}, each with a rank $r=50$ preconditioner, fail to complete a single iteration.
    \epro{} 2.0 and \epro{} 3.0 (not shown) diverge on their default hyperparameters.
    All methods have a 24-hour time limit and are run on a single 48 GB NVIDIA RTX A6000 GPU.}
    \label{fig:taxi_intro}
\end{figure}

In this work, we turn conventional wisdom around, demonstrating that better, faster solutions
can be achieved by targeting the full KRR problem rather than the inducing points KRR approximation.
We do so by developing the iterative method \asko{} to solve full KRR.
\asko{} has (i) per-iteration costs that are linear in $n$ and (ii) storage costs that are linear in the preconditioner rank $r$ but are independent of $n$.
These properties allow \asko{} to scale to larger datasets than existing state-of-the-art methods for solving \eqref{eq:krr_linsys}.
\asko{} possesses strong theoretical guarantees and easy-to-set hyperparameters that deliver reliable performance.
Furthermore, \asko{} exploits parallelism on modern hardware accelerators like GPUs, which is essential for scaling to large KRR problems.

% We perform a comprehensive evaluation between full KRR and inducing points KRR, which shows full KRR exhibits superior performance to inducing points KRR on generalization error across a variety of learning tasks.
% As part of this evaluation, we propose a stochastic gradient method, \mimo{}, which can scale to $m \gtrsim 10^6$ inducing points, which serves as a foil to \sko{} and \asko{}.
% \mimo{} is the first stochastic gradient method for inducing points KRR that obtains linear convergence with a constant stepsize.
% However, despite being theoretically principled and scalable, \mimo{} cannot outperform \sko{} and \asko{} on full KRR.

\cref{fig:taxi_intro} demonstrates the core messages of our work---full KRR can scale to massive datasets and outperform inducing points KRR with respect to predictive performance.
% \mimo{} scales to $m = 10^6$ inducing points, but it obtains larger generalization error than \asko{} on a dataset with $n = 10^8$ samples.
On a dataset with $n  = 10^8$ samples, \asko{} scales better than the standard approach to full KRR: \pcg{} is unable to complete a single iteration within a 24-hour time limit.
Moreover, \asko{} is more reliable than the KRR solvers \epro{} 2.0 and \epro{} 3.0 \citep{ma2019kernel, abedsoltan2023toward} when all three methods are run with their default hyperparameters: \epro{} 2.0 and \epro{} 3.0 diverge, while \asko{} reaches a root mean square error (RMSE) of approximately 230.
Finally, \asko{} outperforms \fal{} on RMSE.
While \cite{meanti2020kernel} scales an optimized version of \fal{} to the full taxi dataset $(n \approx 10^9)$ with $m = 10^5$ inducing points, \asko{} is solving a problem of size $10^8 \cdot 10^8 = 10^{16}$, which is two orders of magnitude larger than the problem solved by \fal{} ($10^5 \cdot 10^9 = 10^{14}$).
For additional discussion, please see \cref{subsec:ikrr_related}.

\cref{tab:algo_comparison} compares the capabilities of \asko{} to state-of-the-art methods for KRR.
\asko{} is the only method that solves full KRR while having a moderate memory requirement, reliable default hyperparameters, and theoretical convergence guarantees.
In the \href{https://github.com/EigenPro}{EigenPro GitHub repos}, there is no option for the user to set hyperparameters such as the learning rate or gradient batchsize. 
Since the \epro{} methods diverge on taxi and several other datasets in this work, we conclude that the default hyperparameters for \epro{} can be unreliable.

\begin{table}
\centering
    \begin{tabular}{C{2.5cm}C{2cm}C{3cm}C{3cm}C{2cm}}
        Algorithm & Full KRR? & Memory-efficient? & Reliable defaults? & Converges? \\ 
        \hline
        \asko{} & \cmark & \cmark & \cmark & \cmark \\ 
        \hline \epro{} 2.0 & \cmark & \cmark & \xmark & \cmark \\
        \hline \epro{} 3.0 & \xmark & \cmark & \xmark & \xmark \\
        \hline \pcg{} & \cmark & \xmark & \cmark & \cmark \\
        \hline \fal{} & \xmark & \xmark & \cmark & \cmark
    \end{tabular}
    \caption{ 
    A comparison of the capabilities of \asko{} and state-of-the-art methods for KRR.
    ``Reliable defaults'' indicates whether the method comes with default hyperparameters that work well in practice.
    ``Converges'' indicates whether the method has a rigorous linear convergence guarantee.
    % \textcolor{blue}{\textbf{Ticks}} indicate the technique is used; \textcolor{purple}{\textbf{crosses}} indicates the technique is not used.
    }
    \label{tab:algo_comparison}
\end{table}

\begin{table}[H]
    \centering
    \scriptsize
    \begin{tabular}{C{4cm}C{4cm}C{3cm}C{2.5cm}}
        Algorithm & Per-iteration complexity & Storage complexity & Total computational complexity  \\
        % \hline
        % \chol{} & \na & $\bigO(n^2)$ & $\bigO(n^3)$ \\
        \hline
        \pcg{} & $\bigO(n^2)$ & $\bigO(\deff{\lambda}(K)n)$ & $\tilde{\bigO}(n^2)$ \\
        \hline
        \epro{} 2.0  & $\bigO(n  \bg +rs)$ & $\bigO(rs)$ & $\bigOt \left( \frac{\lambda_r(K)}{\lmin(K)} n^2 \right)$ \\
        \hline
        \asko{} & $\bigO(nb)$ & $\bigO\left(\deff{\lambda}(K)b\right)$ & $\bigOt\left(n^2\right)$  \\
        \hline
    \end{tabular}
    \caption{Iteration and storage complexities for state-of-the-art full KRR methods.
    Storage complexity refers to memory that is explicitly used by the algorithm (this excludes matrix-vector product oracles).
    Total computational complexity refers to the total cost required by the algorithm to compute an $\epsilon$-approximate solution, omitting logarithmic dependencies upon $\epsilon$.
    Here, $b_g \leq n$ is the stochastic gradient batchsize used by \epro{} 2.0, $s \leq n$ is the sample size used to construct the rank-$r$ preconditioner in \epro{} 2.0, $b \leq n$ is the \blkszstr{} in \asko{}, and
    $\deff{\lambda}(K)$ is the $\lambda$-effective dimension of $K$ (\cref{def:rls}). 
    \asko{} has the same total cost as PCG but has lower storage and per-iteration costs.}
\label{tab:complexity_comparison}
\end{table}

\subsection{Contributions}
We make significant algorithmic and theoretical contributions to the literature on KRR solvers and sketch-and-project methods.

\begin{enumerate}
    \item We develop the iterative method \asko{} for solving full KRR. 
    The key to our approach lies in carefully combining two methodologies: sketch-and-project iterative solvers \citep{gower2015randomized} and \nys{} matrix
    approximations \citep{williams2000using,tropp2017fixed}.
    We incorporate the \nys{} approximation into the sketch-and-project update in a way that preserves convergence, and combine this with Nesterov acceleration to solve \eqref{eq:krr_linsys} with reduced time and memory requirements.
    \item We establish fine-grained convergence guarantees for \asko{} based on ridge leverage score sampling, determinantal point processes, and careful spectral approximation arguments.
    A detailed overview of our theoretical contributions is presented in \cref{subsec:our_techniques}. 
    \item We perform an extensive empirical evaluation comparing \asko{} to the existing state-of-the-art for full and inducing points KRR.
    We show that \asko{} outperforms  \epro{} 2.0, \epro{} 3.0, \pcg{}, and \fal{} on predictive performance for 23 large-scale KRR problems (typically, $n \geq 10^5$). 
    Moreover, we show that full KRR consistently obtains equivalent or better predictive performance than inducing points KRR, establishing the superiority of full KRR when combined with \asko{}.
    We provide \href{\codeurl}{open-source code} in PyTorch to make it easy to reproduce our experiments and run \asko{} on new problems on CPU and GPU hardware. 
\end{enumerate}

\subsection{Our Techniques}
\label{subsec:our_techniques}
Our main theoretical contributions lie in the convergence analysis of \asko{}.
Establishing convergence of \asko{} requires overcoming two fundamental challenges, discussed below.

\textbf{Technical Contribution 1:} \textit{First-moment analysis of \sap{} projector via ARLS-to-DPP reduction}. 
Our first contribution overcomes a fundamental limitation in the existing theory of sketch-and-project (\sap{}) solvers.
% Provable convergence rates for this family of methods depend on the smallest eigenvalue of an expected projection matrix that arises from a randomized sketch of the input matrix. 
The convergence of \sap{} methods is controlled by a random projection matrix that we call the \textit{\sap{} projector}, with the convergence rate being determined
by the smallest eigenvalue of the expected \sap{} projector.
Existing approaches for analyzing the smallest eigenvalue of the expected \sap{} projector \citep[e.g.,][]{mutny2020convergence,derezinski2024solving, derezinski2024fine} require either (1) an expensive, impractical sub-sampling scheme known as a Determinantal Point Process (DPP), or (2) preprocessing the input matrix with the Randomized Hadamard Transform, which is prohibitive for large problems due to its $\bigO(n^2)$ storage cost.

To address this challenge, we adopt a different sampling scheme called approximate ridge leverage score (\arls{}{}) sampling \citep{alaoui2015fast}, which is less expensive (both in compute and storage) than either of these approaches. 
Our convergence analysis develops a reduction from \arls{}{}s to DPPs (\cref{lem:rls_sample}), by relating the former to the marginals of a DPP through a novel measure concentration argument. 
This reduction allows us to establish a non-trivial lower bound on the smallest eigenvalue of the expected \sap{} projector when using \arls{}{} sampling (\cref{lem:convergence_rls}).
We believe this result is of independent interest and could be used to improve the convergence analysis of other iterative solvers that use the sketch-and-project paradigm.

\textbf{Technical Contribution 2:} \textit{First-moment analysis of \nys{} projector}.
\asko{} replaces the linear system solve in \sap{} with an approximate solution based on \nys{} sketch-and-solve \citep{bach2013sharp, alaoui2015fast, frangella2023randomized}.
In contrast, prior sketch-and-project solvers \citep{gower2015randomized,gower2018accelerated,derezinski2024solving, derezinski2024fine} assume the \sap{} linear system is either solved exactly or to sufficient accuracy at each iteration using an iterative solver.
The use of Nystr{\"o}m sketch-and-solve complicates the analysis in two ways:
1) the convergence rate is no longer controlled by an expected projection matrix;
2) the \nys{} solution is generally not close to the exact solution \citep{frangella2023randomized}.
The first issue prevents us from directly applying the classical convergence analysis of \sap{}, which is deeply reliant on the \sap{} projector actually being a projection matrix.
The second issue means we cannot appeal to existing inexact \sap{} theory, which requires the approximate solution to be close to the exact solution.

Remarkably, although the matrix that controls convergence of \asko{} is not the expectation of a projection matrix, it is the expectation of a matrix that is \emph{nearly} a projection matrix.
We refer to this matrix as the \textit{\nys{} projector}.
Using careful spectral approximation arguments, we show that the expected \nys{} projector is lower bounded in the Loewner ordering by the expected \sap{} projector multiplied by a factor that captures the price of replacing the \sap{} linear system solve with \nys{} sketch-and-solve.
Thus, we are able to transfer the first-moment projector analysis from Theoretical Contribution 1 (which is for exact \sap{}) to \asko{}, which yields a lower bound on the smallest eigenvalue of the expected \nys{} projector (\cref{thm:appx_proj_small_eig}). 
This lower bound on the smallest eigenvalue of the expected \nys{} projector lets us establish a fine-grained convergence rate for \asko{} (\cref{thm:mast_conv_thm}).
Using this theory, we prove that when the effective dimension of the kernel matrix is not too large, \asko{} enjoys fast condition-number-free linear convergence, for a near-optimal $\bigOt(n^2\log \left( \frac{1}{\epsilon} \right))$ computational complexity to reach an $\epsilon$-approximate full KRR solution (\cref{corr:log_lin_conv}).
This convergence rate is comparable to that of \pcg{} while improving over \epro{} 2.0 (\cref{tab:complexity_comparison}).

\subsection{Roadmap}
% \cref{sec:prelims} introduces the techniques we use to formulate and analyze our algorithms, such as sketch-and-project, \nys{} approximation, automatic computation of stepsizes, ridge leverage scores, and determinantal point processes.
% \cref{sec:algorithms} presents \sko{} and \asko{}, along with default hyperparameters for these methods.
\cref{sec:algorithms} introduces \asko{} (and its non-accelerated variant, \sko{}), and describes the techniques (sketch-and-project, \nys{} approximation, automatic computation of stepsizes) that are used in this algorithm.
\cref{sec:related_work} reviews existing methods to solve full and inducing points KRR and places \asko{} in the context of these works.
\cref{sec:asko_key_quantities} introduces several key quantities used in the convergence analysis of \asko{}.
\cref{sec:nys_proj_analysis} develops lower bounds on the quantities introduced in \cref{sec:asko_key_quantities} by exploiting connections between ARLS sampling and DPPs.
\cref{sec:asko_cvg} uses the lower bounds from \cref{sec:nys_proj_analysis} to establish a fine-grained convergence rate for \asko{} on full KRR.
\cref{sec:experiments} demonstrates the superior performance of \asko{} over state-of-the-art full and inducing-points KRR solvers.

\subsection{Notation}
We use $[n]$ to denote the set $\{1, 2, \ldots, n\}$. 
Throughout the paper, $\B$ is a subset of $[n]$ that is sampled randomly according to some distribution.
$\Ib$ concatenates the rows of $I_n$ indexed by $\B$. 
Given a square matrix $A$, $A_{\B\B}$ denotes the principal submatrix of $A$ indexed by $\B$.

% $L_P$ is the preconditioned smoothness constant and $L_{P_\B}$ is the preconditioned smoothness constant of a block.

$\Tr(\cdot)$ is the trace of a matrix, $\det(\cdot)$ is the determinant of a matrix, and $\null(\cdot)$ is the null space of a matrix. 
$\pd^n$ ($\psd^n$) denotes the convex cone of pd (psd) matrices in $\R^{n \times n}$.
The symbol $\preceq$ denotes the Loewner order on the convex cone of psd matrices: $A \preceq B$ means $B - A$ is psd.
For a square matrix $A$ and constant $\lambda$, we define $A_\lambda \coloneqq A + \lambda I$.
We use $\| \cdot \|$ to denote norms; $\| \cdot \|$ with no subscript is the Euclidean norm.
For a matrix $A \in \psd^n$, its eigenvalues in decreasing order are $\lambda_1(A) \geq \lambda_2(A) \geq \ldots \geq \lambda_n(A)$. 
$\pinv{A}$ is the Moore-Penrose pseudoinverse of a matrix $A$.
If $A \in \psd^{n}$, the smallest non-zero eigenvalue is denoted by $\lmin^{+}(A)$.
For a positive integer $b$ and matrix $A$, $\lra{A}{b}$ is the best rank-$b$ approximation of $A$ with respect to the spectral norm.

The symbols $\Omega(\cdot)$, $\Theta(\cdot)$, and $\bigO(\cdot)$ are the standard quantities from asymptotic complexity analysis.
$\tilde \Omega(\cdot)$, $\tilde \Theta(\cdot)$, and $\bigOt(\cdot)$ are used to hide poly-log factors.
\section{Algorithms}
\label{sec:algorithms}
We propose \asko{} (\textbf{A}ccelerated \textbf{s}calable \textbf{k}ernel \textbf{o}ptimization and \textbf{t}raining with \textbf{c}oordinate sketch-and-project and approximate \textbf{H}essians, \cref{alg:askotch}) to solve full KRR.
We introduce the \asko{} algorithm in \cref{subsec:asko}, and we follow this by describing the key building blocks of the method, such as approximate sketch-and-project (\cref{subsec:sap_approx}), randomized \nys{} approximation (\cref{subsec:nys_appx}), and automatic computation of the stepsize  (\cref{subsec:auto_stepsize}).
\asko{} comes with default hyperparameter settings (\cref{subsec:default_hyperparams}), which reduces the effort required by practitioners to tune the method; we use these defaults in our experiments (\cref{sec:experiments}).

\subsection{\asko{}}
\label{subsec:asko}

\begin{algorithm}[]
    \caption{\asko{}}
    \label{alg:askotch}
    \scriptsize
    \begin{algorithmic}
        \Require \blkszstr{} $\blksz$, coordinate sampling distribution $\Pc$, acceleration parameters $\hat \mu, \hat \nu$, acceleration flag \textrm{use\_accel}, rank $r$, damping $\rho$, kernel oracle $K$, targets $y$, ridge parameter $\lambda$, number of iterations $N$, initialization $\iterw{0}{}$
        \State
        \State \textbf{\# Compute acceleration parameters}
        \State $\beta \gets 1 - \sqrt{\hat \mu / \hat \nu}$
        \State $\gamma \gets 1 / \sqrt{\hat \mu \hat \nu}$
        \State $\alpha \gets 1 / (1 + \gamma \hat \nu)$
        \State $\iterv{0}{} \gets \iterw{0}{}, \iterz{0}{} \gets \iterw{0}{}$
        \State 
        \For{$i = 0, 1, \ldots, N - 1$}

            \State Sample a block of coordinates $\B \subseteq [n]$ according to $\mathcal P$
            \hfill
            \Comment{$|\B| = \blksz$}
            \State $\Knys \gets \texttt{\nys}( \Kbb, r)$
             \hfill \Comment{Low-rank approximation of $\Kbb$}

            \State $L_{P_\B} \gets \texttt{get\_L}(\Kbb + \lambda I, \hat{K}_{\B \B}, \rho)$
             \hfill \Comment{Via powering}
            \State
            \State \textbf{\# Update iterates}
            \If{\textrm{use\_accel}}
                \State $\iterd{i}{} \gets \Ib^T (\Knys + \rho I)^{-1} ((\Klambd)_{\B :} \iterz{i}{} - y_\B)$  \Comment{Approximate projection; costs $\bigO(nb + rb)$}
                \State $\iterw{i+1}{} \gets \iterz{i}{} - (L_{P_\B})^{-1} \iterd{i}{}$ \Comment{Costs $\bigO( \blksz)$}
                \State $\iterv{i+1}{} \gets \beta \iterv{i}{} + (1 - \beta) \iterz{i}{} - \gamma (L_{P_\B})^{-1} \iterd{i}{}$
                 \hfill \Comment{Costs $\bigO(n)$}
                \State $\iterz{i+1}{} \gets \alpha \iterv{i}{} + (1 - \alpha) \iterw{i+1}{}$
                 \hfill \Comment{Costs $\bigO(n)$}
                \Else 
                \State \textbf{\# Non-accelerated version (\sko{})}
                \State $w_{i + 1} \gets w_i - (L_{P_\B})^{-1} \iterd{i}{}$ \Comment{Costs $\bigO( \blksz)$}
             \EndIf
        \EndFor
        \State \Return $\iterw{N}{}$ \Comment{Approximate solution to $\Klambd w = y$}
    \end{algorithmic}
\end{algorithm}

At each iteration, \asko{} randomly samples a block $\B$ containing $\blksz$ distinct coordinates (we omit the dependence of $\B$ on $i$ to avoid notational clutter).
\asko{} then computes a rank-$r$ \nys{} approximation of $\Kbb$ and computes a
\textit{preconditioned smoothness constant} for the block, $L_{P_\B}$, which is used to set the stepsize.
Next, \asko{} computes the approximate projection step $\iterd{i}{}$.
Finally, \asko{} combines the approximate projection step $\iterd{i}{}$ with Nesterov acceleration \citep{nesterov2018lectures} to update its estimate of the solution.

\asko{} is compatible with any coordinate sampling distribution $\Pc$.
Our implementation uses two approaches inspired by popular sampling distributions in the kernel literature: uniform sampling and ARLS sampling.
The uniform sampling distribution places an equal weight of $1/n$ on each coordinate, while ARLS sampling weights coordinates with probabilities proportional to their approximate RLSs (\cref{def:rls}). 
In our implementation, we approximate the RLSs using BLESS \citep{rudi2018fast,gautier2019dppy}. 

\asko{} has a flag indicating whether or not to use Nesterov acceleration: when this flag is False, acceleration is disabled.
Throughout the rest of the paper, we will use the name \sko{} to refer to the non-accelerated version of \asko{}.
% While \asko{} is the primary method that we use in our experiments (\cref{sec:experiments}), our convergence analysis (\cref{sec:asko_cvg}) focuses on \sko{} for simplicity.

\subsection{Approximate Sketch-and-Project}
\label{subsec:sap_approx}
Sketch-and-project (\sap{}) \citep{gower2015randomized,richtarik2020stochastic} is a randomized, iterative framework to solve consistent linear systems.
\sap{} for the full KRR linear system \eqref{eq:krr_linsys} takes in two parameters: (i) a fixed, pd matrix $Q \in \pd^n$ and (ii) a distribution $\D$ over matrices in $\R^{b \times n}$, where $b$ is a positive integer.
At iteration $i$, \sap{} draws a sketching matrix $\iter{S}{i}{} \stackrel{\mathrm{i.i.d.}}{\sim} \D$ and computes the next iterate by solving the optimization problem
\begin{equation}
    \label{eq:sap_opt_problem}
    \begin{array}{lll}
        \iterw{i+1}{} = & \arg\min\limits_{w} & \|w - \iterw{i}{}\|_Q^2 \\[1.5ex]
        & \mbox{subject to} & \iterS{i}{} \Klambd w = \iterS{i}{} y. \\ 
    \end{array}   
\end{equation}
In other words, \sap{} progresses by projecting the current iterate $\iterw{i}{}$ (with respect to the $Q$-norm) onto the solution space of a sketched version of \eqref{eq:krr_linsys}.
% Moreover, the \sap{} update in \eqref{eq:sap_opt_problem} can be expressed in closed form \citep{gower2015randomized}:
% \begin{equation}
%     \label{eq:sap_update}
%     \iterw{i+1}{} = \iterw{i}{} - Q^{-1} \Klambd \iterS{i}{}^T \pinv{(\iterS{i}{} \Klambd Q^{-1} \Klambd \iterS{i}{}^T)} (\iterS{i}{} \Klambd \iterw{i}{} - \iterS{i}{}y).
% \end{equation}
Importantly, when $Q = \Klambd$ and $\mathcal D$ is the uniform distribution over row selection matrices of size $b \times n$,
the \sap{} update has the following closed form: 
\begin{align}
       \iterw{i+1}{} &= \iterw{i}{} - Q^{-1} \Klambd \iterS{i}{}^T \pinv{(\iterS{i}{} \Klambd Q^{-1} \Klambd \iterS{i}{}^T)} (\iterS{i}{} \Klambd \iterw{i}{} - \iterS{i}{}y) \label{eq:sap_update} \\
       &= \iterw{i}{} - \Ib^T (\Kbb + \lambda I)^{-1}((\Klambd)_{\B:} \iterw{i}{} - y_{\B}). \label{eq:rand_newton_full_krr}
\end{align}
% For appropriate sampling distributions $\D$, \sap{} converges linearly to the solution of \eqref{eq:krr_linsys} \citep{gower2015randomized}.
% For particular choices of $Q$ and $\D$, the \sap{} update \eqref{eq:sap_update} reduces to randomized Kaczmarz, randomized coordinate descent, and randomized Newton methods for solving linear systems---all three of these methods obtain linear convergence \citep{gower2015randomized,ma2015convergence}.

Nesterov acceleration has been applied to a wide range of iterative optimization methods to obtain algorithms with faster convergence rates \citep{allenzhu2016even, liu2016accelerated,allenzhu2018katyusha,ye2020nesterov}.
\sap{} is no exception---\cite{gower2018accelerated} combines \sap{} with Nesterov acceleration and shows that Nesterov-accelerated \sap{} (\nsap{}) converges at least as fast as its non-accelerated counterpart.

In theory, \nsap{} improves over PCG when $b = o(n^{2/3})$.
However, a larger \blkszstr{} $b$ is often beneficial for \sap{} methods---for example, the convergence rate for the randomized Newton method enjoys a superlinear speedup as $b$ increases \citep{gower2015randomized}.
Using a direct method to solve the linear system in \eqref{eq:sap_update} requires $\bigO(b^3)$ computation, which becomes slow for $b \gtrsim 10^4$.
\cite{derezinski2024fine, derezinski2024solving} propose approximating the solution of this linear system propose inexactly solving this linear system
to a tolerance $\epsilon$ using PCG with a preconditioner $P$. 
Doing so requires $\bigO(b^2 \sqrt{\kappa_P} \log(1 / \epsilon))$ computation, where $\kappa_P$ is the preconditioned condition number.
However, for $b \gtrsim 10^4$, repeatedly applying PCG may be too slow to be practical.

\asko{} also approximates the \sap{} update, but it does so in a way that allows for a \blkszstr{} $b \gtrsim 10^4$.
The matrix $\Kbb + \lambda I$ in the \sap{} update \eqref{eq:rand_newton_full_krr} is replaced by a regularized rank-$r$ \nys{} approximation (\cref{subsec:nys_appx}), 
$\Knys + \rho I$, where $\rho > 0$.
This \nys{} approximation can be computed rapidly, and the resulting linear system can be solved in $\bigO(br)$ time.

The theoretical analysis of \sap{} uses the fact that $Q^{-1} \Klambd \iterS{i}{}^T \pinv{(\iterS{i}{} \Klambd Q^{-1} \Klambd \iterS{i}{}^T)} \iterS{i}{} \Klambd$ in \eqref{eq:sap_update} is a projection with respect to the $Q$-inner product \citep{gower2015randomized}. However, by replacing $\iterS{i}{} \Klambd Q^{-1} \Klambd \iterS{i}{}^T$ by a low-rank approximation, \asko{} loses this projection property.
Consequently, \asko{} requires a (dynamic) stepsize to converge, unlike \sap{}, which is guaranteed to converge with a constant stepsize of 1.
We provide a principled, automated method for computing this stepsize in \cref{subsec:auto_stepsize}.

\subsection{Randomized \nys{} Approximation}
\label{subsec:nys_appx}
The \nys{} approximation \citep{williams2000using,bach2013sharp, alaoui2015fast, tropp2017fixed} is a well-known technique for producing low-rank approximations to psd matrices.
Since kernel matrices have fast spectral decay (i.e., they have approximate low-rank structure) \citep{caponnetto2007optimal,bach2013sharp,tu2016large,ma2017diving, belkin2018approximation} we replace $\Kbb$ in \eqref{eq:rand_newton_full_krr} with a \nys{} approximation, which is easier to invert.

Given a symmetric psd matrix $M \in \psd^p$, the randomized \nys{} approximation of $M$ with respect to a random test matrix $\Omega \in \R^{p\times r}$ is
\begin{equation*}
    \hat M= (M\Omega) \pinv{\left(\Omega^{T}M\Omega\right)}(M\Omega)^{T}.
\end{equation*}
Common choices for $\Omega$ include standard Gaussian random matrices, randomized Hadamard transforms, and sparse sign embeddings \citep{tropp2017fixed}. 
The latter two test matrices reduce the computational cost of the sketch $M\Omega$.
Our theoretical analysis uses sparse sign embeddings due to their computational efficiency, but our implementation uses Gaussian random matrices for simplicity.
In the future, we will extend our implementation to use randomized Hadamard transforms and sparse sign embeddings. 

Our practical implementation, \algnys{} (\cref{alg:nystrom}), follows \citet[Algorithm 3]{tropp2017fixed}. 
\algnys{} takes in a psd matrix $M \in \psd^p$ and rank $r$ and outputs a low-rank approximation $\hat M = \hat U \diag(\hat \Lambda) \hat U^T$ to $M$ in $\bigO(p^2 r + p r^2)$ time\footnote{For sparse sign embeddings, this complexity is reduced to $\bigOt(p r + p r^2)$.}, where $\hat U \in \R^{p \times r}$ is an orthogonal matrix containing approximate top-$r$ eigenvectors of $M$ and $\hat \Lambda \in \R^r$ is a vector containing approximate top-$r$ eigenvalues of $M$.
Note that \algnys{} never forms $\hat M$ as a matrix; rather, it returns the factors $\hat U$ and $\hat \Lambda$.
For more details, please see \cref{subsec:nystrom}.

At each iteration, \asko{} computes a randomized \nys{} approximation of the block kernel $\Kbb$ in the \sap{} update \eqref{eq:rand_newton_full_krr}.
This approximation is always used in conjunction with a damping $\rho > 0$ to ensure positive definiteness.
For any vector $v \in \R^p$, $(\hat{M} + \rho I)^{-1} v$ can be computed in $\bigO(p r)$ time using the Woodbury formula, which lets \asko{} cheaply compute the approximate projection step $\iterd{i}{}$.

\subsection{Automatic Computation of the Stepsize}
\label{subsec:auto_stepsize}
The stepsize is often challenging to set in iterative optimization methods: if the stepsize is too large, \asko{} will diverge, while if the stepsize is too small, \asko{} will make little progress towards the solution.
In practice, tuning the stepsize can require an expensive hyperparameter search, which is inconvenient for practitioners.
We present a practical approach for automatically computing the stepsize in \asko{}.

Our idea is to use a preconditioned smoothness constant to set the stepsize in \asko{}. 
The preconditioned smoothness constant in \asko{} is defined as
\[
L_{P_\B} \coloneqq \lambda_1 \left( \left(\Knys + \rho I \right)^{-1/2} \left( \Kbb + \lambda I \right) \left( \Knys + \rho I \right)^{-1/2} \right).
\]
\asko{} uses randomized powering \citep{kuczynski1992estimating,martinsson2020randomized} to compute $L_{P_\B}$. 
The total cost of the procedure is $\bigOt(b^2)$.
In practice, \asko{} uses just 10 iterations of powering. 
Hence, the cost of computing $L_{P_\B}$ is dominated by the $\bigO(nb)$ cost of computing the search direction $d_i$ in \asko{}. 
\asko{} computes $L_{P_\B}$ using the subroutine \texttt{get\_L} (\cref{alg:get_L}).

In \cref{sec:asko_key_quantities,sec:asko_cvg}, we show that our approach for setting the stepsize guarantees that \sko{} and \asko{} obtain linear convergence.

\subsection{Default Hyperparameters}
\label{subsec:default_hyperparams}

We provide default recommendations for setting hyperparameters in \asko{}.
These recommendations are summarized in \cref{tab:asko_default_hyperparams}.

\begin{itemize}
    \item \Blkszstr{} $b$ and rank $r$: Choose as large as hardware allows.
    A good, general default is $b = n/100$ and $r=100$.
    \item Damping $\rho$: Set adaptively for each block $\B$; we use $\rho = \lambda + \lambda_r(\hat{K}_{\B \B})$.
    \item Sampling distribution $\Pc$: Use the uniform sampling distribution. 
    While we use ARLS sampling to prove fast convergence rates for \asko{}, it (i) performs similarly to uniform sampling in our ablations (\cref{sec:experiments}) and (ii) computing approximate RLSs via BLESS costs $\bigOt(nb^2)$.
    \item Acceleration parameters $\hat \mu$ and $\hat \nu$: Set $\hat \mu = \lambda$ and $\hat \lambda = n / b$.
    These choices are guided by the convergence analysis of \asko{}.
    However, the user must ensure that $\hat \mu \leq \hat \nu$ and $\hat \mu \hat \nu \leq 1$.
    We believe it is possible to make the acceleration in \asko{} parameter-free, as done in \cite{derezinski2025faster}, but we leave this extension to future work.
\end{itemize}

\begin{table}[]
    \centering
    \begin{tabular}{cc}
        Hyperparameter & Default recommendation \\
        \hline
        $b$ & $n / 100$  \\
        $\Pc$ & Uniform \\
        $\hat \mu$ & $\lambda$ \\
        $\hat \nu$ & $n / b$ \\
        $r$ & $100$ \\
        $\rho$ & $\lambda + \lambda_r(\hat{K}_{\B \B})$
    \end{tabular}
    \caption{Default hyperparameters for \asko{}.}
    \label{tab:asko_default_hyperparams}
\end{table}
\section{Related Work}
\label{sec:related_work}
We review existing solvers for full and inducing points KRR and discuss how our methods compare to the literature. 

\subsection{Full KRR}

Many prior works have developed methods to solve full KRR that avoid the $\bigO(n^3)$ cost of direct methods such as Cholesky decomposition.
The various approaches can be roughly divided into 3 categories: (i) PCG methods, (ii) stochastic gradient methods, and (iii) SAP methods.

\label{subsec:krr_related}
  
For PCG, much work has been done on developing efficient preconditioners \citep{cutajar2016preconditioning,avron2017faster,frangella2023randomized,diaz2023robust}. 
Among the numerous proposals, the most popular preconditioners are based on the \nys{} approximation \citep{nystrom1930praktische}. 
Nystr{\"o}m approximations can be constructed from the kernel matrix via uniform sampling of columns \citep{williams2000using}, greedy pivoting \citep{harbrecht2012low}, leverage-score sampling \citep{musco2017recursive}, random projection \citep{tropp2017fixed}, and randomized pivoting \citep{chen2025randomly,epperly2025embrace}.
Preconditioners have also been constructed using random-features approximations \citep{avron2017faster}, but these do not perform as well in practice as \nys{} preconditioners \citep{diaz2023robust,frangella2023randomized}.
Typically, if a \nys{} preconditioner is constructed with a rank $r=\bigO(\deff{\lambda}(K))$, then \pcg{} converges at a condition-number-free rate with high probability \citep{zhao2022nysadmm,frangella2023randomized}.
The downside of these \pcg{} approaches is that they exhibit an $\bigO(n^2)$ per-iteration cost and require $\bigO(nr)$ storage, making it difficult to scale these methods beyond $n \gtrsim 10^{5}$.

\epro{} and \epro{} 2.0 \citep{ma2017diving, ma2019kernel} are the most well-known stochastic gradient methods for solving full KRR.
Both \epro{} and \epro{} 2.0 set the regularization $\lambda = 0$ and solves the resulting KRR problem via preconditioned stochastic gradient descent. 
\epro{} 2.0 reduce the per-iteration cost to $\bigO(nb_g + rs)$, a significant improvement over the $\bigO(n^2)$ cost of \pcg{}.
% However, \epro{} 2.0 does not have a rigorous linear convergence guarantee.
% When $\lambda = 0$, the convergence is guaranteed to be linear, but for $\lambda \neq 0$, these methods can only converge to a noisy ball about the optimum as they do not decay the stepsize.

The third approach to solve full KRR problems is based on \sap{}. 
\cite{tu2016large} proposes randomized block Gauss-Seidel for full KRR.
In follow-up work, \cite{tu2017breaking} develops \nsap{} for solving the full KRR problem.
\cite{gazagnadou2022ridgesketch} applies \sap{} for solving full KRR but uses count sketches instead of coordinate sampling to form $S \Klambd S^{T}$ in the \sap{} update.
Unfortunately, these methods have $\bigO(b^3)$ computational cost per iteration, which is too expensive for our largest example, taxi, where $b=5 \cdot 10^4$.
More recently, \cite{lin2024stochastic} proposes a variant of \nsap{} where $K_{\B\B} + \lambda I$ is replaced by the identity matrix.
While this replacement removes the $\bigO(b^3)$ iteration cost associated with matrix factorization, its convergence becomes much slower.
\preprintcontent{Indeed, we verify in \cref{subsec:ablation} that replacing $K_{\B \B} + \lambda I$ with the identity matrix degrades the performance of \sko{} and \asko{}. 
}
% \textcolor{blue}{Most recently, \cite{derezinski2025randomized} propose and analyze CD++, a version of ASAP which samples the blocks $\B$ from a fixed collection, amortizing the $\bigO(b^3)$ cost in later iterations. However, their approach still requires the $\bigO(b^3)$ cost in early iterations, and also involves preprocessing the matrix $K$ with a randomized Hadamard transform, which is prohibitive both in  computation and storage for our largest datasets.}

Among these (approximate) SAP methods, \sko{} and \asko{} are the only methods that guarantee condition-number-free convergence under reasonable assumptions on the kernel matrix, while also avoiding $\bigO(b^3)$ per-iteration costs. 

% At present, the most widely used \nys{} preconditioners in practice are uniform and greedy Nystr{\"o}m, which are the default preconditioners in popular libraries such as KeOps \cite{charlier2021kernel} and GPyTorch \cite{gardner2018gpytorch}.
% Although PCG is very effective in practice, it still faces a $\bigO(n^2)$ per-iteration cost and $\bigO(rn)$ storage cost.
% This limits its applicability for large $n$ in the resource-constrained setting.
% Indeed, we show in \cref{sec:experiments} that when $n = 10^{8}$, PCG cannot execute a single iteration, even with 5 hours of runtime.
\subsection{Inducing Points KRR}
\label{subsec:ikrr_related}
% \begin{table}[H]
%     \centering
%     \tiny
%     \begin{tabular}{C{4cm}C{4cm}C{3cm}C{2.5cm}}
%         Optimizer & Iteration complexity & Storage complexity & Total computational complexity  \\
%         \hline
%         \chol{} \citep{rudi2015less} & $\bigO(nm^2+m^3)$ & $\bigO(m^2)$ & $\bigO(nm^2+m^3)$ \\
%         \hline
%         \fal{} \citep{rudi2017falkon} & $\bigO(nm+m^2)$ & $\bigO(m^2)$ & $\tilde{\bigO}(nm\sqrt{\kappa_P}+m^3)$ \\
%         \hline
%         KRILL \citep{diaz2023robust} & $\bigO(nm+m^2)$ & $\bigO(m^2)$ & $\tilde{\bigO}(nm+m^3)$ \\
%         \hline
%         \epro{} 3.0 \citep{abedsoltan2023toward} & $\bigO\left(\bg(m+s)+r(m+s)+m^2\right)$ & $\bigO\left(s^2+ \bg s+m\right)$ & \na \\
%         \hline
%         \mimo{} (\cref{alg:mimosa_practical}) & $\bigO(m( \bg  + r))$ & $\bigO(mr + n)$ & $\bigOt\left((n+ \bg  \kappa_P)m\right)$ \\
%         \hline
%     \end{tabular}
%     \caption{
%     Iteration and storage complexities for state-of-the-art inducing points KRR methods.
%     Storage complexity refers to memory explicitly used by the algorithm (this excludes matvec oracles). 
%     Here, $b_g \leq n$ denotes the stochastic gradient batchsize used by \epro{} 3.0 and \mimo{}.
%     Here, $s \leq n$ denotes sample size used to construct the rank $r$ preconditioner used in \epro{} 3.0.
%     \epro{} 3.0 does not have a theoretical convergence rate.
%     }
%     \label{tab:inducing_krr_complexity}
% \end{table}

PCG is also popular for solving inducing points KRR, with 
\fal{} \citep{rudi2017falkon,meanti2020kernel} and KRILL \citep{diaz2023robust} being the current state-of-the-art.
KRILL is similar to \fal{}, but solves \eqref{eq:ikrr_lin_sys} via PCG with a preconditioner constructed using a sparse sketch.
KRILL has robust theoretical guarantees, and
numerical tests in \cite{diaz2023robust} show it yields comparable or better performance than \fal{}.
Nevertheless, KRILL still has a $\bigOt(nm+m^3)$ runtime and requires $\bigO(m^2)$ storage.
Consequently, it encounters the same runtime and storage barriers as \fal{} for large $m$. 

% We implement \fal{} according to \cite{rudi2017falkon} to ensure a fair runtime comparison. 
\cite{meanti2020kernel} develops a software package for \fal{} that relies on extensive code and hardware optimizations.
These optimizations include stabilizing \fal{} in single precision, out-of-core matrix operations, parallelized memory transfers, and distribution over multiple GPUs.
Consequently, \cite{meanti2020kernel} scale \fal{} to $n = 10^9$ with $m = 10^5$.
Despite these optimizations, \fal{} still encounters fundamental memory barriers:
\cite{abedsoltan2023toward} shows that the optimized version of \fal{} cannot scale beyond $2.56 \times 10^5$ inducing points when given 340 GB of RAM, which is larger than the memory budget available to most practitioners.

\cite{abedsoltan2023toward} develops the stochastic gradient method \epro{} 3.0 for inducing points KRR.
\epro{} 3.0 has a lower iteration and storage complexity than \fal{}, which allows it to use more inducing points. 
Unfortunately, this method has no convergence guarantees.

% \mimo{} is the first stochastic gradient method for inducing points KRR that obtains linear convergence and has a lower per-iteration and storage complexity than \fal{}.
% Indeed, \mimo{} has per-iteration and storage complexities $\bigO(m(b_g + r))$ and $\bigO(mr + n)$, while \fal{} has per-iteration and storage complexities $\bigO(mn + m^2)$ and $\bigO(m^2)$.
\section{Key Quantities in the Analysis of \asko{}}
\label{sec:asko_key_quantities}
Establishing convergence of \asko{} involves quantities whose relevance is not obvious a priori.
Therefore, we show how the key quantities in the convergence analysis arise. 
For simplicity, our analysis focuses on \asko{} without acceleration (i.e., \sko{}).
Throughout the analysis, we omit the dependence of $\B$ on $i$ to avoid notational clutter.

\subsection{\sap{} and \nys{} Projectors}
We begin by introducing two matrices that will play a central role in the convergence analysis of \asko{}.

\begin{definition}[\sap{} projector]
\label[definition]{def:sap_proj}
For a block $\B$, kernel matrix $K$, and regularization $\lambda$, the \sap{} projector is
\[
\projB \coloneqq K_{\lambda}^{1/2}\Ib^{T}(\Kbb+\lambda I)^{-1}\Ib K_{\lambda}^{1/2}.
\]
\end{definition}

The \sap{} projector $\projB$ is an \textit{exact} projection matrix, which controls the convergence rate of \sap{} methods.

\begin{definition}[\nys{} projector]
\label[definition]{def:nys_proj}
For a block $\B$, kernel matrix $K$, regularization $\lambda$, damping $\rho > 0$, and stepsize $\eta_\B > 0$, the \nys{} projector is
\[
\aprojB \coloneqq \eta _\B K_{\lambda}^{1/2}\Ib^{T}(\Knys+\rho I)^{-1}\Ib K_{\lambda}^{1/2}.
\]
\end{definition}
The \nys{} projector controls the convergence rate of \asko{}.
Note that the name \nys{} projector is a slight abuse of terminology, as $\aprojB$ is not idempotent.
However, for appropriately chosen $\eta_\B$, we will show that $\aprojB$ is closely related to the \sap{} projector.

\subsection{One-Step Analysis of \sko{} and \sap{}}
\label{subsec:sko_sap_one_step_analysis}
We now show how the \sap{} and \nys{} projectors arise by performing a one-step analysis of \sap{} and \sko{}.

We begin by writing the updates for \sap{} and \sko{}:
\begin{align*}
   & \iterw{i}{} = \iterw{i-1}{}-\Ib^{T}\left(\Kbb+ \lambda I\right)^{-1}\Ib(\Klambd \iterw{i}{}-y), \tag{\sap{}} \\
   & \iterw{i}{} = \iterw{i-1}{}-\eta_\B \Ib^{T}\left(\Knys+\rho I\right)^{-1}\Ib(\Klambd \iterw{i}{}-y). \tag{\sko{}}
\end{align*}

Some elementary manipulations show that
\begin{align*}
    & \|w_{i}-w_\star\|_{\Klambd}^2 \leq (w_{i-1}-w_\star)^{T}K_{\lambda}^{1/2}\left(I-\projB\right)^2 K_{\lambda}^{1/2}(w_{i-1}-w_\star),
    \tag{\sap{}} \\
    & \|w_{i}-w_\star\|_{\Klambd}^2 \leq (w_{i-1}-w_\star)^{T}K_{\lambda}^{1/2}\left(I-\aprojB\right)^2K_{\lambda}^{1/2}(w_{i-1}-w_\star). \tag{\sko{}}
\end{align*}
As $\projB$ is a projection, we can further deduce for \sap{} that
\[
\|w_{i}-w_\star\|_{\Klambd}^2 \leq (w_{i-1}-w_\star)^{T}K_{\lambda}^{1/2}\left(I-\projB\right)K_{\lambda}^{1/2}(w_{i-1}-w_\star).
\tag{\sap{}}
\]
Define\footnote{If $\Klambd$ was only psd rather than pd, then this quantity must be replaced by $\lmin^{+}(\E[\projB])$, as $\E[\projB]$ would be singular. 
For our setting, this does not matter as $\Klambd$ is pd.}
\[\mu \coloneqq \lmin(\E\left[\projB\right]).
\]
Then, taking the expectation conditioned on $w_{i-1}$ yields
\[
\E[\|w_{i}-w_\star\|_{\Klambd}^{2}\mid w_{i-1}] \leq \left(1-\mu\right)\|w_{i-1}-w_\star\|_{\Klambd}^2.
\tag{\sap{}}
\]
Thus, the convergence of \sap{} is controlled by the smallest positive eigenvalue of the expected projection matrix $\E\left[\projB \right]$.
As \sko{} is an approximate version of \sap{}, we would like to deduce a similar result.
Indeed, if we can ensure
\[
\aprojB \preceq I,
\]
then
\[
\|w_{i}-w_\star\|_{\Klambd}^2 \leq (w_{i-1}-w_\star)^{T}K_{\lambda}^{1/2}\left(I-\aprojB\right)K_{\lambda}^{1/2}(w_{i-1}-w_\star).
\tag{\sko{}}
\]
Defining 
\[\hat \mu \coloneqq \lmin(\E[\aprojB]),\]
it immediately follows from the preceding display that
\[
\E[\|w_{i}-w_\star\|_{\Klambd}^{2}\mid w_{i-1}] \leq \left(1-\hat \mu \right)\|w_{i-1}-w_\star\|_{\Klambd}^2.
\tag{\sko{}}
\]

\subsubsection{Setting the Stepsize $\eta_B$}
The convergence of \sko{} is controlled by $\hat \mu$, the smallest positive eigenvalue of $\E[\aprojB]$, similar to how $\mu$ controls the convergence of \sap{}.
To conclude this relation, we require
\[
\aprojB \preceq I.
\]
This condition tells us that the stepsize $\eta_\B$ cannot be arbitrarily large.
% In addition to expressing the convergence of \sko{} in terms of $\hat \mu$, 
However, we also want to characterize the behavior of $\hat \mu$: when $\Knys$ is a good approximation to $\Kbb$, $\hat \mu$ should be closely related to $\mu$.
The following lemma formalizes this intuition. 
 \begin{lemma}
 \label[lemma]{lemma:appx_proj}
    Let $\rho > 0$ and $\B \subseteq [n]$ be fixed.
    Let $\Knys$ be a \nys{} approximation of $\Kbb$.
    Define
    \begin{align*}
        & L_{P_\B} \coloneqq \lmax\left((\Kbb+\lambda I)^{1/2}\left(\Knys+\rho I\right)^{-1}(\Kbb+\lambda I)^{1/2}\right), \\
        & \hat L_{P_\B} \coloneqq \max\{1, L_{P_\B}\}, \\        
        & \sigma_{P_\B} \coloneqq \lmin\left((\Kbb+\lambda I)^{1/2}\left(\Knys+\rho I\right)^{-1}(\Kbb+\lambda I)^{1/2}\right).
    \end{align*}
    
    If we set $\eta_B = 1 / \hat L_{P_\B}$ in \cref{def:nys_proj}, then the \nys{} projector satisfies
    \begin{equation}
    \label{eq:loewn_bnd}
    \frac{\sigma_{P_\B}}{\hat L_{P_\B}} \projB \preceq \aprojB \preceq \projB.
    \end{equation}
\end{lemma}

% \Cref{lemma:appx_proj} shows that if we set $\eta_\B = 1/\hat L_{P_\B}$, then
% \[
% \aprojB \preceq I,
% \]
% and
% \begin{equation}
% \label{eq:shrinkage_bnd}
% \aprojB \preceq I \quad
% \text{and} \quad
% \lmin(\aprojB) \geq \frac{\sigma_{P_\B}}{\hat L_{P_\B}} \lmin(\projB).
% \end{equation}
Setting the \asko{} stepsize to $\eta_\B = 1/\hat L_{P_\B}$ ensures convergence and allows for a lower bound on $\lmin(\aprojB)$ in terms of $\lmin(\projB)$.
Recall that we set $\eta_\B = 1/L_{P_\B}$ as the stepsize for \asko{} in \cref{sec:algorithms}.
The conclusions of \cref{lemma:appx_proj} still hold for this simpler stepsize.
We only require $1/\hat L_{P_\B}$ for our theoretical analysis to show \asko{} achieves a better upper bound on the number of iterations to reach an $\epsilon$-approximate solution of \eqref{eq:krr_linsys} than \sko{}.
%\cref{lemma:appx_proj} also justifies our choice of $\eta_\B$ in \sko{} and \asko{}---it is the largest stepsize that ensures our methods converge to the optimum.

\section{First-Moment Analysis of the Expected \nys{} Projector}
\label{sec:nys_proj_analysis}

The argument in \cref{subsec:sko_sap_one_step_analysis} shows that the convergence of \asko{} is controlled by $\hat \mu$, which is the smallest eigenvalue of the first moment of the \nys{} projector, $\E[\aprojB]$.
Analyzing $\hat \mu$ and $\E[\aprojB]$ directly seems impenetrable. 
% Instead we seek to relate $\E[\aprojB]$ to the first moment of the \sap{} projector, $\E[\projB]$, and leverage that we can obtain non-trivial bound on $\mu = \lambda_{\textrm{min}}(\E[\projB])$.
Instead, we would like to derive a lower bound on $\E[\aprojB]$ in terms of the first moment of the \sap{} projector, $\E[\projB]$:
% obtain a non-trivial bound on the first moment of the \sap{} projector, $(\E[\projB])$, in the Loewner ordering
% relate $\E[\aprojB]$ to the first moment of the \sap{} projector, $\E[\projB]$,
% Specifically, the goal of this section is establish a bound of the  form:
\begin{equation}
\label{eq:ideal_bnd}
    \E[\aprojB] \succeq g(\rho, \lambda)\E[\projB],
\end{equation}
where $g(\rho, \lambda)$ is a scalar function that captures the price of replacing $\Kbb$ with $\Knys$.
Once \eqref{eq:ideal_bnd} has been established, we immediately obtain
\begin{equation}
\label{eq:ideal_mu_bnd}
\hat \mu \geq g(\rho,\lambda) \mu. 
\end{equation}

Provided that we have a non-trivial lower bound on $\mu$, we can combine \eqref{eq:ideal_mu_bnd} with the one-step analysis in \cref{subsec:sko_sap_one_step_analysis} to obtain an explicit convergence rate for \sko{}.
We can obtain an explicit convergence rate for \asko{} using a similar approach, with some additional work to account for Nesterov acceleration.

There are two challenges in establishing the lower bound on $\hat \mu$ in \eqref{eq:ideal_mu_bnd}:
\begin{enumerate}
    \item \textbf{Deriving a meaningful lower bound on $\mu$ is non-trivial}.
    Existing works that establish such a lower bound rely on either expensive DPP sampling techniques \citep{mutny2020convergence} or preprocessing the kernel matrix using a Hadamard transform \citep{derezinski2024solving, derezinski2024fine}, which incurs a $\bigO(n^2)$ cost and requires materializing a $n \times n$ matrix in memory.
    Therefore, we must develop a different approach for establishing a lower bound on $\mu$.
    \item \textbf{Identifying the form of $g(\rho, \lambda)$ requires care}.
    The analysis to determine $g(\rho, \lambda)$ requires careful spectral approximation arguments based on the theory of the \nys{} approximation.
\end{enumerate}

\begin{figure}[t] % or [htbp]
  \centering
  \begin{tikzpicture}[
    >=Latex,
    font=\small,
    box/.style={
      draw, rectangle, rounded corners,
      align=center, inner xsep=6pt, inner ysep=4pt
    },
    elab/.style={
      font=\small, align=center,
      fill=white, fill opacity=0.95, text opacity=1,
      inner sep=1.5pt
    }
  ]
    \newcommand{\LBW}{30mm}

    \node[box] (arls) {ARLS sampling};
    \node[box, below=15mm of arls] (rls) {RLS sampling};

    \node[box, right=46mm of $(arls)!0.5!(rls)$] (dpp) {DPPs};

    \node[box, right=34mm of dpp, text width=\LBW] (mu)
      {Lower bound on $\mu$ (\cref{lem:convergence_rls})};
    \node[box, below=16mm of mu, text width=\LBW] (mu_hat)
      {Lower bound on $\hat \mu$ (\cref{thm:appx_proj_small_eig})};

    \draw[->, dashed] (arls) -- (rls)
      node[pos=.55, left=2mm, elab] {Approximation\\(\cref{def:rls-approx})};

    \draw[->] (arls) -- (dpp)
      node[pos=.5, above, elab, yshift=2mm] {Reduction\\(\cref{lem:rls_sample})};

    \draw[->, dashed] (rls) -- (dpp)
      node[pos=.5, below, elab, xshift=2mm, yshift=-4mm] {Exact correspondence\\(\cref{lem:dpp_rls})};

    \draw[->] (dpp) -- (mu)
      node[midway, above, elab] {Expected projection\\(\cref{lem:dpp_proj})};

    \draw[->] (mu) -- (mu_hat)
      node[midway, left=1mm, elab] {Spectral approximation\\(\cref{prop:appx_proj})};
  \end{tikzpicture}

  \caption{Summary of reductions and correspondences between ARLS/RLS sampling, DPPs, and the resulting lower bounds on $\mu$ and $\hat \mu$. 
  The dashed arrows are for intuition only: ARLS is closely related to RLS and RLS is related to DPPs, so we expect ARLS to be related to DPPs. 
  The solid arrows correspond to our approach for obtaining a lower bound on $\hat \mu$.}
  \label{fig:arls-rls-dpp-chain}
\end{figure}

In the following subsections, we address these challenges.
In \cref{subsec:arls_dpp_background}, we provide background on (approximate) ridge leverage scores and DPPs.
In \cref{subsec:appx_rls}, we develop a novel \arls{}-to-DPP reduction (\cref{lem:rls_sample}) to establish a non-trivial lower bound on $\mu$ (\cref{lem:convergence_rls}).
% \arls{} sampling avoids expensive DPP sampling and Hadamard preprocessing, which enables efficient coordinate sampling.
This reduction and its associated results can be applied to any psd matrix, making them of independent interest outside of this manuscript.
In \cref{subsec:appx_mu}, we relate $\aprojB$ to $\projB$ using spectral approximation arguments based on the \nys{} approximation.
Combining these ideas, we show the following lower bound on $\hat \mu$:

\[
\hat \mu \geq \frac{\lambda}{16\tailcond{k}{n}\rho}\frac{k}{n},
\]
where $\tailcond{k}{n} \coloneqq \frac{1}{n-k} \sum_{i>k} \lambda_i(K_\lambda)/\lambda_{\min}(K_\lambda)$.
The exact statement is presented in \cref{thm:appx_proj_small_eig}.

For clarity, we illustrate our argument to lower bound $\hat \mu$ in \cref{fig:arls-rls-dpp-chain}.

\subsection{Approximate Ridge Leverage Score Sampling and DPPs}
\label{subsec:arls_dpp_background}
In this subsection, we formally introduce approximate ridge leverage scores and \arls{} sampling and discuss their efficient computation.
As our argument relies on a reduction from ARLS sampling to DPP sampling, we also define determinantal point processes and introduce some fundamental facts about them.

\subsubsection{Approximate Ridge Leverage Score Sampling}
We begin by introducing ridge leverage scores.

\begin{definition}[Ridge leverage scores, effective dimension, and degrees of freedom]
\label[definition]{def:rls}
Given $A \in \psd^n$ and $\lambda >0$, its $i$-th $\lambda$-ridge leverage score $\ell_i^{\lambda}(A)$ is the $i$-th diagonal entry of matrix $A(A+\lambda I)^{-1}$:
\[
    \ell_i^{\lambda}(A) = e_i^{T} A(A + \lambda I)^{-1} e_i.
\]
The sum of all its $\lambda$-ridge leverage scores is called its $\lambda$-effective dimension:
\[
    \deff{\lambda}(A)\coloneqq \Tr\big(A(A+\lambda I)^{-1}\big) = \sum_{i=1}^n\ell_i^{\lambda}(A).
\]
The $\lambda$-maximal degrees of freedom of $A$ is given by:
\begin{align*}
    \dmof{\lambda}(A) \coloneqq n \max_{i\in [n]} \ell_i^{\lambda}(A).
\end{align*}
\end{definition}
The effective dimension measures the degrees of freedom of $A$, taking into account the regularization $\lambda$.
For matrices with decaying eigenvalues (e.g., kernel matrices), $\deff{\lambda}(A)$ is much smaller than the ambient dimension $n$ \citep{alaoui2015fast}.
The $i$-th \rls{} captures how much a row $i$ of $A$ contributes to the effective dimension.
The maximal degrees of freedom is always larger than the effective dimension, and can be significantly larger when the leverage score distribution is highly non-uniform.

The \rls{} scores naturally define a distribution on the rows of $A$: sample row $i$ with probability $p_i = \ell_i^{\lambda}(A)/\deff{\lambda}(A)$.
This sampling distribution admits strong theoretical guarantees \citep{alaoui2015fast, rudi2018fast}, but is impractical as computing the leverage scores requires an eigendecomposition at a cost of $\bigO(n^3)$, which is more expensive than the problem we wish to solve.
Fortunately, good guarantees can still be obtained by using approximate ridge leverage score sampling (\arls{} sampling), which we now define. 

\begin{definition}[Approximate RLSs and {$\arls[\lambda][c]$-sampling}]
\label[definition]{def:rls-approx}
Given $A \in \psd^n$ and $\lambda >0$, we say that $\{\tilde{\ell}_i\}_{i=1}^n$ are a $c$-approximation of $\lambda$-ridge leverage scores $\{\ell_i^{\lambda}(A)\}_{i=1}^n$ for some $c\geq 1$, if they satisfy
\begin{align*}
\tilde{\ell}_i\geq \ell_i^{\lambda}(A)\ \ \forall i\qquad\text{and}\qquad \tilde \ell \coloneqq \sum_{i=1}^n\tilde\ell_i\leq c\cdot \deff{\lambda}(A).
\end{align*}
Define $p_i := \frac{\tilde{\ell}}{n}\left\lceil\frac{n}{\tilde{\ell}} \tilde{\ell}_i\right\rceil$ for all $i \in [n]$, and sample indices $\{i_j\}_{j=1}^{b}$ i.i.d.~according to $\{p_i\}_{i=1}^n$. Then $\B= \bigcup_{j=1}^{b} \{i_j\}$ is an $\arls[\lambda][c]$-sampling.\footnote{The indices in $\B$ are actually distinct since our analysis allows us to discard duplicates.} 
\end{definition}

% \begin{definition}[$\lambda$-ridge leverage score distribution]
%     Given $A \in \psd^n$ and $\lambda > 0$, a distribution $\mathcal D$ on $[n]$ is a \emph{$\lambda$-ridge leverage score distribution} for $A$ if the probability assigned to row $i$ of $A$ is
%     \[
%     p^{\lambda}_i(A) =  \frac{\ell_i^{\lambda}(A)}{\deff{\lambda}(A)}.
%     \]
% \end{definition}
% $\lambda$-Ridge leverage score sampling ($\lambda$-\rls{} sampling) samples rows according to how much they contribute to the effective dimension. 
% Rows with higher leverage scores are more likely to be sampled than those with smaller leverage scores.
% As we will see, sampling coordinates according to the $\lambda$-\rls{} distribution reduces the \blkszstr{} $\blksz$ that \asko{} must use to achieve fast convergence: it suffices to set  $\blksz = \bigOt(\deff{\lambda}(K))$.
% In contrast, if uniform sampling is used, the \blkszstr{} $b$ required to achieve a fast convergence rate scales with the much larger $\dmof{\lambda}(K)$.

Approximate \rls{}s were first introduced by \cite{alaoui2015fast} for constructing randomized \nys{} approximations of kernel matrices. 
The key advantage of \arls{}s over \rls{}s is that they can be computed efficiently. 
In particular, \asko{} uses the BLESS algorithm proposed by \cite{rudi2018fast} to compute the \arls{}s.
BLESS admits the following complexity guarantee:
\begin{lemma}[\cite{rudi2018fast}, Theorem 1]
\label[lemma]{lem:rls_approx}
Given $A \in \psd^n$, a positive integer $k$, $c\geq 1$ and $\lambda>0$ such that the $\lambda$-effective dimension of $A$ satisfies $\deff{\lambda}(A) \leq k$, there is an algorithm (BLESS) that with high probability returns $c$-approximations for all $n$ $\lambda$-ridge leverage scores of $A$ in  $\tilde\bigO(nk^2)$ time.
\end{lemma}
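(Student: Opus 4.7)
The plan is to mirror the BLESS (Bottom-up Leverage Scores Sampling) strategy, which constructs an increasingly refined sequence of Nystr\"om approximations at geometrically decreasing regularization levels, using each to bound the leverage scores at the next level.

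First, I would establish the following structural reduction from ridge leverage scores to Nystr\"om approximations. If $S\subseteq [n]$ and we form the Nystr\"om approximation $\tilde A_S = A_{:S}\,\pinv{A_{SS}}\,A_{S:}$, then the diagonal entries of $\tilde A_S(\tilde A_S+\lambda I)^{-1}$ upper bound $\ell_i^{\lambda}(A)$ up to a small constant, provided that $\tilde A_S$ satisfies the spectral bound $A\preceq 2(\tilde A_S+\lambda I)$. This reduces the task to producing, in $\bigOt(nk^2)$ time, a Nystr\"om approximation satisfying this spectral bound at level $\lambda$, and then extracting its leverage scores explicitly.

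Second, I would prove the core sampling lemma: given $c$-approximate $\lambda'$-ridge leverage scores $\{\tilde\ell_i\}$ for some $\lambda'\geq\lambda$, drawing $\bigO(c\cdot \deff{\lambda'}(A)\log n)$ indices with probabilities proportional to $\tilde\ell_i$ produces, with high probability, a Nystr\"om approximation satisfying $A\preceq 2(\tilde A_S + \lambda' I)$. The proof is a standard application of the matrix Chernoff (or matrix Bernstein) inequality to a sum of i.i.d.\ reweighted rank-one terms, exploiting the fact that the variance of the estimator is controlled exactly by $\deff{\lambda'}(A)$ through the leverage-score definition.

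Third, I would set up the inductive regularization schedule. Initialize with $\lambda_0 = \|A\|_{\mathrm{op}}$, at which the trivial scores $\tilde\ell_i=1$ form an $\bigO(1)$-approximation, and halve the regularization, $\lambda_{t+1}=\lambda_t/2$, until $\lambda_T\leq \lambda$, giving $T=\bigO(\log(\|A\|_{\mathrm{op}}/\lambda))$ levels. At level $t$, use the previous approximate scores to sample $S_t$ of size $\bigO(k\log n)$, invoke the sampling lemma to certify $A\preceq 2(\tilde A_{S_t}+\lambda_{t+1}I)$, and then compute the \emph{exact} $\lambda_{t+1}$-leverage scores of $\tilde A_{S_t}$; by Step~1 these are $\bigO(1)$-approximations of the true $\lambda_{t+1}$-scores of $A$. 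The key invariant is that the total approximate mass stays bounded at each level by $\bigO(\deff{\lambda_t}(A))=\bigO(k)$, so $|S_t|=\bigO(k\log n)$ throughout. A final tightening pass at the terminal level $\lambda_T=\lambda$ (e.g.\ one additional sampling round with a slightly larger sketch) converts the $\bigO(1)$-approximation into the claimed factor of $2$.

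For the runtime, computing the exact ridge leverage scores of a rank-$|S_t|$ Nystr\"om approximation requires only forming $A_{:S_t}$, factoring $A_{S_tS_t}+\lambda_{t+1}I$, and reading off the diagonal of $A_{:S_t}(A_{S_tS_t}+\lambda_{t+1}I)^{-1}A_{:S_t}^{\!\top}$, which takes $\bigO(n|S_t|^2)=\bigOt(nk^2)$ time per level; summing over $\bigO(\log(\|A\|_{\mathrm{op}}/\lambda))$ levels absorbs another log factor and yields the stated $\bigOt(nk^2)$ bound.

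The hard part will be the matrix concentration argument in Step~2, together with its iterated use in Step~3: non-uniform leverage-score sampling must yield a spectral approximation simultaneously at every one of the $T$ recursion levels without the approximation factor compounding, so the probability budget must be carefully apportioned via a union bound and the constants must be tracked through each inductive step. A secondary subtlety is that at level $t$ the sampling probabilities come from the \emph{previous} level's approximation, so the sampling lemma must tolerate $c$-approximate probabilities, forcing a small oversampling factor at each level that must be shown not to blow up multiplicatively across the $\bigO(\log n)$ recursion depth.
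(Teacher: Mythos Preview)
The paper does not prove this lemma; it is stated as a citation to \cite{rudi2018fast}, Theorem~1 (with the specific formulation taken from Proposition~2 of \cite{derezinski2019exact}), and no argument is given in the body or appendix. So there is nothing to compare your proposal against in this paper.

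That said, your sketch is a faithful reconstruction of the BLESS strategy in the cited reference: the geometric regularization schedule, the inductive invariant that each level's sample size is $\bigOt(k)$, the matrix-concentration step justifying Nystr\"om-based leverage-score estimation, and the per-level $\bigOt(nk^2)$ cost all match the architecture of the original proof. One small point to tighten in Step~1: the inequality you want is not that the Nystr\"om leverage scores directly upper bound the true ones via $\tilde A_S \preceq A$, but rather a two-sided sandwich derived from the spectral condition $A \preceq c(\tilde A_S + \lambda I)$ combined with $\tilde A_S \preceq A$; the upper-bound direction in particular requires the spectral condition, not just the Nystr\"om ordering. Your Step~2 already supplies exactly that condition, so the logic closes, but the dependency should be stated in the right order.
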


\cref{lem:rls_approx} shows we can obtain accurate $\lambda$-\arls{}s to construct the sampling distribution in $\tilde O(nk^2)$ time.
This stands in contrast to SAP solvers in \cite{derezinski2024solving, derezinski2024fine}, which require $\bigO(n^2)$ time to preprocess $K$ by a Hadamard transform $H$, and $\bigO(n^2)$ storage to store $HKH^{T}$, from which rows are subsampled.
Thus, \arls{}s yield a significant improvement in computational and storage complexities over Hadamard preprocessing.

% \begin{table}[]
%     \centering
%     \begin{tabular}{ccc}
%         Sampling distribution & Time to generate distribution & \Blkszstr{} for fast convergence \\
%         \hline
%         \rls{} & $\bigO(n^3)$ & $\bigOt(d^{\lambda}(K))$ \\
%         \hline
%         Uniform & $\bigO(n)$ & $\bigOt(\dmof{\lambda}(K))$ \\
%         \hline
%         \arls{} & $\bigOt(n d^{\lambda}(K)^2)$ & $\bigOt(d^{\lambda}(K))$
%     \end{tabular}
%     \caption{Comparison between \rls{}, uniform, and \arls{} sampling. 
%     \arls{} sampling requires a similar \blkszstr{} to \rls{} sampling to achieve fast convergence, while being more computationally efficient.}
%     \label{tab:sampling_distributions}
% \end{table}

% \begin{definition}[{$\arls[\lambda][c]$-sampling}]
% \label[definition]{def:arls}
% For $c\geq 1$, let $\{\tilde\ell_i\}_{i=1}^n$ be $c$-approximations of $\lambda$-ridge leverage scores of matrix $A$ and let $\tilde \ell=\sum_i\tilde\ell_i$. Define $p_i := \frac{\tilde{\ell}}{n}\left\lceil\frac{n}{\tilde{\ell}} \tilde{\ell}_i\right\rceil$ for all $i \in [n]$, and sample indices $\{i_j\}_{j=1}^{b}$ i.i.d.~according to $\{p_i\}_{i=1}^n$. Then $\B= \bigcup_{j=1}^{b} \{i_j\}$ is an $\arls[\lambda][c]$-sampling.\footnote{The indices in $\B$ are actually distinct since our analysis allows us to discard duplicates.} 
% \end{definition}

\subsubsection{Determinantal Point Processes and their Connection to \rls{}s}
% To show convergence of \asko{}, we must obtain control over $\E[\projB]$ and its smallest eigenvalue $\mu$, when sampling according to an \arls{} distribution.
The heart of our analysis is a novel \arls{}-to-DPP reduction argument that
exploits a fundamental connection between RLS and determinantal point processes (DPPs).
DPPs are a family of non-i.i.d.~probability measures used in machine learning to sample diverse subsets of data points \citep{kulesza2012determinantal}. 

\begin{definition}[Determinantal point process]
\label[definition]{def:dpp}
Given $A \in \psd^n$, a (random-size) determinantal point process $\dpp(A)$ is a distribution over all $2^n$ index subsets $\B \subseteq [n]$ such that $\prfn{\B} = \frac{\det(A_{\B\B})}{\det(A+I)}$. Moreover, a (fixed-size) $k$-$\dpp(A)$ is a distribution over subsets $\B \subseteq {[n] \choose k}$ such that $\prfn{\B} \propto \det(A_{\B\B})$.
\end{definition}
DPP-based row selection matrices yield strong convergence guarantees for SAP methods thanks to the following formula connecting random projectors with DPP sampling.
\begin{lemma}[\cite{derezinski2020improved}, Lemma 5]
\label[lemma]{lem:dpp_proj}
    Given $A\in\psd^n$, let $\B\sim\dpp(A)$, and define the random projection matrix $\proj_\B:= A^{1/2}S^T \pinv{(SAS^T)} SA^{1/2}$, where $S$ is a row selection matrix for set $\B$. Then
    \begin{equation}
    \label{eq:dpp_expect_formula}
            \E[\proj_\B] = A(A+ I)^{-1}.
    \end{equation}
\end{lemma}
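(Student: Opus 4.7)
The plan is to prove this by reducing to a determinantal identity that comes from Cramer's rule together with the multi-linearity of the determinant. First, I would factor out $A^{1/2}$: write $\Pi_\B = A^{1/2} M_\B A^{1/2}$ where $M_\B := S^T(SAS^T)^+ S$. Since $A^{1/2}$ commutes with $(A+I)^{-1}$ (both are functions of $A$), we have $A^{1/2}(A+I)^{-1}A^{1/2} = A(A+I)^{-1}$, so it suffices to establish
\[
\E_{\B\sim\dpp(A)}\big[S^T(SAS^T)^+ S\big] = (A+I)^{-1}.
\]
Without loss of generality, assume $A \succ 0$ (the psd case follows by continuity as $A \mapsto A+\epsilon I$, $\epsilon \downarrow 0$). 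Then $(SAS^T)^+=(SAS^T)^{-1}$, and the $(i,j)$-entry of $M_\B$ equals $((A_{\B\B})^{-1})_{ij}$ if $\{i,j\}\subseteq \B$ and $0$ otherwise.

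Next, I would compute the expectation entrywise. Using the DPP probability and Cramer's rule,
\[
(\E[M_\B])_{ij} \;=\; \sum_{\B \supseteq\{i,j\}} \frac{\det(A_{\B\B})}{\det(A+I)}\cdot \frac{(-1)^{r_i(\B)+r_j(\B)}\,\det(A_{\B\setminus\{j\},\B\setminus\{i\}})}{\det(A_{\B\B})},
\]
where $r_i(\B)$ and $r_j(\B)$ denote the positions of $i$ and $j$ in the sorted set $\B$. The determinants $\det(A_{\B\B})$ cancel, leaving a sum of signed minors of $A$ over all $\B \supseteq \{i,j\}$ divided by $\det(A+I)$. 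Applying Cramer's rule once more to $(A+I)^{-1}$, proving the lemma then reduces to the combinatorial identity
\[
\sum_{\B\supseteq\{i,j\}}(-1)^{r_i(\B)+r_j(\B)}\,\det(A_{\B\setminus\{j\},\B\setminus\{i\}}) \;=\; (-1)^{i+j}\,\det\big((A+I)_{\hat{j},\hat{i}}\big).
\]

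To establish this identity I would expand the right-hand side by multi-linearity of the determinant along the rows of $(A+I)_{\hat j,\hat i}$. Writing $(A+I)_{\hat j,\hat i} = A_{\hat j,\hat i} + E$, where $E$ has a $1$ in position $(k,k)$ (in the natural coordinates of $\hat j\times \hat i$) exactly when $k \in [n]\setminus\{i,j\}$, each row for $k\neq i$ splits as $A_{k,\cdot}+e_k$. Expanding produces one term per subset $T\subseteq [n]\setminus\{i,j\}$ of "identity rows"; Laplace-expanding along those rows pivots away rows $T$ and their matched columns, leaving $\det(A_{\B\setminus\{j\},\B\setminus\{i\}})$ with $\B := [n]\setminus T \supseteq \{i,j\}$. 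Combining with the Cramer identity $(-1)^{i+j}\det((A+I)_{\hat j,\hat i})/\det(A+I) = ((A+I)^{-1})_{ij}$ finishes the proof.

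The main obstacle is the sign bookkeeping in the combinatorial identity. The Cramer-rule signs $(-1)^{r_i(\B)+r_j(\B)}$ depend on the positions of $i,j$ inside the sorted set $\B$, while the multi-linear expansion produces signs from permutations used to pivot away the identity rows; verifying they agree requires a careful case analysis depending on how the elements of $T$ straddle $i$ and $j$. An attractive alternative that avoids this bookkeeping is to use the spectral/Bernoulli representation of $\dpp(A)$ (independently include each eigenvector $u_i$ with probability $\lambda_i/(\lambda_i+1)$, then sample $\B$ from the resulting projection DPP); one can then invoke the cleaner expectation identity $\E[\Pi_\B \mid V] = VV^T$ for projection DPPs and average, reducing everything to the identity $\E[VV^T]=A(A+I)^{-1}$.
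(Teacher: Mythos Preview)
The paper does not prove this lemma; it is quoted verbatim from \cite{derezinski2020improved} and used as a black box, so there is no ``paper's own proof'' to compare against. That said, your primary approach is sound. The reduction to $\E[S^T(SAS^T)^{-1}S]=(A+I)^{-1}$ is correct, the cancellation of $\det(A_{\B\B})$ against the DPP weight is the key observation, and the combinatorial identity you isolate does hold. The sign bookkeeping you flag as the obstacle really does work out: writing $T=[n]\setminus\B$ and letting $b$ be the number of elements of $\B\setminus\{i,j\}$ strictly between $i$ and $j$, one checks $r_i(\B)+r_j(\B)\equiv b+1\pmod 2$, while the generalized Laplace expansion of $\det((A+I)_{\hat j,\hat i})$ along the ``identity'' rows $T$ contributes sign $(-1)^{b'}$ with $b'=(j-i-1)-b$; combining with the outer $(-1)^{i+j}$ gives exactly $(-1)^{b+1}$.

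Your proposed alternative, however, has a genuine gap. The conditional identity $\E[\Pi_\B\mid V]=VV^T$ is \emph{not} true when $\Pi_\B$ is defined with $A^{1/2}$ (as in the lemma) but the conditioning is on a random eigenvector subset $V$. For a projection DPP with kernel $P=VV^T$ one does have $P^{1/2}S^T(SPS^T)^{-1}SP^{1/2}=P$ almost surely, but that uses $P^{1/2}$, not $A^{1/2}$. With $A^{1/2}$ the projection $\Pi_\B$ lands on $\mathrm{range}(A^{1/2}S^T)$, which in general is not $\mathrm{range}(V)$; a two-by-two example with non-axis-aligned eigenvectors already breaks the claimed conditional identity. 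So if you want a spectral proof you would need an additional argument relating $\Pi_\B^{(A)}$ to $\Pi_\B^{(VV^T)}$, or else stick with the determinantal route, which is complete.
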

\cite{mutny2020convergence} gives a sharp convergence analysis for DPP-based SAP methods using  \eqref{eq:dpp_expect_formula}.
Unfortunately, directly sampling from a DPP is impractical due to its prohibitive computational cost \citep{derezinski2019exact}.
% While it is possible to sample from DPPs directly \citep[e.g.,][]{derezinski2019exact}, using them in SAP leads to a substantial computational overhead, which makes such an approach impractical for KRR. 

The goal of our \arls-to-DPP reduction is to show that when using \arls{} sampling, a relation similar to \eqref{eq:dpp_expect_formula} holds:
\[
\E[\proj_\B] \succeq A(A+ I)^{-1} - \delta \pinv{A}A.
\]
That is, we want to obtain a perturbed version of the DPP expectation formula \eqref{eq:dpp_expect_formula} for a small $\delta$.
Our arguments show that this is indeed the case.
The key idea is to exploit the following connection between the \rls{}s of $A$ and $\dpp(A)$: 

\begin{lemma}[\cite{derezinski2021determinantal}, Theorem 10]
\label[lemma]{lem:dpp_rls}
For $\B \sim \dpp(A)$ and index $i \in [n]$, the marginal probability of $i \in \B$ is the $i$-th ridge leverage score of $A$, i.e.,
\begin{align*}
\prfn{i \in \B} = \left[A(A+I)^{-1}\right]_{i,i} = \ell_i^1(A).
\end{align*}
\end{lemma}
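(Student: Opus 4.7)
The plan is to compute the marginal probability directly from the definition $\prfn{\B} = \det(A_{\B\B})/\det(A+I)$ by summing over all $\B$ containing $i$, then identifying the resulting expression with $[A(A+I)^{-1}]_{ii}$ via a classical determinantal identity and Cramer's rule.

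The key ingredient is the multilinear expansion
\[
\sum_{\B \subseteq [n]} \det(A_{\B\B}) = \det(I + A),
\]
which follows from evaluating the characteristic polynomial identity $\det(I + tA) = \sum_{\B} \det(A_{\B\B}) t^{|\B|}$ at $t=1$. This guarantees that $\dpp(A)$ is a probability distribution, and it applies equally to the principal submatrix $A^{(i)} \in \R^{(n-1)\times(n-1)}$ obtained by deleting row and column $i$ from $A$.

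With this in hand, I would split the sum according to whether $i \in \B$:
\[
\sum_{\B \ni i} \det(A_{\B\B}) = \sum_{\B \subseteq [n]} \det(A_{\B\B}) \;-\; \sum_{\B \subseteq [n]\setminus\{i\}} \det(A_{\B\B}) = \det(I + A) \;-\; \det(I + A^{(i)}).
\]
Next, I would relate $\det(I + A^{(i)})$ to the diagonal entry of $(A+I)^{-1}$ using Cramer's rule: deleting row $i$ and column $i$ from $A+I$ yields $A^{(i)} + I_{n-1}$, so the corresponding cofactor formula gives $[(A+I)^{-1}]_{ii} = \det(A^{(i)} + I_{n-1})/\det(A+I)$. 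Substituting back,
\[
\prfn{i \in \B} = \frac{\det(I+A) - \det(I + A^{(i)})}{\det(I+A)} = 1 - \bigl[(A+I)^{-1}\bigr]_{ii} = \bigl[I - (A+I)^{-1}\bigr]_{ii} = \bigl[A(A+I)^{-1}\bigr]_{ii},
\]
where the final equality uses $I - (A+I)^{-1} = (A+I)(A+I)^{-1} - (A+I)^{-1} = A(A+I)^{-1}$.

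I do not expect any real obstacle here: the argument is essentially a one-page verification that the $L$-ensemble $\dpp(A)$ has marginal kernel $K = A(A+I)^{-1}$, and every step reduces to the determinantal identity above combined with Cramer's rule. The only place where care is needed is keeping track of the two different identities ($I_n$ versus $I_{n-1}$) when deleting the $i$-th row and column, which is purely bookkeeping.
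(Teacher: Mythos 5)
The paper does not prove this lemma; it simply cites it as Theorem~10 of \cite{derezinski2021determinantal}, so there is no in-paper argument to compare against. Your proof is correct and is the standard elementary derivation that the $L$-ensemble $\dpp(A)$ has marginal kernel $A(A+I)^{-1}$: the normalization identity $\sum_{\B\subseteq[n]}\det(A_{\B\B})=\det(I+A)$ (including the empty set contributing $1$), the complementary split over $\B\ni i$ versus $\B\subseteq[n]\setminus\{i\}$, and the Cramer's-rule identification $[(A+I)^{-1}]_{ii}=\det\bigl(A^{(i)}+I_{n-1}\bigr)/\det(A+I)$ all check out, and the final algebraic simplification $I-(A+I)^{-1}=A(A+I)^{-1}$ is immediate. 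Since the paper leaves the lemma as a black box, your self-contained one-page verification is a useful addition rather than a divergence from the paper's route.
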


% \begin{relationbox}
% In \cref{subsec:appx_rls}, we combine the result in \cref{lem:dpp_rls} with a Markov Chain Monte Carlo (MCMC) argument \citep{anari2024optimal} to show that RLS sampling approximately retains the random projection formula from \Cref{lem:dpp_proj}.
% This result enables the convergence analysis of our \sap{}-based algorithms, \sko{} and \asko{}.

% In practice, $k$ in \cref{lem:rls_approx} can be capped to limit the runtime of BLESS, while still obtaining good performance \citep{gautier2019dppy}. 
% Since the effective dimension for most kernel matrices is typically bounded by $\bigO(\sqrt{n})$ \citep{rudi2017falkon,rudi2018fast}, our practical implementation caps $k = \bigO(\sqrt{n})$.
% This ensures BLESS runs in $\bigOt(n^2)$ time in our experiments.
% \end{relationbox}

% \subsubsection{Approximate ridge leverage scores}
% We begin by formalizing the approximate RLS sampling distribution used in our algorithms. 
% Note that we apply a custom rounding step to the probabilities in this definition, which facilitates the MCMC analysis in the proof of 

\subsection{Analysis of $\mu$ with ARLS Sampling}
\label{subsec:appx_rls}

The main goal of this subsection is establish the following result, which provides tight control on $\E[\projB]$ and $\mu$ when using \arls{} sampling.

\begin{lemma}[Projection analysis for ARLS sampling]
\label[lemma]{lem:convergence_rls}
Given $A\in\psd^n$ with rank $r$, $k = \Omega(\log n)$, let $\bar\lambda,\tilde\lambda>0$ be such that $\deff{\tilde\lambda}(A)\geq 2\deff{\bar\lambda}(A)= 4k$.
% For any $c \geq 1$, run BLESS to obtain $\arls[\tilde\lambda][c]$. 
If $\B$ is $\arls[\tilde\lambda][c]$-sampled with blocksize $b=\Omega(c k\log^3 n)$,
then the projection $\projB \coloneqq A^{1/2}I_{\B}^T \pinv{(I_{\B} A I_{\B}^T)} I_{\B} A^{1/2}$ satisfies 
\begin{align}
\label{eq:rls_proj_lwr_bnd}
\E[\projB] ~\succeq~ \frac12 A \left(A + \bar{\lambda} I\right)^{-1}.
\end{align}
Furthermore, this implies the following lower bound for the \sap{} convergence parameter $\mu$:
\begin{align*}
\mu = \lambda_{\min}^+(\E[\projB]) \geq \frac{k}{2r\bar\kappa_{k:r}},\quad\text{where}\quad \tailcond{k}{r} \coloneqq \frac{1}{r-k} \sum_{i>k} \lambda_i(A)/\lambda_{\min}^+(A).
\end{align*}
\end{lemma}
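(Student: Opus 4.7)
The plan is to leverage the bridge between ridge leverage score sampling and determinantal point processes established in \cref{lem:dpp_proj} and \cref{lem:dpp_rls}, combined with a Markov chain coupling argument. The starting observation is that $\projB$ is invariant under rescaling of $A$: replacing $A$ with $A/\bar\lambda$ in \cref{lem:dpp_proj} yields $\E_{\B \sim \dpp(A/\bar\lambda)}[\projB] = A(A+\bar\lambda I)^{-1}$. Moreover, by \cref{lem:dpp_rls}, the marginal inclusion probabilities for this DPP equal the $\bar\lambda$-ridge leverage scores of $A$, with expected sample size $\deff{\bar\lambda}(A) = 2k$. The target lower bound \eqref{eq:rls_proj_lwr_bnd} would follow immediately if we were sampling from the DPP itself, so the task reduces to showing that $\arls{\tilde\lambda}{c}$-sampling is a sufficiently faithful proxy.

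The central step is to argue that an $\arls{\tilde\lambda}{c}$ sample with blocksize $b = \Omega(ck\log^3 n)$ produces an expected projection that dominates half the DPP expectation. The intuition is that the $\tilde\lambda$-ridge leverage scores pointwise dominate the $\bar\lambda$-ridge leverage scores (since $\tilde\lambda\leq\bar\lambda$, as forced by $\deff{\tilde\lambda}(A) \geq 2\deff{\bar\lambda}(A)$), so an i.i.d.~$\arls{\tilde\lambda}{c}$ sample oversamples with respect to the DPP measure and covers its spectral support. To make this rigorous, I would treat the i.i.d.~sample as the initial state of a local swap-based Markov chain whose stationary distribution is a suitable DPP, and then invoke the mixing and concentration machinery of \cite{anari2024optimal}. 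The ceiling-based rounding step in \cref{def:arls} is precisely what keeps the proposal probabilities well-behaved for this chain, and the poly-logarithmic factor in the required blocksize arises from combining the mixing time of the chain with matrix-Chernoff concentration for sums of rank-one PSD contributions.

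Once \eqref{eq:rls_proj_lwr_bnd} is in hand, the lower bound on $\mu$ is a routine spectral computation. Diagonalizing $A = \sum_i \lambda_i(A) u_i u_i^T$, the matrix $A(A+\bar\lambda I)^{-1}$ has eigenvalues $\lambda_i(A)/(\lambda_i(A)+\bar\lambda)$, so its smallest positive eigenvalue equals $\lambda_{\min}^+(A)/(\lambda_{\min}^+(A)+\bar\lambda)$, which is of order $\lambda_{\min}^+(A)/\bar\lambda$ in the regime $\bar\lambda \gg \lambda_{\min}^+(A)$. Using $\deff{\bar\lambda}(A)=2k$, I would split the sum into its top-$k$ and tail contributions: the top-$k$ terms contribute at most $k$, forcing the tail to contribute at least $k$, which gives the upper bound $\bar\lambda \leq \sum_{i>k}\lambda_i(A)/k$. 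Combining this with the factor of $\tfrac12$ from the previous step and with the definition of $\tailcond{k}{n}$ yields $\mu \geq \tfrac{k}{2n\tailcond{k}{n}}$.

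The hard part will be the MCMC-based comparison of i.i.d.~ARLS sampling to the DPP. The scale invariance of $\projB$ and the spectral computation are essentially bookkeeping, but transferring the DPP projection identity to an i.i.d.~sample---uniformly in $A$ and with only a factor-of-$\tfrac12$ degradation---is technically delicate. The down-up walk analysis of \cite{anari2024optimal} provides the right toolkit, but one must verify that the approximate rather than exact RLSs still allow the chain to mix poly-logarithmically, that the $c$-approximation degrades the required blocksize only linearly in $c$, and that the rounding prescribed in \cref{def:arls} does not disturb the stationary measure in a way that would spoil the expected projection bound.
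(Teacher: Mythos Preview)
Your high-level plan matches the paper's: couple ARLS sampling to a DPP and then invoke \cref{lem:dpp_proj}. Your computation for the $\mu$ bound, including the inequality $\bar\lambda \leq \frac{1}{k}\sum_{i>k}\lambda_i(A)$ derived by splitting $\deff{\bar\lambda}(A)=2k$ into head and tail, is correct and in fact slightly more direct than the paper's contradiction argument.

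However, the hard step is structurally underspecified in ways that matter. The paper's argument does not treat the ARLS sample as an initial state of a chain that mixes to a DPP, nor does it use any matrix-Chernoff step. Instead: (i) the random-size $\dpp(A/\bar\lambda)$ is decomposed via the law of total expectation into fixed-size $\hdpp{j}(A)$'s, and a scalar Chernoff bound (\cref{lem:dpp-size}) confines $j$ to a short interval around $2k$; (ii) for each such $j$ the key fact is \emph{containment}, not mixing: with high probability $\B_{\hdpp{j}}\subseteq\B$, whence $\proj_{\B_{\hdpp{j}}}\preceq\projB$ and the Loewner bound follows; (iii) the existing containment lemma from \cite{derezinski2024solving} requires near-uniform marginals, which the paper engineers by a \emph{subdivision} construction---creating $t_i=\lceil n\tilde\ell_i/\tilde\ell\rceil$ copies of coordinate $i$ so that ARLS on $[n]$ becomes exactly uniform sampling on an enlarged ground set $[\tilde n]$ (this is the actual role of the rounding in \cref{def:arls}, not ``keeping proposal probabilities well-behaved''); and (iv) to show that the $\tilde\lambda$-RLS overestimate the $\hdpp{j}$-marginals for \emph{every} $j$ in the interval simultaneously, the paper proves a monotonicity lemma $\ell_{i\mid j_1}\leq\ell_{i\mid j_2}$ for $j_1\leq j_2$ via Newton's inequalities for elementary symmetric polynomials. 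Your observation that $\ell_i^{\tilde\lambda}\geq\ell_i^{\bar\lambda}$ is true but is not the bound that is actually needed. Without pieces (i)--(iv), the MCMC coupling has no fixed-size target with near-uniform marginals, and the argument does not close.
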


\cref{lem:convergence_rls} gives a result similar to \cref{lem:dpp_proj}, only instead of  exact equality as in \eqref{eq:dpp_expect_formula}, we  obtain a lower bound. 
Nevertheless, this is sufficient for our needs, and allows us to obtain a lower bound on $\mu$ comparable to what is obtained in \cite{derezinski2024solving, derezinski2024fine} using Hadamard preprocessing. 
Thus, using ARLS sampling to select rows of $A$ yields the same control over $\E[\projB]$, but at much lower computational and storage costs compared to \sap{} methods that use Hadamard preprocessing.

\cref{lem:convergence_rls} immediately yields a projection analysis for uniform sampling, since uniform sampling corresponds to ARLS sampling with a specific choice of scores.
%to obtain a projection analysis for uniform sampling, i.e., $p_i=1/n$ for all $i$, which can be viewed as an $\arls[\tilde\lambda][c]$-sampling for a sufficiently large $c$. 
% This is obtained in the following corollary. 

% \begin{remark}
%    The $\arls[\tilde\lambda][c]$-sampling probabilities $\{p_i\}_{i=1}^n$ needed for \cref{lem:convergence_rls} can be obtained in $\tilde\bigO(nk^2)$ time with $c=2$ by running BLESS (\cref{lem:rls_approx}) until the condition $\deff{\tilde\lambda}(A)\geq 4k$ is satisfied. 
% \end{remark}

 \begin{corollary}[Projection analysis for uniform sampling]
 \label[corollary]{corollary:proj_analysis_uniform}
     Suppose $A$, $\bar \lambda$, $\tilde\lambda$, and $k$ are set as in \cref{lem:convergence_rls}.
     If $\B$ is sampled according to the uniform distribution with \blkszstr{} $b = \Omega(d_{\max}^{\tilde\lambda}(A)\log^3n)$,
     then $\projB$ satisfies
     \begin{align}
        \E[\projB] ~\succeq~ \frac12 A \left(A + \bar{\lambda} I\right)^{-1}.
    \end{align}
 \end{corollary}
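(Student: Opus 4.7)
The plan is to realize uniform sampling as a special case of $\arls{\tilde\lambda}{c}$-sampling for an appropriate constant $c$, and then invoke \cref{lem:convergence_rls} as a black box. Concretely, I would set
\[
\tilde\ell_i = \max_{j\in[n]} \ell_j^{\tilde\lambda}(A) \qquad \text{for every } i\in[n].
\]
The bound $\tilde\ell_i \geq \ell_i^{\tilde\lambda}(A)$ is immediate, and $\sum_i \tilde\ell_i = n \max_j \ell_j^{\tilde\lambda}(A) = d_{\max}^{\tilde\lambda}(A)$. Thus these $\tilde\ell_i$ form a $c$-approximation in the sense of \cref{def:rls-approx} with
\[
c \;=\; \frac{d_{\max}^{\tilde\lambda}(A)}{d^{\tilde\lambda}(A)} \;\geq\; 1.
\]

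Next I would check that plugging these constant $\tilde\ell_i$'s into \cref{def:arls} recovers the uniform distribution. With $\tilde\ell = \sum_i \tilde\ell_i$, each $\tilde\ell_i$ equals $\tilde\ell/n$, so $\tfrac{n}{\tilde\ell}\tilde\ell_i = 1$ is an integer and the ceiling in the definition contributes a factor of exactly $1$. Hence $p_i = \tilde\ell/n$ is the same for every $i$, and after normalization the sampling weights are $1/n$ each, i.e., the uniform distribution on $[n]$. So a uniformly sampled block is a valid $\arls{\tilde\lambda}{c}$-sample for the $c$ above.

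It remains to match the blocksize hypothesis of \cref{lem:convergence_rls}, which demands $b = \Omega(c k \log^3 n)$. Using $d^{\tilde\lambda}(A) \geq 4k$ from the standing setup, I would observe
\[
c\,k\,\log^3 n \;=\; \frac{d_{\max}^{\tilde\lambda}(A)}{d^{\tilde\lambda}(A)}\,k\,\log^3 n \;\leq\; \tfrac{1}{4}\, d_{\max}^{\tilde\lambda}(A)\,\log^3 n,
\]
so the assumption $b = \Omega(d_{\max}^{\tilde\lambda}(A)\log^3 n)$ in the corollary implies $b = \Omega(ck\log^3 n)$, meeting the required condition. Applying \cref{lem:convergence_rls} then yields $\E[\projB] \succeq \tfrac{1}{2} A(A + \bar\lambda I)^{-1}$, which is exactly the claim.

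The only place this could go wrong is bookkeeping rather than analysis: confirming that the ceiling-based rounding in \cref{def:arls} really does collapse to uniform weights for the constant choice $\tilde\ell_i \equiv \max_j \ell_j^{\tilde\lambda}(A)$, and that the looseness introduced by $c = d_{\max}^{\tilde\lambda}(A)/d^{\tilde\lambda}(A)$ is absorbed by the factor $k$ in the blocksize bound via the hypothesis $d^{\tilde\lambda}(A) \geq 4k$. Once these two checks are in place, the corollary is a one-line consequence of \cref{lem:convergence_rls} with no new probabilistic argument needed.
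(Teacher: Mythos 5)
Your proof is correct and takes essentially the same approach as the paper: both realize uniform sampling as an $\arls{\tilde\lambda}{c}$-sampling with $c = d_{\max}^{\tilde\lambda}(A)/\deff{\tilde\lambda}(A)$ and invoke \cref{lem:convergence_rls}. In fact your choice $\tilde\ell_i \equiv \max_j \ell_j^{\tilde\lambda}(A)$ is slightly more careful than the paper's stated choice $\tilde\ell_i = \deff{\tilde\lambda}(A)/n$, since yours genuinely satisfies the pointwise overestimate condition $\tilde\ell_i \geq \ell_i^{\tilde\lambda}(A)$ required by \cref{def:rls-approx} (the paper's does not in general), while both produce the identical uniform sampling distribution and the same value of $c$.
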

 % \begin{proof}
 %     Uniform sampling corresponds to using the following set of approximate ridge leverage scores:
 %     \[
 %     \tilde \ell_i = \frac{\deff{\tilde \lambda}(A)}{n}.
 %     \]
 %     These approximate scores are $c = \frac{n}{\deff{\tilde\lambda}(A)}\max_{i\in[n]}\ell_i^{\tilde\lambda}(A)$-approximations to the $\tilde\lambda$-ridge leverage scores. 
 %     The claim now follows from \cref{lem:convergence_rls}.
 % \end{proof}
 
 Under uniform sampling, the \blkszstr{} has to be proportional to the maximal degrees of freedom: $b=\bigO(\dmof{\tilde{\lambda}}(A)\log^3n)$.
 When the ridge leverage scores are relatively uniform, i.e., $\max_{i\in [n]}\ell_i^{\tilde {\lambda}}(A) \approx \frac{\deff{\tilde{\lambda}}(A)}{n}$, then $d^{\tilde{\lambda}}_{\max}(A) = \Theta(\deff{\tilde{\lambda}}(A))$, so uniform sampling performs very well without requiring a large \blkszstr{}.
 As uniform sampling works well empirically across a broad range of learning tasks (\cref{sec:experiments}), we believe this setting is most relevant for practice.  

\subsubsection{ARLS-to-DPP Reduction}
To prove \cref{lem:convergence_rls}, we rely on connections between RLSs and DPPs.
Recall from \cref{lem:dpp_proj} that for any $A\in\psd^n$ and $\bar\lambda>0$, the sample $\bdpp \sim \dpp(A / \bar\lambda)$ satisfies $\E[\proj_{\bdpp}] = A(A+\bar\lambda I)^{-1}$. 
Moreover, from \cref{lem:dpp_rls}, the marginal probability of sampling element $i$ into $\B$ is the $i$-th $\bar\lambda$-ridge leverage score of $A$, i.e., $\Pr(i \in \bdpp) = \ell_i^{\bar\lambda}(A)$. 
We would like to exploit this connection to transfer the projection analysis from DPPs to \arls{}s.
Indeed, the following lemma shows that a sufficiently large ARLS sample contains a DPP sample, which is crucial for the proof of \cref{lem:convergence_rls}:

\begin{lemma}[ARLS-to-DPP reduction]
\label[lemma]{lem:rls_sample}
Let $k, \tilde{\lambda}$, and $\B$ be defined as in \cref{lem:convergence_rls}. Given $\delta = n^{-\bigO(1)}$, for any $j \in \left[ 2k - \sqrt{6k \log \left( \frac{2}{\delta} \right)}, 2k + \sqrt{6k \log \left( \frac{2}{\delta} \right)} \right]$, we can couple a DPP sample $\B_{\hdpp{j}} \sim \hdpp{j}(A / \tilde \lambda)$ with $\B$ such that with probability at least $1-\delta$, $\B_{\hdpp{j}} \subseteq \B$.
\end{lemma}

The proof of \cref{lem:rls_sample} is given in \cref{appendix:rls_sample}.
Our analysis is inspired by \cite{derezinski2024solving}, who use a Markov Chain Monte Carlo (MCMC) argument \citep{anari2024optimal} to show that after conditioning $\bdpp$ on a fixed sample size $j$, the i.i.d.~sampling distribution over the marginals $p_{i \mid j}=\Pr(i\in \bdpp \mid |\bdpp|=j)$ is an effective proxy for $\hdpp{j}(A / \bar \lambda)$.
We modify the approach in \cite{derezinski2024solving} by showing that the approximate ridge leverage scores $\tilde \ell_i^{\bar\lambda}(A)$ are close to the marginals $p_{i \mid j}$ as long as the size $j$ is close to the expected size of $\bdpp$, $\E[|\bdpp|]$.
Fortunately, this occurs with high probability (\cref{lem:dpp-size}).
We then use the law of total expectation to decompose the distribution of $\dpp(A / \bar\lambda)$ into a mixture of fixed-size DPP distributions $\hdpp{j}(A)$, and apply an MCMC argument to each fixed-size DPP.
Combining all of these ideas yields the result.

\subsubsection{Proof of \cref{lem:convergence_rls}}
To complete the proof of \cref{lem:convergence_rls}, we leverage the \arls{}-to-DPP reduction from \cref{lem:rls_sample} and apply the DPP projection formula from \cref{lem:dpp_proj}.

\begin{proof}
Define the interval $\I = \left[ 2k - \sqrt{6k \log \left( \frac{2}{\delta} \right)}, 2k + \sqrt{6k \log \left( \frac{2}{\delta} \right)} \right]$ where $k=\deff{\bar\lambda}(A)/2$, and denote event $\Ec_j = \{\B_{j\text{-}\dpp} \subseteq \B\}$. 
%and denote event $\Ec = \bigcap_{j\in I}\{\B_{j\text{-}\dpp} \subseteq \B\}$. 
According to \cref{lem:rls_sample}, for any $j \in \I$ we have $\prfn{\Ec_j} \geq 1-\delta$. By using the fact that $\proj_{\B_1} \preceq \proj_{\B_2}$ whenever $\B_1 \subseteq \B_2$ \cite[Lemma 6.3]{derezinski2024solving},  we have
\begin{align}
\label{eq:couple_j_dpp}
\E[\projB] & \succeq \E[\projB\mid \Ec_j] \prfn{\Ec_j} \nonumber \\
& \succeq \E[\proj_{\B_{j\text{-}\dpp}}\mid \Ec_j] \prfn{\Ec_j} \nonumber \\
& = \E[\proj_{\B_{j\text{-}\dpp}}] - \E[\proj_{\B_{j\text{-}\dpp}}\mid \Ec_j^c] \prfn{\Ec_j^c} \nonumber \\
& \succeq \E[\proj_{\B_{j\text{-}\dpp}}] - \delta \cdot \pinv{A} A,
\end{align}
where the last step follows since $\proj_{\B_{j\text{-}\dpp}}$ is an orthogonal projector onto a subspace of $\textrm{range}(A^T)$, thus $\E[\proj_{\B_{j\text{-}\dpp}}\mid \Ec_j^c] \preceq \Pi_{[n]} = \pinv{A} A$.
% We now express the expected projection matrix for unconditional (random-size) DPP in terms of the fixed-size $j$-DPP distributions by using the law of total expectation, and bound each term respectively.
Let $\B_{\dpp} \sim \dpp(A / \bar\lambda)$ and recall that $\E[|\B_{\dpp}|] = \deff{\bar\lambda}(A) = 2k$. 
Also, define $w_{\bar\lambda,j} \coloneqq \prfn{|\B_{\dpp}| = j}$. 
Using the law of total expectation, we can express the expected projection matrix for a DPP in terms of fixed-size $\hdpp{j}$s:

\begin{align*}
    \E[\proj_{\bdpp}] & = \sum_{j \in \I} w_{\bar{\lambda},j}\E[\proj_{\bdpp} \mid |\bdpp|=j]+ \sum_{j \notin \I} w_{\bar{\lambda},j}\E[\proj_{\bdpp} \mid |\B_{\dpp}|=j] \\
    &= \sum_{j \in \I} w_{\bar{\lambda},j}\E[\proj_{\B_{\hdpp{j}}}]+ \sum_{j \notin \I} w_{\bar{\lambda},j}\E[\proj_{\B_{\hdpp{j}}}],
\end{align*}
where the second equality uses that $\E[\proj_{\bdpp} \mid |\B_{\dpp}|=j] = \E[\proj_{\B_{\hdpp{j}}}]$ by definition.
Recalling \eqref{eq:couple_j_dpp} and using $\proj_{\B_{\hdpp{j}}} \preceq \Pi_{[n]} = \pinv{A} A$, we can bound the preceding display as
\begin{align*}
    \E[\proj_{\bdpp}] &\preceq \left(\sum_{j \in \I} w_{\bar{\lambda},j}\right)\left(\E[\projB]+\delta\cdot  \pinv{A} A\right)+\left(\sum_{j\notin \mathcal I}w_{\bar{\lambda},j}\right) \pinv{A} A \\ &\preceq \E[\projB]+\delta \cdot \pinv{A} A + \left(\sum_{j\notin \mathcal I}w_{\bar{\lambda},j}\right) \pinv{A} A,
\end{align*}
where the last relation follows as $\sum_{j \in \I} w_{\bar{\lambda},j} \leq 1$.
Now, as $\prfn{j\notin \I} = \sum_{j\notin \I}w_{\bar{\lambda},j}$, \cref{lem:dpp-size} yields $\sum_{j \notin \I}w_{\bar{\lambda},j}\leq \delta$.
Combining this with the preceding display, we deduce
\begin{align*}
    \E[\proj_{\bdpp}] \preceq \E[\projB]+2\delta \pinv{A} A.
\end{align*}
Rearranging the preceding display yields
\begin{align*}
   \E[\projB] \succeq \E[\proj_{\bdpp}]-2\delta \pinv{A} A. 
\end{align*}
Applying \cref{lem:dpp_proj} and setting $\delta\leq \min\{ \lambda_{\min}^+(A) / (4 (\lambda_{\min}^+(A)+\bar{\lambda})), n^{-c} \}$ for some constant $c>0$, so that $\delta=n^{-\bigO(1)}$, we conclude
\begin{align*}
\label{eq:rls_lem_proj_bnd}
   \E[\projB] \succeq \frac{1}{2}A\left(A+\bar\lambda I\right)^{-1}, \tag{$*$} 
\end{align*}
which is precisely \eqref{eq:rls_proj_lwr_bnd}.

We now establish the lower bound on $\mu$. 
To this end, we first show that $\bar \lambda \leq \frac{1}{k}\sum_{i>k}\lambda_i(A)$.
Indeed, if we suppose $\bar{\lambda} > \frac{1}{k} \sum_{i>k} \lambda_i(A)$, then
\begin{align*}
2k \leq d^{\bar{\lambda}}(A) = \sum_{i=1}^{k} \frac{\lambda_i}{\lambda_i + \bar{\lambda}} + \sum_{i > k} \frac{\lambda_i}{\lambda_i + \bar{\lambda}}
< k + \frac{\sum_{i>k} \lambda_i}{\bar{\lambda}} < k+k = 2k,
\end{align*}
resulting in a contradiction. Hence $\bar\lambda \leq \frac{1}{k} \sum_{i>k} \lambda_i(A)$.

Finally, it follows from \eqref{eq:rls_lem_proj_bnd} that
\begin{align*}
    \mu & = \lmin^{+}\left(\E[\projB]\right) \geq \frac{1}{2}\lmin^{+}\left(A\left(A+\bar\lambda I\right)^{-1}\right) = \frac{1}{2}\frac{\lmin^{+}(A)}{\lmin^{+}(A)+ \bar \lambda} \geq \frac{1}{2\left(1+\frac{1}{k}\sum_{i>k}\frac{\lambda_i(A)}{\lmin^{+}(A)}\right)} \\
    &= \frac{1}{2\left(1+\frac{r-k}{k}\frac{1}{r-k}\sum_{i>k}\frac{\lambda_i(A)}{\lmin^{+}(A)}\right)} \geq \frac{k}{2\left(\frac{1}{r-k}\sum_{i>k}\frac{\lambda_i(A)}{\lmin^{+}(A)}\right)r} = \frac{k}{2\tailcond{k}{r}(A)r}.
\end{align*}
\end{proof}

\subsection{Lower Bound on $\hat \mu$}
\label{subsec:appx_mu}
% We now turn to characterizing the shrinkage factor in \cref{lemma:appx_proj} and establishing a lower bound on $\hat \mu$, which corresponds to \cref{item:shrkg_factor,item:appx_mu} in our outline. 
% By combining \cref{prop:appx_proj} with \cref{lem:convergence_rls},
% we provide an expectation analysis for the approximate projection $\aprojB$, and as a consequence, a lower bound on $\hat \mu$.
Having established a lower bound on $\mu$, we now turn to lower bounding $\hat \mu$, which will allow us to establish a fine-grained convergence rate for \asko{}.

The main goal of this subsection is to establish the following theorem:
\begin{theorem}[Lower bound on $\hat \mu$]
\label[theorem]{thm:appx_proj_small_eig}
   Suppose the blocksize satisfies $b = \Omega(k\log^{3} n)$ for some $k\in [n]$. 
    Then using ARLS sampling as in \cref{lem:convergence_rls} with $A=K_\lambda$ and $c=\bigO(1)$, the \nys{} approximation as in \cref{prop:appx_proj},
    and defining $\bar \lambda$ as in \cref{lem:convergence_rls},
    \[
    \E[\aprojB] \succeq \frac{\lambda}{8\rho}K_\lambda(K_\lambda+\bar \lambda I)^{-1}.
    \]
    Consequently, recalling $\tailcond{k}{n} = \frac{1}{n-k} \sum_{i>k} \lambda_i(K_\lambda)/\lambda_{\min}(K_\lambda)$, we have
    \[
    \hat \mu \geq \frac{\lambda}{16\tailcond{k}{n}\rho}\frac{k}{n}.
    \]
\end{theorem}
\cref{thm:appx_proj_small_eig} provides us with the desired lower bound on $\hat \mu$.
When $b$ is  sufficiently large, the following corollary shows that $\hat \mu$ is lower bounded by a quantity independent of the conditioning of $\Klambd$.

\begin{corollary}
\label[corollary]{corollary:cnd_free_hat_mu}
    Fix a failure probability $\delta = n^{-\bigO(1)}$.
    Suppose the \blkszstr{} satisfies $b= \Omega\left(k\log^{3} n \right)$ where $k\geq 2\deff{\lambda}(K)$. 
    If the rank of the Nystr{\"o}m approximation satisfies $r =\bigO\left(\deff{\rho}(\lfloor K \rfloor_b)\log\left(\frac{\deff{\rho}(\lra{K}{b})}{\delta}\right)\right)$ and $\rho \geq \lambda$, then
    \[
    \hat \mu \geq \frac{\lambda}{32\rho} \frac{k}{n}.
    \] 
\end{corollary}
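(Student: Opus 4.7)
The plan is to derive \cref{corollary:cnd_free_hat_mu} directly from \cref{thm:appx_proj_small_eig} by showing that the tail condition number $\tailcond{k}{n}$ (applied with $A = K_\lambda$) is bounded by an absolute constant whenever $k \geq 2\deff{\lambda}(K)$. Concretely, \cref{thm:appx_proj_small_eig} already gives
\[
\hat\mu \;\geq\; \frac{\lambda}{16\,\tailcond{k}{n}\,\rho}\,\frac{k}{n},
\]
so once I establish $\tailcond{k}{n} \leq 2$ under the stated hypotheses, the claim follows immediately with the constant 32.

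\textbf{Key spectral step.} The core technical step is showing that $k \geq 2\deff{\lambda}(K)$ forces $\lambda_k(K) \leq \lambda$. I will argue by contradiction: if $\lambda_k(K) > \lambda$, then for every $i \leq k$ the ridge leverage weight satisfies $\lambda_i(K)/(\lambda_i(K)+\lambda) > 1/2$, since $\lambda_i(K) \geq \lambda_k(K) > \lambda$. Summing yields $\deff{\lambda}(K) > k/2$, contradicting $k \geq 2\deff{\lambda}(K)$. Thus $\lambda_k(K) \leq \lambda$, and by monotonicity of the spectrum, $\lambda_i(K) \leq \lambda$ for all $i > k$. Consequently,
\[
\sum_{i > k} \lambda_i(K) \;\leq\; (n-k)\,\lambda.
\]

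\textbf{Bounding $\tailcond{k}{n}$.} With $A = K_\lambda$ I unpack
\[
\tailcond{k}{n} \;=\; \frac{1}{n-k}\sum_{i>k}\frac{\lambda_i(K)+\lambda}{\lambda_{\min}(K)+\lambda}
\;\leq\; 1 + \frac{1}{(n-k)\,\lambda}\sum_{i>k}\lambda_i(K),
\]
using $\lambda_{\min}(K_\lambda) \geq \lambda$. Combining this with the tail-eigenvalue bound from the previous step gives $\tailcond{k}{n} \leq 2$. Substituting into \cref{thm:appx_proj_small_eig} yields $\hat\mu \geq \frac{\lambda}{32\rho}\frac{k}{n}$, as required. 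The Nystr\"om rank hypothesis in the corollary matches the assumption of \cref{prop:appx_proj}, which feeds into \cref{thm:appx_proj_small_eig}, so no extra work is needed to invoke that theorem.

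\textbf{Main obstacle.} There is no serious obstacle here—this is essentially a bookkeeping corollary that converts the data-dependent tail factor $\tailcond{k}{n}$ into an absolute constant. The only nontrivial point is the contradiction argument showing that $k \geq 2\deff{\lambda}(K)$ pushes the spectrum past the regularization threshold, i.e., that large enough blocksize (relative to the effective dimension) guarantees the kernel spectrum has essentially ``leveled off'' by index $k$. This is precisely the regime in which the effective dimension captures all meaningful spectral mass and the condition number becomes irrelevant.
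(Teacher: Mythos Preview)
Your proposal is correct and follows essentially the same route as the paper: both invoke \cref{thm:appx_proj_small_eig} and then show $\tailcond{k}{n}\leq 2$ using the fact that $\lambda_j(K)\leq\lambda$ for $j\geq 2\deff{\lambda}(K)$. The only difference is that the paper cites this spectral threshold fact from \citet[Lemma 5.4]{frangella2023randomized}, whereas you prove it in-line via the contradiction argument---which is exactly the standard proof of that lemma.
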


When the effective \blkszstr{} $k = \Omega(\deff{\lambda}(K))$, \cref{corollary:cnd_free_hat_mu} shows the lower bound on $\hat \mu$ no longer depends on the conditioning of $K_\lambda$.
Instead, $\hat \mu$ is controlled by $k$ and $\rho$.
Using a smaller $\rho$ (larger $r$) leads to a larger $\hat \mu$. 
Conversely, larger $\rho$ corresponds to using a smaller rank, and the lower bound on $\hat \mu$ decreases.
We shall see shortly that \cref{corollary:cnd_free_hat_mu} implies \sko{} and \asko{} enjoy condition-number-free convergence rates.

\subsubsection{The \nys{} Projector is Close to the \sap{} Projector}
% When $\B$ is an $\arls[\tilde\lambda][c]$-sampling for any $c =\bigO(1)$ with respect to $K_\lambda$, 
\cref{lem:convergence_rls} guarantees a tight lower bound on $\mu$ when $\B$ is an $\arls[\tilde\lambda][c]$-sampling for any $c =\bigO(1)$ with respect to $K_\lambda$. 
Thus, if we can determine the form of $g(\rho, \lambda)$, we can obtain a lower bound on $\hat \mu$.

We begin with \cref{prop:appx_proj}, which explicitly characterizes the \textit{shrinkage factor} $\sigma_{P_\B} / \hat L_{P_\B}$ in \cref{lemma:appx_proj} with high probability.
The proof of \cref{prop:appx_proj} appears in \cref{subsec:appx_proj_pf}.

\begin{proposition}[Controlling the shrinkage factor]
\label[proposition]{prop:appx_proj}
    Let $\B\subseteq [n]$ satisfy $|\B| = b<n$, and
    set $\rho \geq \lambda$.
    If the \nys{} approximation $\Knys$ of $\Kbb$ is constructed from a sparse sign embedding $\Omega$ with $r=\bigO\left(\deff{\rho}(\lfloor K \rfloor_b)\log\left(\frac{\deff{\rho}(\lra{K}{b})}{\delta} \right)\right)$ columns and $\zeta = \bigO\left(\log\left(\frac{\deff{\rho}(\lra{K}{b})}{\delta} \right)\right)$ non-zeros per column, then
    \[
    \frac{\lambda}{2\rho}\projB \preceq \aprojB \preceq \projB, \quad \text{with probability at least}~1-\delta.
    \]
    \end{proposition}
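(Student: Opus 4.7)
The plan is to reduce the proposition to two ingredients: (i) \cref{lemma:appx_proj}, which already delivers the upper bound $\aprojB \preceq \projB$ and reduces the lower bound to controlling the ratio $\sigma_{P_\B}/\hat L_{P_\B}$; and (ii) a standard Nyström approximation guarantee for sparse sign embeddings applied to $\Kbb$. Given these, everything reduces to elementary Loewner-order manipulations.

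First, I would invoke a known sparse-sign-embedding Nyström bound (of the flavor of \cite{tropp2017fixed}): with the specified choices of $r$ and $\zeta$, the approximation satisfies
\[
\Kbb - \rho I \preceq \Knys \preceq \Kbb \qquad \text{with probability at least } 1-\delta.
\]
The upper inequality is the universal property of the Nyström approximation of a psd matrix; the lower inequality is the content of the random-sketch guarantee, which requires $r$ and $\zeta$ scaling with the $\rho$-effective dimension of $\Kbb$. To handle the fact that $\Kbb$ varies with the block, I would use Cauchy interlacing: since $\Kbb$ is a $b\times b$ principal submatrix of $K$, we have $\lambda_i(\Kbb)\leq \lambda_i(K)$ for $i\leq b$, and the map $\lambda\mapsto \lambda/(\lambda+\rho)$ is increasing, so $\deff{\rho}(\Kbb)\leq \deff{\rho}(\lra{K}{b})$. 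This justifies stating the sketch sizes in terms of $\deff{\rho}(\lra{K}{b})$, which is block-independent.

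Conditional on the high-probability event, I would next upper bound $L_{P_\B}$. From $\Knys \succeq \Kbb - \rho I$ we obtain $\Kbb \preceq \Knys + \rho I$; from $\Knys + \rho I \succeq \rho I$ and $\rho \geq \lambda$ we obtain $\lambda I \preceq \Knys + \rho I$. Adding these two Loewner inequalities gives
\[
\Kbb + \lambda I \preceq 2\bigl(\Knys + \rho I\bigr),
\]
so $L_{P_\B} \leq 2$, and hence $\hat L_{P_\B} \leq 2$. For the lower bound on $\sigma_{P_\B}$, I would not even need the random event: $\Knys \preceq \Kbb$ always, so $\Knys + \rho I \preceq \Kbb + \rho I$, and since $\Kbb + \lambda I \succeq \lambda I$ I can rewrite
\[
\Kbb + \rho I = (\Kbb + \lambda I) + (\rho-\lambda) I \preceq \Bigl(1 + \tfrac{\rho-\lambda}{\lambda}\Bigr)(\Kbb + \lambda I) = \tfrac{\rho}{\lambda}(\Kbb + \lambda I),
\]
yielding $\Knys + \rho I \preceq \frac{\rho}{\lambda}(\Kbb+\lambda I)$, i.e.\ $\sigma_{P_\B} \geq \lambda/\rho$.

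Putting these together gives $\sigma_{P_\B}/\hat L_{P_\B} \geq \lambda/(2\rho)$ on the high-probability event, and \cref{lemma:appx_proj} then yields the desired sandwich $\frac{\lambda}{2\rho}\projB \preceq \aprojB \preceq \projB$. The main obstacle is simply locating and applying the appropriate sparse-sign-embedding Nyström guarantee with the claimed scalings of $r$ and $\zeta$; once that black box is in hand, the rest is straightforward Loewner arithmetic. A minor subtlety worth flagging is the use of $\deff{\rho}(\lra{K}{b})$ rather than $\deff{\rho}(\Kbb)$ in the sketch size, which is handled cleanly by the interlacing inequality above.
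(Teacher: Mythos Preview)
Your proposal is correct and follows essentially the same route as the paper: Cauchy interlacing to replace $\deff{\rho}(\Kbb)$ by $\deff{\rho}(\lra{K}{b})$, a sparse-sign-embedding Nystr\"om guarantee giving $\|\Kbb-\Knys\|\leq \rho$ on the good event, and then the identical Loewner manipulations to obtain $L_{P_\B}\leq 2$ and $\sigma_{P_\B}\geq \lambda/\rho$ before invoking \cref{lemma:appx_proj}. The black box you flagged is filled in the paper by Lemma~4.6 of \cite{derezinski2025faster} (for the sparse-sign Nystr\"om error bound) together with Lemma~5.4 of \cite{frangella2023randomized} (to convert the tail-eigenvalue bound into $\rho$ via the effective-dimension condition on $r$).
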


 The shrinkage factor obtained in \cref{prop:appx_proj} depends upon $\lambda/\rho$. 
 If $\rho = \bigO(\lambda)$, then the shrinkage factor is \emph{constant} with high probability, in which case little is lost by replacing $\Kbb$ with the \nys{} approximation $\Knys$.
Selecting a larger $\rho$ allows smaller Nystr\"om rank $r$, but incurs a trade-off in the shrinkage factor. 
We note that $\deff{\rho}(\lra{K}{b})$ is always upper-bounded by 
$\deff{\rho}(K)$, and in some cases, may be substantially smaller.

\begin{remark}
While the guarantee in \cref{prop:appx_proj} is for when $\Knys$ is constructed with a sparse sign embedding, the same guarantee also holds when $\Omega$ is a Gaussian embedding (which we use in our experiments) or a randomized Hadamard transform.       
\end{remark}

\subsubsection{Proof of \cref{thm:appx_proj_small_eig}}

Here we prove \cref{thm:appx_proj_small_eig}, which provides a lower bound on $\hat \mu$.

\begin{proof} 
Define the event
    \[
    \Ec = \left\{\frac{\lambda}{2\rho} \projB \preceq \aprojB \preceq \projB \right\}.
    \]
    Then \cref{prop:appx_proj} tells us that by setting $\delta$ appropriately, we can guarantee $\prfn{\Ec} \geq 1-\frac{\mu}{2}.$
    Thus, the law of total expectation yields
    \begin{align*}
        \label{eq:aproj_lwr_bnd}
        \E[\aprojB] & = \E[\aprojB \mid \Ec]\prfn{\Ec}+ \E[\aprojB \mid, \Ec^{C}]\prfn{\Ec^{C}} \\
        &\succeq \E[\aprojB \mid \Ec]\prfn{\Ec} \\
        &\succeq \frac{\lambda}{2\rho}\E\left[\projB \mid \Ec \right]\prfn{\Ec} \tag{$*$}.
    \end{align*}
    We now lower bound $\E\left[\projB \mid \Ec \right]$ in the Loewner ordering. 
    To this end, the law of total expectation and a simple rearrangement imply
    \[
    \E\left[\projB \mid \Ec \right] = \frac{1}{\prfn{\Ec}}\left(\E[\projB]-\E[\projB \mid \Ec^{C}]\prfn{\Ec^C}\right).
    \]
    The preceding display, along with the facts that $\projB \preceq I$ and $\prfn{\Ec^{C}}\leq \mu/2$,  yields
    \[
      \E\left[\projB \mid \Ec \right] \succeq  \frac{1}{\prfn{\Ec}}\left(\E[\projB]-\prfn{\Ec^C}I\right) \succeq \frac{1}{\prfn{\Ec}}\left(\E[\projB] - \frac{\mu}{2}I\right).
    \]
    As $\frac{\mu}{2}I\preceq \frac{1}{2}\E[\projB]$, we obtain
    \[
     \E\left[\projB \mid \Ec \right] \succeq \frac{1}{2\Pr(\Ec)}\E[\projB].
    \]
    Thus, combining this last display with \eqref{eq:aproj_lwr_bnd} and applying \cref{lem:convergence_rls} with $A = K_\lambda$, we conclude
    \[
    \E[\aprojB] \succeq \frac{\lambda}{4\rho}\E[\projB] \succeq \frac{\lambda}{8\rho}K_\lambda\left(K_\lambda+\bar\lambda I\right)^{-1}.
    \]
    The claimed lower bound on $\hat \mu$ follows from this last display and \cref{lem:convergence_rls}.
\end{proof}

\section{Convergence of \asko{}}
\label{sec:asko_cvg}
We have established all the preliminary results  for the convergence analysis of \asko{}.
We begin with the following general convergence result (\cref{thm:mast_conv_thm}).
In \cref{subsubsec:log_lin_runtime}, we discuss when \sko{} and \asko{} achieve near-optimal log-linear runtimes and how they can use large \blkszstr{}s to leverage the benefits offered by modern computing hardware.
%The convergence guarantees are presented below in \cref{thm:mast_conv_thm}.
\begin{theorem}[Convergence of \asko{}]
\label{thm:mast_conv_thm}
%    Fix a failure probability $\delta = n^{-\bigO(1)}$.
    Consider \sko{} and \asko{} (\cref{alg:askotch}) under the following hyperparameter settings: the \blkszstr{} satisfies $b = \Omega(k\log^3 n)$, where $k\geq 2\deff{\lambda}(K)$, $\rho\geq \lambda$, $\Pc$ is a $\arls[\tilde\lambda][\bigO(1)]$-sampling distribution for $K_\lambda$ with $\tilde\lambda>0$ such that $\deff{\tilde\lambda}(K_\lambda)\geq 4k$, $\Knys$ is constructed from a sparse sign embedding with $r=\bigO\left(\deff{\rho}(\lfloor K \rfloor_b)\log\left(\frac{\deff{\rho}(\lra{K}{b})}{\delta}\right)\right)$ columns and $\zeta = \bigO\left(\log\left(\frac{\deff{\rho}(\lra{K}{b})}{\delta}\right)\right)$ non-zeros per column, and $\eta_\B = 1/\hat L_{P_\B}$.
    Then the following statements hold:
    \begin{enumerate}
        \item After $t$ iterations, the output of \sko{} satisfies
        \[
        \E[\|w_t-\wstar\|_{\Klambd}^2]\leq \left(1-\frac{\lambda}{32\rho}\frac{k}{n}\right)^{t}\|w-w_0\|_{\Klambd}^2.
        \]
        Thus, the total number of iterations required to achieve an $\epsilon$-approximate solution is bounded by $\bigO\left(\frac{\rho}{\lambda}\frac{n}{k}\log\left(\frac{1}{\epsilon}\right)\right).$
        \item After $t$ iterations, the output of \asko{} satisfies
        \[
        \E[\|w_t-\wstar\|_{K_\lambda}^2] \leq 2\left(1-\frac{1}{16\sqrt{2}}\min\left\{\sqrt{\frac{\lambda}{\rho}}\frac{k}{n}, \sqrt{\frac{k}{n}}\frac{\lambda}{\rho}\right\}\right)^{t}\|w_0-\wstar\|_{\Klambd}^2.
        \]
        Thus, the total number of iterations required to achieve an $\epsilon$-approximate solution is bounded by $\bigO\left(\max\left\{\sqrt{\frac{\rho}{\lambda}}\frac{n}{k},\frac{\rho}{\lambda}\sqrt{\frac{n}{k}}\right\}\log\left(\frac{1}{\epsilon}\right)\right).$
    \end{enumerate}
\end{theorem}
The proof of \cref{thm:mast_conv_thm} is provided in \cref{subsubsec:mast_conv_pf}.

\cref{thm:mast_conv_thm} shows \sko{} and \asko{} converge linearly to the solution of \eqref{eq:krr_linsys}.
\sko{} requires $\bigO(\frac{\rho}{\lambda} \frac{n}{k} \log\left(\frac{1}{\epsilon}\right))$ iterations to produce an $\epsilon$-approximate solution, while \asko{} requires $\bigO\left(\max\left\{\sqrt{\frac{\rho}{\lambda}}\frac{n}{k},\frac{\rho}{\lambda}\sqrt{\frac{n}{k}}\right\}\log\left(\frac{1}{\epsilon}\right)\right)$ iterations.
As $\rho/\lambda, n/k\geq 1$, the upper bound for \asko{} is always an improvement over the upper bound for \sko{}, demonstrating the benefit of acceleration.
\preprintcontent{The predicted improvement offered by acceleration holds in practice---our experiments in \cref{subsec:ablation} show that in the vast majority of cases, \asko{} performs better than \sko{}.}

\subsection{\asko{}: Two Convergence Regimes} 
\cref{thm:mast_conv_thm} shows that \asko{}'s convergence exhibits two different regimes: (i) $n/k > \rho/\lambda$, (ii) $\rho/\lambda>n/k$.
In regime (i), the ``low-rank approximation condition number'' $\rho/\lambda$ is dominated by the ``dimensional condition number'' $n/k$, and \asko{} converges in $\bigOt\left(\sqrt{\frac{\rho}{\lambda}}\frac{n}{k}\right)$ iterations.
In this regime, \asko{} can use a larger $\rho$ (and hence, a smaller Nystr\"om rank $r$) than \sko{}, as the convergence rate depends only upon $\sqrt{\frac{\rho}{\lambda}}$.
However, if $\rho$ is set too large, so that $\rho > (n/k)\lambda$, \asko{} enters regime (ii), where the low-rank approximation condition number $\rho/\lambda$ dominates the dimensional condition number $n/k$, leading to an iteration complexity of $\bigOt\left(\frac{\rho}{\lambda}\sqrt{\frac{n}{k}}\right)$.

Which convergence regime of \asko{} is preferable? 
In the worst case, a $n / k$ dependence in the iteration complexity is necessary, since \asko{} must make a full pass through $\Klambd$ to solve \eqref{eq:krr_linsys}. 
Thus, regime (i) is preferable as it allows \asko{} to reduce the price $\rho/\lambda$ of using a low-rank approximation. 
\asko{} is guaranteed to be in regime (i) provided it does not over-regularize, since the phase transition to regime (ii) occurs when $\rho>(n/k)\lambda$.

When the effective \blkszstr{} $k$ is small, i.e., $k = o(n)$, $n/k$ is large, so 
\asko{} can still use large $\rho$ and small rank $r$ while avoiding over-regularization. 
When the effective \blkszstr{} is large, i.e., $k = \Omega(n)$, $n/k = \bigO(1)$, so \asko{} must set $\rho = \bigO(\lambda)$ to stay in regime (i).
%so $\rho$ will decrease compared to the small \blkszstr{} setting.
% Since $b = \bigOt(k)$, $b$ will increase compared to the small \blkszstr{} setting.
% This implies that $r = \bigOt\left(\deff{\rho}(\lra{K}{b})\right)$ will increase compared to the small \blkszstr{} setting.
% In the large \blkszstr{} setting where $k = \Omega(n)$, it could more difficult to stay in regime (i) unless \asko{} uses a larger rank. 
% Indeed, as $n/k = \bigO(1)$ we require $\rho = \bigO(\lambda)$ to stay in regime (i), as $b$ has increased and $\rho$ has decreased $r = \bigOt\left(\deff{\rho}(\lra{K}{b})\right)$ increases.
Consequently, if $K$ exhibits heavy-tailed spectral decay, the rank $r$ might have to increase significantly for \asko{} to remain in regime (i).
Fortunately, most kernel matrices exhibit fast spectral decay---indeed, under standard regularity  assumptions on the kernel function, $\deff{\lambda}(K) = \bigO(\sqrt{n})$ \citep{rudi2017falkon, rudi2018fast}.
Thus, even if $k = \Omega(n)$, \asko{} remains in regime (i) when it uses rank $r=\bigO(\sqrt{n})$.
Overall, convergence regime (i) for \asko{} is both preferable and also more likely to occur, compared to regime (ii).

\subsubsection{When Does \asko{} Converge in Log-Linear Time?}
\label{subsubsec:log_lin_runtime}
A natural question to ask is if there is a reasonable setting where $\sko{}$ and $\asko{}$ enjoy a near-optimal runtime of $\bigOt(n^2)$.
A priori, the answer is not obvious, as we use a low-rank approximation to $\Kbb$.
Consequently, the price of using this approximation may be too steep to ensure an $\bigOt(n^2)$ runtime.
Fortunately, the following corollary shows that for matrices whose effective dimension is not too large, the low-rank approximation has a minimal effect on the overall runtime. 
As the effective dimension of kernel matrices is typically $\bigO(\sqrt{n})$ \citep{rudi2017falkon,rudi2018fast}, this corollary shows that \sko{} and \asko{} can achieve a near-optimal runtime  for most KRR tasks, despite approximating $\Kbb$. 
\begin{corollary}[Log-linear convergence of \asko{}]
\label[corollary]{corr:log_lin_conv}
    Let $\Pc$ in \asko{} be the uniform distribution.
    Let $k$ and $\tilde \lambda$ be as in \cref{lem:convergence_rls} and suppose that $\max_{i\in [n]} \ell_i^{\tilde{\lambda}}(K_\lambda) = \Theta(\deff{\tilde{\lambda}}(K_\lambda)/n)$, $\rho = c_\rho \lambda$ for $c_\rho \in [1,n/k]$, and $\deff{\lambda}(K) = \bigO(\sqrt{n})$.
    Then under the assumptions of \cref{thm:mast_conv_thm}, with \blkszstr{} $b=\Theta(k\log^3 n)$, 
    the total runtime of \asko{} to produce an $\epsilon$-approximate solution is bounded by
    \[
    \bigOt\left(\sqrt{c_\rho}n^2\log\left(\frac{1}{\epsilon}\right)\right).
    \]
      
\end{corollary}
The proof of \cref{corr:log_lin_conv} appears in \cref{subsec:log_lin_conv}.
\cref{corr:log_lin_conv} shows that when (i) the effective dimension grows like $\bigO(\sqrt{n})$, (ii) the ridge leverage scores are relatively uniform, 
and (iii) \asko{} uses uniform sampling with $\rho = \bigO(\lambda)$,
\asko{} obtains a near-optimal runtime.

Another powerful consequence of \cref{corr:log_lin_conv} is that the total runtime is independent of the \blkszstr{}.
\asko{} can use effective \blkszstr{} $k = \bigO(n)$ (which corresponds to $b=\bigOt(n)$) and still have runtime $\bigOt\left(n^2\right)$.
Thus, when the leverage scores are relatively uniform, \asko{} can use large \blkszstr{}s and obtain a log-linear runtime. 
From a practical perspective, this is invaluable, as large \blkszstr{}s can exploit the massive parallelism offered by modern computing architectures.
Indeed, in our experiments, we use $b = n/100$ and observe excellent performance. 
The ability of \asko{} to use a large \blkszstr{} separates it from \sap{}, which incurs an $\bigO(n^3)$ cost per iteration when $b = \bigO(n)$.

\begin{remark}
    The same runtime bounds hold for ARLS sampling, which allows us to avoid the assumption that the ridge leverage scores satisfy  $\max_{i\in [n]}\ell_i^{\tilde\lambda}(K_\lambda) = \Theta\left(\frac{\deff{\tilde{\lambda}}(K_\lambda)}{n}\right)$.
    Instead, we require that $k = \Theta(\deff{\lambda}(K))$, since this ensures that the cost of running BLESS is bounded by $\bigOt(n^2)$. 
    Thus, ARLS cannot use blocksizes as large as uniform sampling while maintaining near-optimal runtime.
    We focus on uniform sampling, as this is what works best in practice. 
\end{remark}

\subsection{Proof of \cref{thm:mast_conv_thm}}
\label{subsubsec:mast_conv_pf}
Here, we provide the convergence proof of \sko{}, which corresponds to the first claim in \cref{thm:mast_conv_thm}.
The convergence proof for \asko{} is more involved and requires additional background, so it is deferred to  \cref{subsec:ASkotch_Conv}. 

\begin{proof}
    We have seen in \cref{subsec:sko_sap_one_step_analysis} that at iteration $i$,
    \[
    \E\left[\|w_i-w_\star\|_{\Klambd}^2\mid w_{i-1}\right] \leq (1-\hat \mu)\|w_{i-1}-w_\star\|_{\Klambd}^2.
    \]
    Invoking the lower bound on $\hat\mu$ in \cref{corollary:cnd_free_hat_mu}, this becomes
    \[
    \E\left[\|w_i-w_\star\|_{\Klambd}^2\mid w_{i-1}\right] \leq \left(1-\frac{\lambda k}{32\rho n}\right)\|w_{i-1}-w_\star\|_{\Klambd}^2.
    \]
    Applying the law of total expectation in this last display, we deduce
    \[
    \E\left[\|w_t-w_\star\|_{\Klambd}^2\right] \leq \left(1-\frac{\lambda k}{32\rho n}\right)^{t}\|w_{0}-w_\star\|_{\Klambd}^2.
    \]
    The claimed iteration complexity bound follows immediately. 
\end{proof}
\section{Experiments}
\label{sec:experiments}
We perform an empirical evaluation of \asko{}, \epro{} 2.0, \epro{} 3.0, \pcg{}, and \fal{} on KRR problems 
drawn from diverse application domains, 
including computer vision, particle physics, ecological modeling, online advertising, computational chemistry, music, socioeconomics, and transportation. 
We use three different kernels in our experiments: Laplacian, \mtrn{}-5/2, and radial basis function (RBF), which are among the most popular in the literature \citep{rasmussen2005gaussian,gardner2018gpytorch}.
Across this broad spectrum of application domains and kernels, \asko{} obtains better predictive performance than the competing methods. 
These results show that (i) full KRR can effectively scale to large problems and obtain better predictive performance than inducing points KRR and (ii) \asko{} is a new state-of-the-art solver for large-scale full KRR. 

We present the following results: 
\begin{itemize}
    \item Performance comparisons (\cref{subsec:performance_comparisons}):
    We compare \asko{} to the state-of-art methods for full KRR and inducing points KRR: \epro{} 2.0 \citep{ma2019kernel} and \pcg{} for full KRR, and \fal{} \citep{rudi2017falkon,meanti2020kernel} and \epro{} 3.0 \citep{abedsoltan2023toward} for inducing points KRR.
    For \pcg{}, we consider two of the most effective types of preconditioners: Gaussian \nys{} (\nys{}) \citep{frangella2023randomized} and randomly pivoted Cholesky (RPC) \citep{diaz2023robust,epperly2025embrace}.
    \asko{} consistently obtains better predictive performance than the competitors.
    \item Showcase on huge-scale transportation data analysis (\cref{subsec:showcase}): We run \asko{} on KRR for a subsample of the New York City taxi dataset $(n = 10^8)$.
    To the best of our knowledge, no previous work has scaled full KRR to a dataset of this size.
    \asko{} once again outperforms the competition.
    \item Verification of linear convergence (\cref{subsec:lin_cvg}): We show that \asko{} achieves global linear convergence on three KRR problems with $n \geq 5 \cdot 10^5$ samples, which verifies the linear convergence guarantee in \cref{sec:asko_cvg}.
    \asko{} reaches machine precision in at most 100 passes through each KRR linear system, demonstrating its potential as a high-precision solver.
    \preprintcontent{\item Ablation study (\cref{subsec:ablation}): We study how the approximate projector, acceleration, and sampling scheme affect the performance of \asko{}.
    We find that using the \nys{} approximation as the approximate projector and using acceleration both improve the performance of \asko{}.
    On the other hand, the choice of sampling scheme (uniform vs. ARLS) has little to no impact.}%
\end{itemize}

Each experiment is run on a single 48 GB NVIDIA RTX A6000 GPU with PyTorch 2.5.1 \citep{paszke2019pytorch}, CUDA 12.5, PyKeOps 2.2.3 \citep{charlier2021kernel}, and Python 3.10.12.
The code for our experiments is available at \href{\codeurl}{\codeurldisplaytext}.

\nonpreprintcontent{
Our \href{https://arxiv.org/abs/2407.10070v3}{arXiv report (https://arxiv.org/abs/2407.10070v3)} includes expanded experimental results and an ablation study that are omitted from this version of the paper.
We also provide additional information about datasets, kernel hyperparameters, regularizations, inducing points, preprocessing, and time limits in Appendix C of our arXiv report.}%

\textit{Optimizer hyperparameters.} 
Throughout the experiments, we use the default hyperparameters that we recommend for \asko{} (\cref{subsec:default_hyperparams}), unless stated otherwise.
\preprintcontent{We set the \blkszstr{} $b = n/100$ and rank $r = 100$.
We set $\rho$ in two different ways in \asko{}: \emph{damped} (the default) and \emph{regularization}.
Damped sets $\rho = \lambda + \lambda_r(\Knys)$, while
regularization sets $\rho = \lambda$.
We also apply both damping strategies to \pcg{} when using the Gaussian \nys{} preconditioner to ensure a fair comparison.}%
\nonpreprintcontent{We compute the damping for \pcg{} with the Gaussian \nys{} preconditioner using the same procedure as \asko{}, ensuring a fair comparison.}%
We also run \pcg{}, \epro{} 2.0, and \epro{} 3.0 with the same rank as \asko{} to ensure a fair comparison.
We set $b_g$, $s$, and the stepsize $\eta$ in \epro{} 2.0 and \epro{} 3.0 using the defaults in the \href{https://github.com/EigenPro/EigenPro-pytorch/tree/master}{EigenPro 2.0} and \href{https://github.com/EigenPro/EigenPro3/tree/main}{EigenPro 3.0 GitHub repositories}. 
We also run EigenPro 2.0 and 3.0 with regularization $\lambda = 0$, as recommended in \cite{ma2019kernel,abedsoltan2023toward}.

\textit{Numerical precision.} 
\asko{}, \epro{} 2.0, and \epro{} 3.0 are stable in single precision, while \fal{} and \pcg{} often require double precision for satisfactory performance.
Consequently, the figures in the main paper show results when \asko{}, \epro{} 2.0, and \epro{} 3.0 run in single precision and \fal{} and \pcg{} run in double precision.
\preprintcontent{We show in \cref{subsec:pcg_single_precision} that \asko{} still outperforms \fal{} and \pcg{} when all methods are run in single precision.}%
\nonpreprintcontent{We show in Appendix C of our \href{https://arxiv.org/abs/2407.10070v3}{arXiv report} that \asko{} still outperforms \fal{} and \pcg{} when all methods are run in single precision.}%

\textit{Time limits.} 
Each optimizer is run until it reaches a specified time limit, which depends on the size of the dataset.
These time limits range between 0.5 to 24 hours.
We implement all of the competing methods ourselves to ensure fair runtime comparisons with \asko{}.
The most expensive computations in all methods are performed using KeOps \citep{charlier2021kernel}, which ensures our runtime comparisons are fair.

% \textit{Inducing points.} 
% We sample inducing points uniformly without replacement from the training set.
% \fal{} is run with $m \in \{10^4, 2 \cdot 10^4, 5 \cdot 10^4, 10^5, 2 \cdot 10^5\}$ inducing points for each dataset.
% However, \fal{} does not scale beyond $5 \cdot 10^4$ inducing points in our experiments due to memory constraints.
% The number of inducing points for \mimo{} varies across datasets and is always displayed on the figures.

\preprintcontent{Additional experimental details are in \cref{sec:experiments_appdx}.}%

\subsection{Performance Comparisons}
\label{subsec:performance_comparisons}

We evaluate predictive performance on 10 classification and 13 regression tasks from established benchmarks.
\nonpreprintcontent{The classification tasks are from computer vision, particle physics, ecological modeling, and online advertising, while the regression tasks are from computational chemistry, music analysis, and socioeconomics.
We assess predictive performance on classification and regression tasks by computing classification accuracy and mean-absolute error (MAE) between the predictions and targets, respectively, on the test set.}%

\preprintcontent{
\textit{Classification.} The classification tasks are from computer vision (\cref{fig:vision_float64}), particle physics (\cref{fig:particle_physics_float64}), ecological modeling (\cref{fig:tabular_classification_float64}), and online advertising (\cref{fig:tabular_classification_float64}).
We assess predictive performance on classification tasks by computing the classification accuracy between the predicted labels $\hat y$ and target labels $y$ on the test set:
\[
\mathrm{Accuracy}(\hat{y}, y) = \frac{1}{n_{\mathrm{tst}}} \sum_{i = 1}^{n_{\mathrm{tst}}} \mathds{1}(\hat{y}_i = y_i).
\]
\asko{} outperforms the competition on computer vision, ecological modeling, and online advertising, while delivering comparable performance on particle physics.

\textit{Regression.} The regression tasks are from computational chemistry (\cref{fig:qm9_float64,fig:molecules_float64}), music analysis (\cref{fig:tabular_regression_float64}), and socioeconomics (\cref{fig:tabular_regression_float64}).
We assess predictive performance on regression tasks by computing the mean-absolute error (MAE) between the predictions $\hat y$ and targets $y$ on the test set:
\[
    \mathrm{MAE}(\hat{y}, y) = \frac{1}{n_{\mathrm{tst}}} \sum_{i = 1}^{n_{\mathrm{tst}}} |\hat{y}_i - y_i|.
\]
\asko{} outperforms the competition on computational chemistry and music analysis, while delivering comparable performance on socioeconomics.
}%

\begin{figure}[t]
    \centering
    \includegraphics[width=\linewidth]{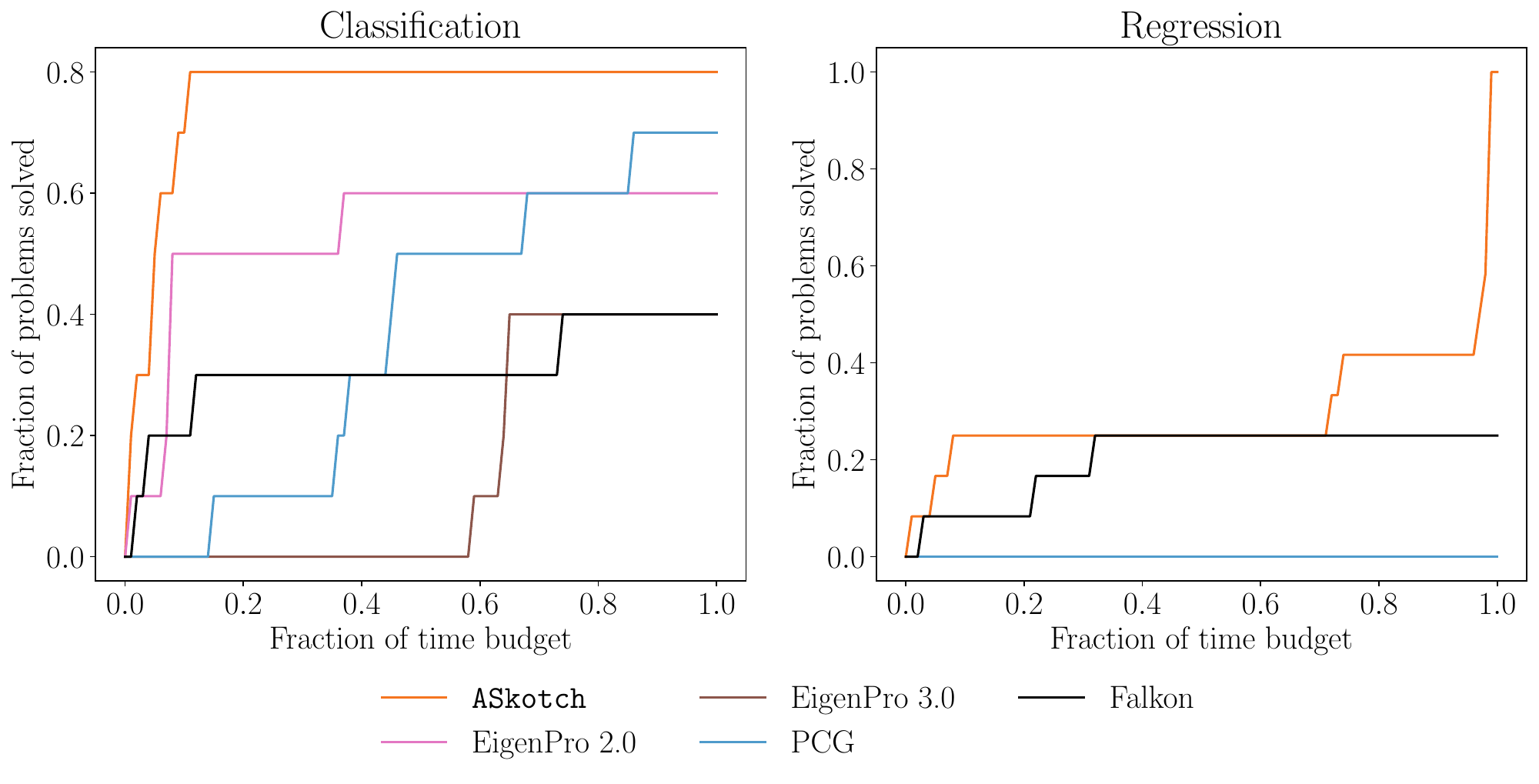}
    \caption{Performance comparison between \asko{} and competitors on 10 classification and 13 regression tasks.
    We designate a classification problem as ``solved'' when the method reaches within 0.001 of the highest classification accuracy found across all the optimizer + hyperparameter combinations.
    We designate a regression problem as ``solved'' when the method reaches within 1\% of the lowest MAE (in a relative sense) found across all the optimizer + hyperparameter combinations.
    \pcg{} and \fal{} are run in double precision.
    \epro{} 2.0, \epro{} 3.0, and \pcg{} do not solve any of the regression problems within the tolerance.
    \asko{} outperforms the competition on both classification and regression.
    }
    \label{fig:perf_float64}
\end{figure}

We summarize the results of our performance comparisons in \cref{fig:perf_float64}.
\asko{} outperforms the competition on both classification and regression.
Notably, \asko{} outperforms the state-of-the-art inducing point methods \epro{} 3.0 and \fal{}, demonstrating the value of using full KRR over inducing points KRR.
\preprintcontent{
The performance of \asko{} rapidly improves towards the end of the time budget on regression tasks.
This sudden improvement occurs because \asko{} does not saturate the MAE on several computational chemistry tasks within the time budget (\cref{fig:molecules_float64}).

A detailed description of each classification and regression task can be found in \cref{subsec:data_hyperparams}.
}%

\preprintcontent{
\begin{figure}[htbp]
    \centering
    \includegraphics[width=0.9\linewidth]{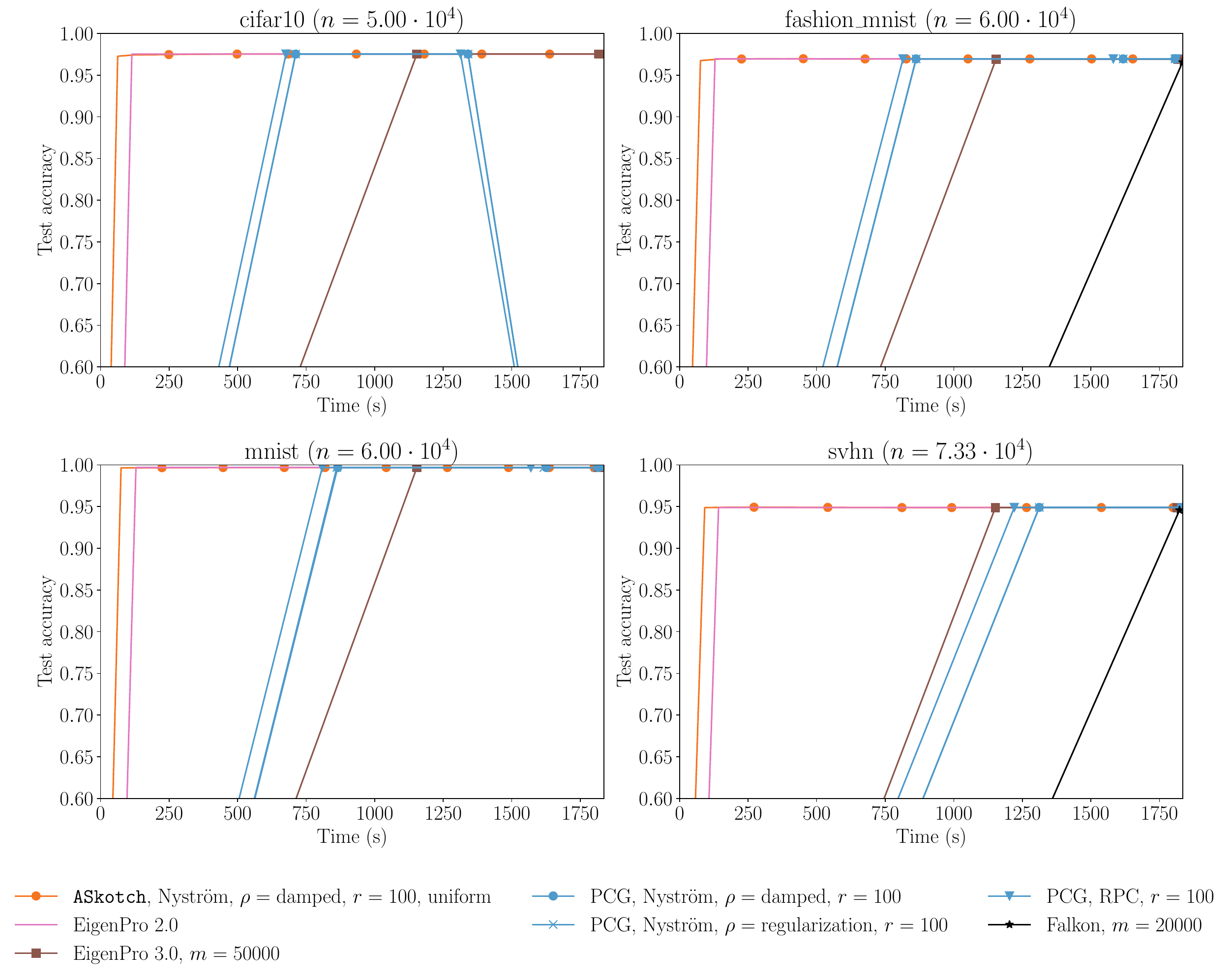}
    \caption{Comparison between \asko{} and competitors on computer vision tasks.
    \asko{} and the competing methods all reach similarly high classification accuracies, but \asko{} achieves this accuracy in less time than the competition.
    The classification accuracy for \pcg{} and \fal{} sometimes peaks and then goes towards 0---this is unsurprising since Krylov methods can diverge if they are run for too many iterations.}
    \label{fig:vision_float64}
\end{figure}

% \subsubsection{Classification: Particle Physics}
% \label{subsubsec:particle_physics}

% The results are shown in \cref{fig:particle_physics_float64}. 
% \asko{} outperforms 
% the competition on comet\_mc and susy,
% while \epro{} 2.0 attains a higher classification accuracy than \asko{} on miniboone (0.929 vs. 0.916) and \fal{} attains a slightly higher classification accuracy than \asko{} on higgs (0.728 vs. 0.725).
% Consequently, miniboone and higgs are considered unsolved by \asko{} in the performance plot (\cref{fig:perf_float64}).

\begin{figure}[htbp]
    \centering
    \includegraphics[width=0.9\linewidth]{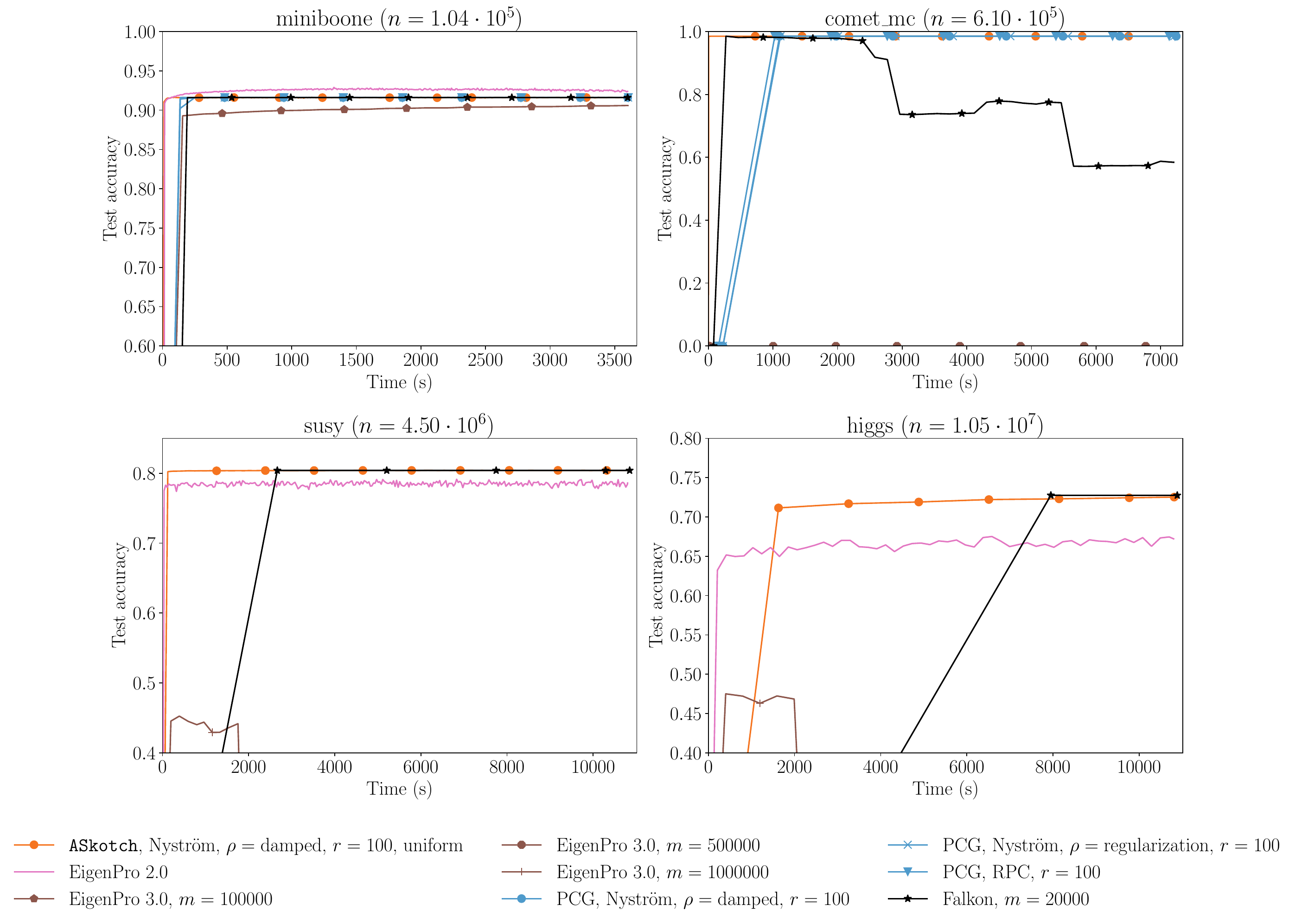}
    \caption{Comparison between \asko{} and competitors on particle physics tasks.
    \asko{} reaches a similar classification accuracy as the competition on both comet\_mc and susy, while taking much less time to reach this level of accuracy.
    However, \epro{} 2.0 and \fal{} outperform \asko{} on miniboone and higgs, respectively.
    On the other hand, \epro{} 2.0 and \epro{} 3.0 both diverge on comet\_mc, which shows that these methods do not always work well with their default hyperparameters, while \asko{} consistently provides good results with its defaults.
    Finally, \pcg{} does not even reach 0.4 classification accuracy on susy and it does not complete a single iteration on higgs.}
    \label{fig:particle_physics_float64}
\end{figure}

% \subsubsection{Classification: Other Applications}
% \label{subsubsec:other_classification}

% The results are shown in \cref{fig:tabular_classification_float64}. \asko{} outperforms \epro{}3.0, \pcg{}, and \fal{} on both datasets, and performs similarly to \epro{} 2.0 on covtype\_binary.
% Moreover, \epro{} 2.0 and \epro{} 3.0 diverge on click\_prediction.
% \asko{} attains a higher classification accuracy than
% both \epro{} 3.0 and \fal{} on covtype\_binary, which demonstrates the value of using full KRR over inducing points KRR for large-scale classification tasks.

\begin{figure}[htbp]
    \centering
    \includegraphics[width=0.9\linewidth]{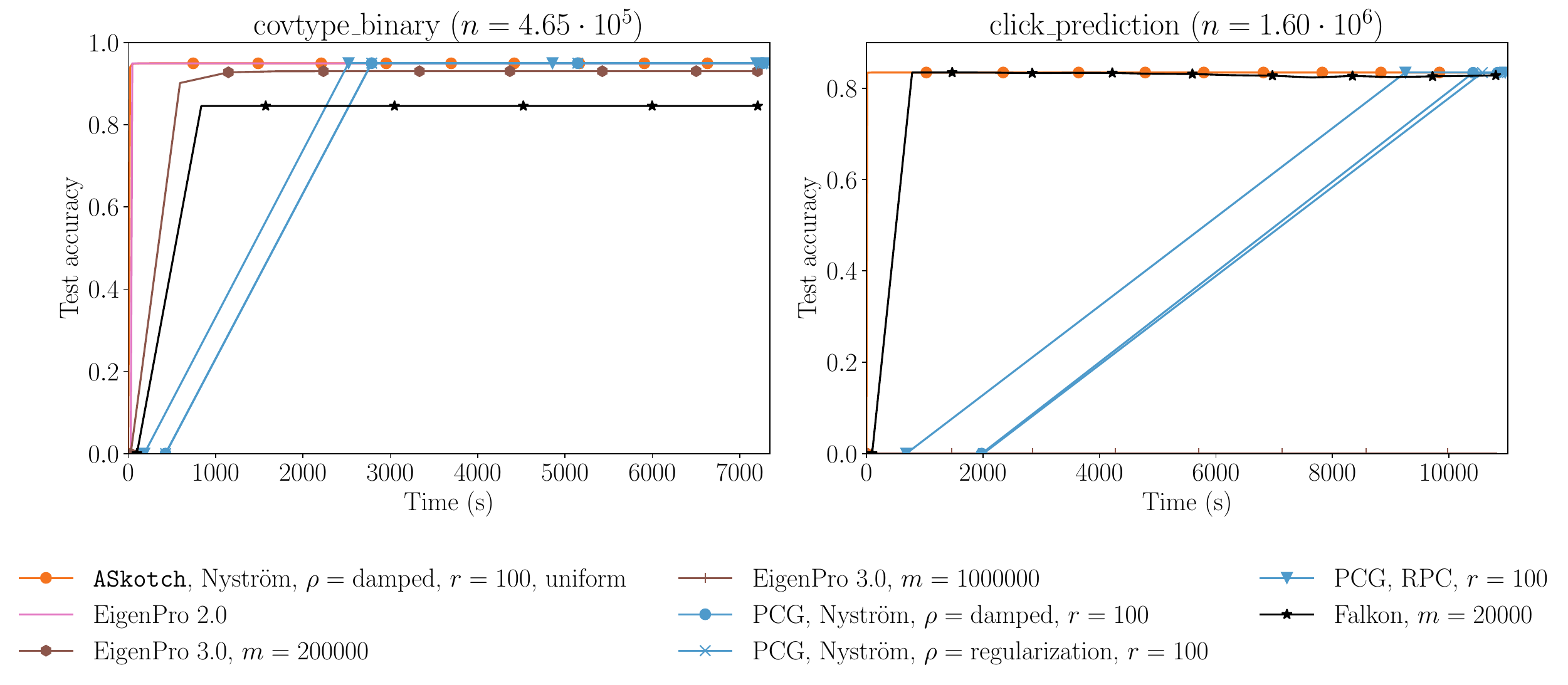}
    \caption{Comparison between \asko{} and competitors on ecological modeling and online advertising tasks.
    \asko{} achieves comparable or higher classification accuracy than \epro{} 3.0, \pcg{}, and \fal{} on both datasets, while requiring less time to do so.
    While \epro{} 2.0 is competitive with \asko{} on covtype\_binary, both \epro{} 2.0 and 3.0 diverge on click\_prediction.}
    \label{fig:tabular_classification_float64}
\end{figure}

% \subsubsection{Regression: Computational Chemistry}
% \label{subsubsec:comp_chem}

% The results are shown in \cref{fig:qm9_float64,fig:molecules_big_float64}.
% We only provide results for the four largest molecule datasets in the main paper; results for the four smallest molecule datasets are in \cref{subsec:performance_comparison_additional}.
% \asko{} outperforms \epro{} 2.0, \epro{} 3.0, \fal{} and \pcg{} on all five datasets.
% In addition, \epro{} 2.0 and \epro{} 3.0 diverge on qm9.
% Moreover, \asko{} attains a much lower MAE than \epro{} 3.0 and \fal{} on the molecule datasets, which demonstrates the value of using full KRR over inducing points KRR for large-scale regression tasks.

\begin{figure}[htbp]
    \centering
    \includegraphics[width=0.9\linewidth]{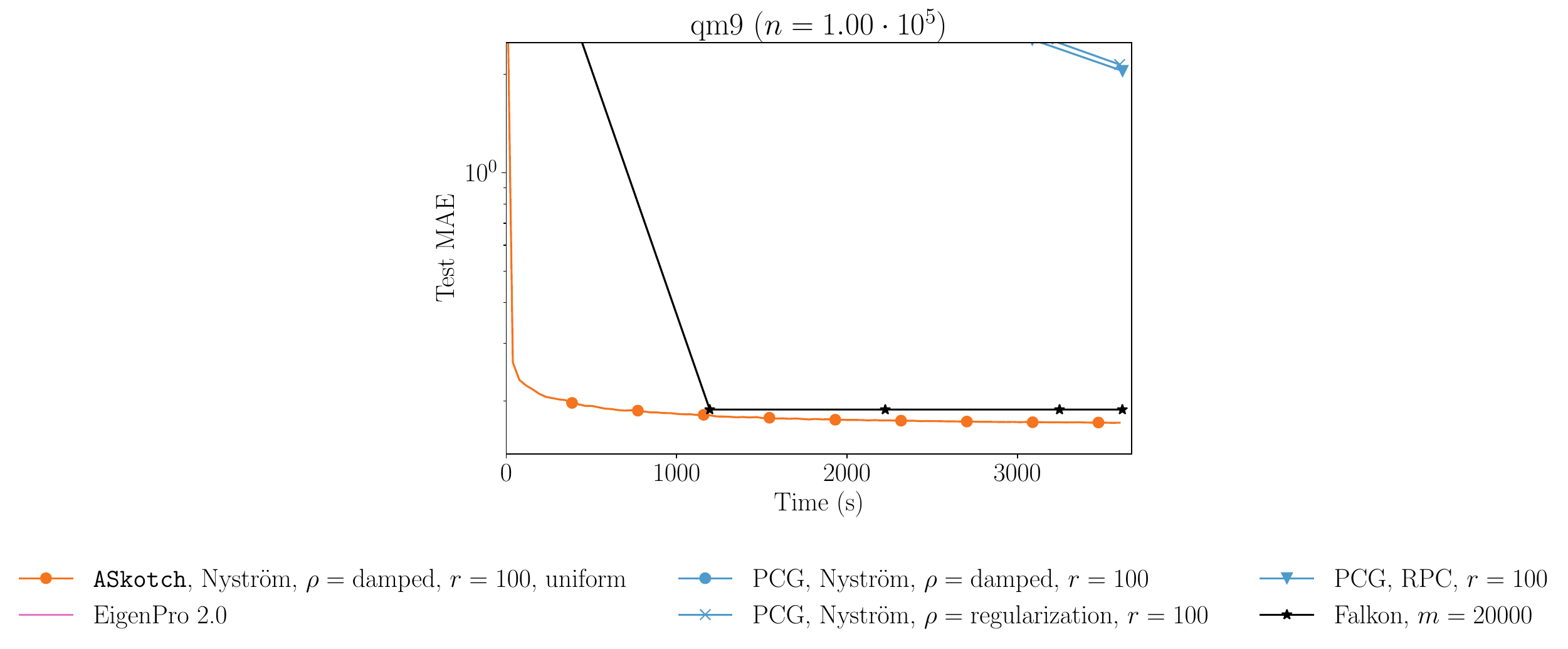}
    \caption{Comparison between \asko{} and competitors on computational chemistry with the qm9 dataset.
    \asko{} attains a lower MAE than the competition.
    Moreover, both \epro{} 2.0 and \epro{} 3.0 (not shown) diverge.}
    \label{fig:qm9_float64}
\end{figure}

\begin{figure}[htbp]
    \centering
    \includegraphics[width=0.9\linewidth]{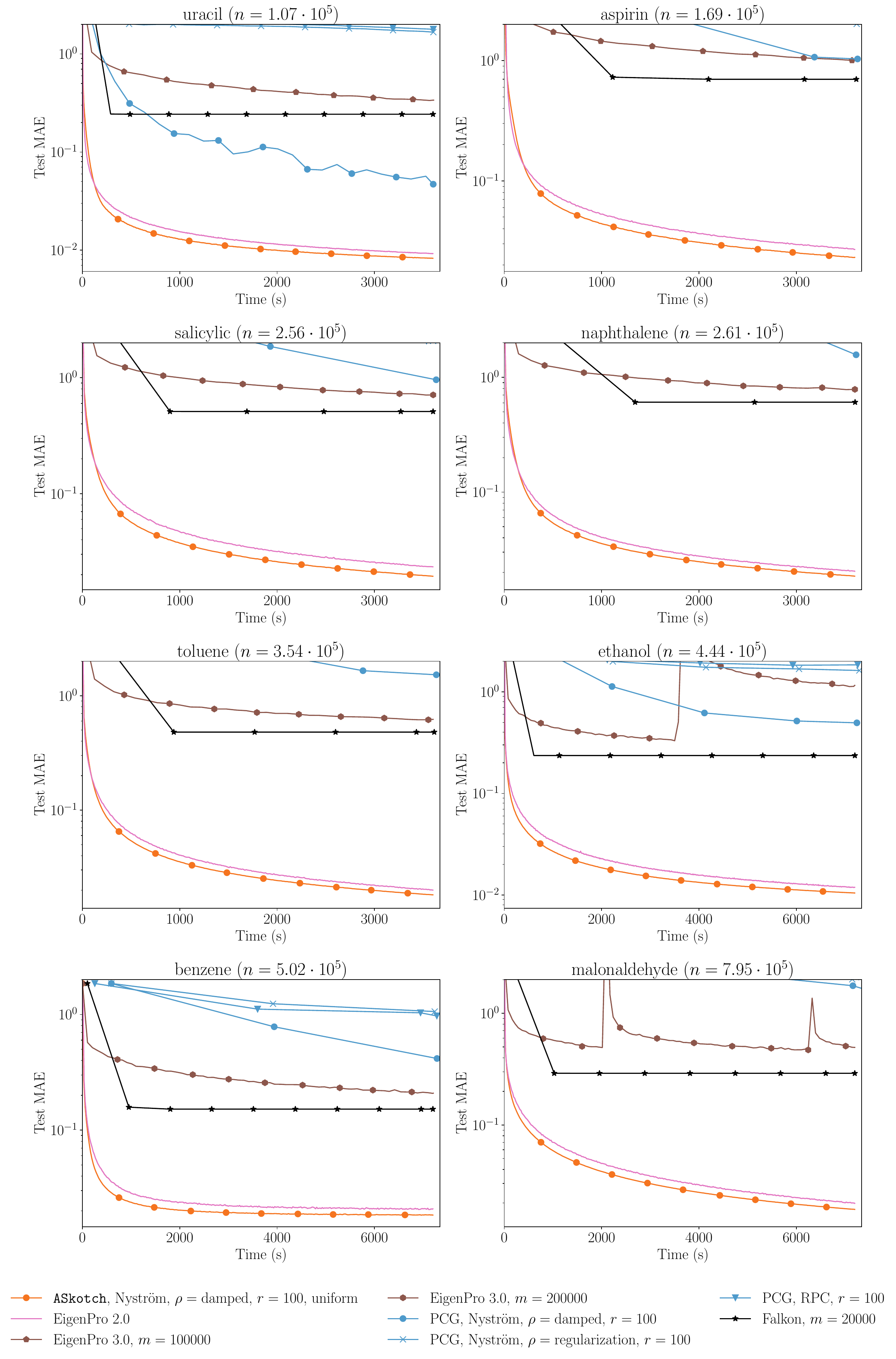}
    \caption{Comparison between \asko{} and competitors on computational chemistry with eight molecule datasets.
    \asko{} attains a lower MAE than the competition.}
    \label{fig:molecules_float64}
\end{figure}

% \subsubsection{Regression: Other Applications}
% \label{subsubsec:other_regression}
% The results are shown in \cref{fig:tabular_regression_float64}.
% \asko{} outperforms all competing methods on yolanda and yearpredictionmsd, while performing similarly to \fal{} on acsincome.
% Furthermore, \epro{} 2.0 and \epro{} 3.0 diverge on acsincome.

\begin{figure}[htbp]
    \centering
    \includegraphics[width=\linewidth]{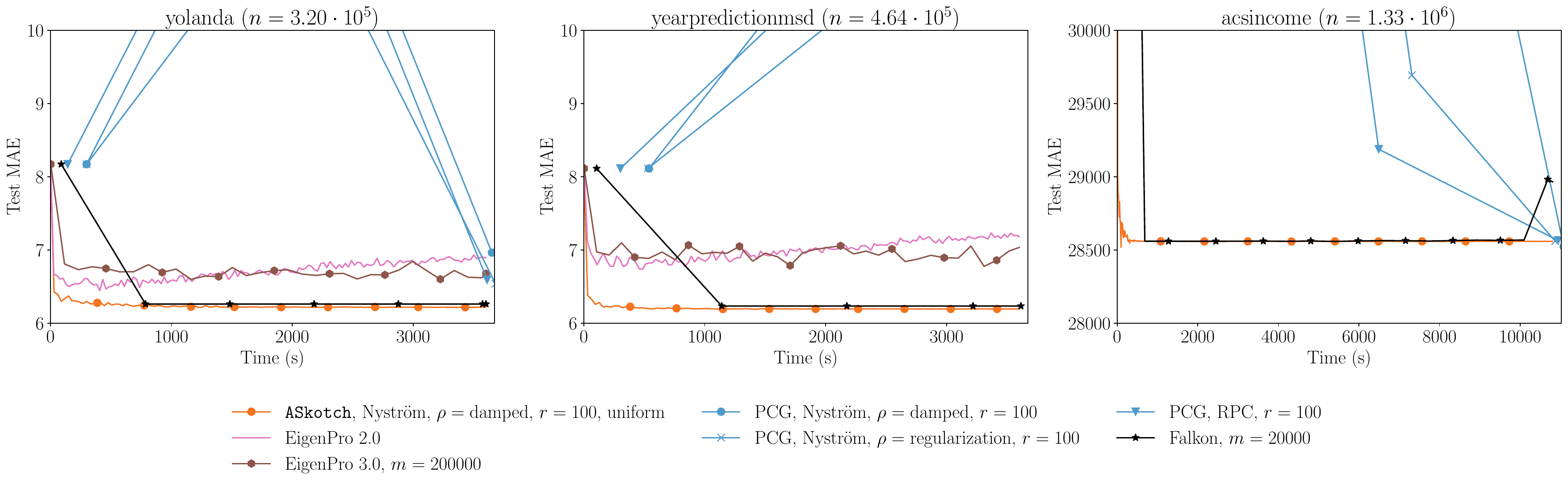}
    \caption{Comparison between \asko{} and competitors on music analysis and socioeconomics tasks.
    \asko{} outperforms all competing methods on yolanda and yearpredictionmsd, while performing similarly to \fal{} on acsincome.
    Both \epro{} 2.0 and 3.0 diverge on acsincome.}
    \label{fig:tabular_regression_float64}
\end{figure}
}%

\subsection{Showcase: Huge-Scale Transportation Data Analysis}
\label{subsec:showcase}
We apply KRR to a subsample of the \href{https://github.com/toddwschneider/nyc-taxi-data}{taxi dataset} to predict taxi ride durations in New York City.
Following \cite{meanti2020kernel}, we use an RBF kernel.
Due to hardware limitations, we set the \blkszstr{} $b$ in \asko{} at $n/2{,}000 = 5 \cdot 10^4$ and vary the rank $r \in \{50, 100, 200, 500\}$.
We set the rank for \pcg{} as low as $r = 50$, but none of the \pcg{} methods complete a single iteration in the time limit.
\preprintcontent{Following \cite{meanti2020kernel}, we use the root mean square error (RMSE) between the predictions $\hat{y}$ and targets $y$:
\[
    \mathrm{RMSE}(\hat{y}, y) = \sqrt{\frac{1}{n_{\mathrm{tst}}} \sum_{i = 1}^{n_{\mathrm{tst}}} \frac{(\hat{y}_i - y_i)^2}{2}}.
\]
}%
\nonpreprintcontent{Following \cite{meanti2020kernel}, we use the root mean square error (RMSE) between the predictions and targets.}%

The results are shown in \cref{fig:taxi_intro}.
\asko{} outperforms \fal{} for each value of $r$.
Both \epro{} 2.0 and \epro{} 3.0 (not shown) diverge.
Once again, our findings demonstrate the value of using full KRR over inducing points KRR for large-scale regression tasks.

\subsection{\asko{} Converges Linearly to the Optimum}
\label{subsec:lin_cvg}

We demonstrate that \asko{} obtains linear convergence to the optimum for large-scale full KRR.
To do so, we plot the relative residual
\[
\|\Klambd w - y\| / \| y \|
\]
obtained by \asko{}.
We run \asko{} in double precision.

\cref{fig:lin_cvg_actual_data} shows the relative residual obtained by \asko{} on three large-scale KRR problems.
\asko{} converges linearly for all selections of the rank $r$.
Excitingly, \asko{} reaches machine precision on all three problems, showing its potential as a high-precision linear system solver.

\begin{figure}[htbp]
    \centering
    \nonpreprintcontent{\includegraphics[width=\linewidth]{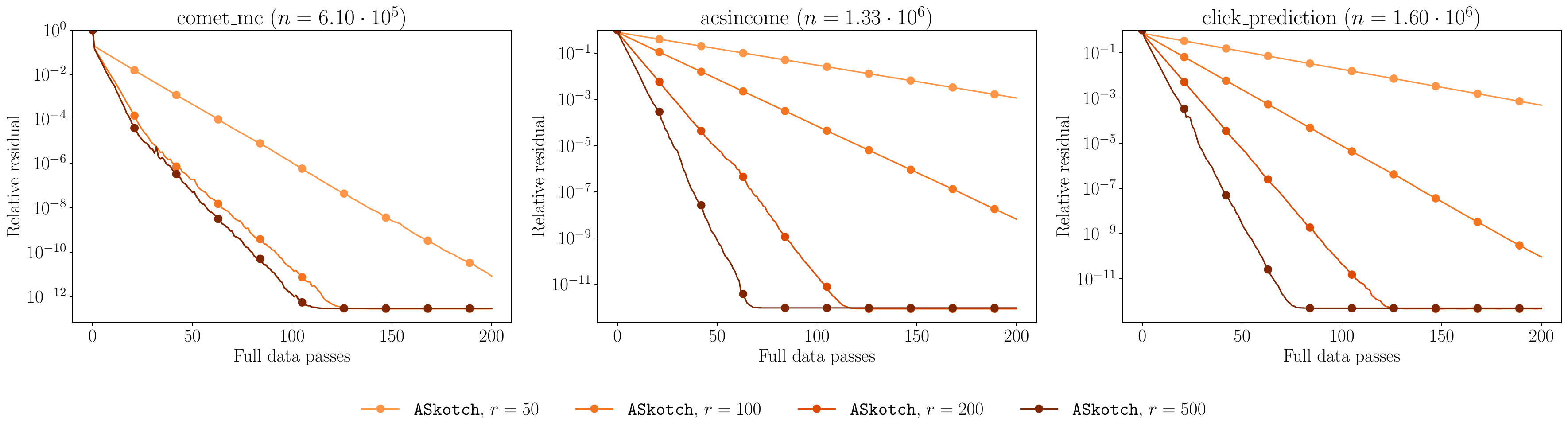}}%
    \preprintcontent{\includegraphics[width=\linewidth]{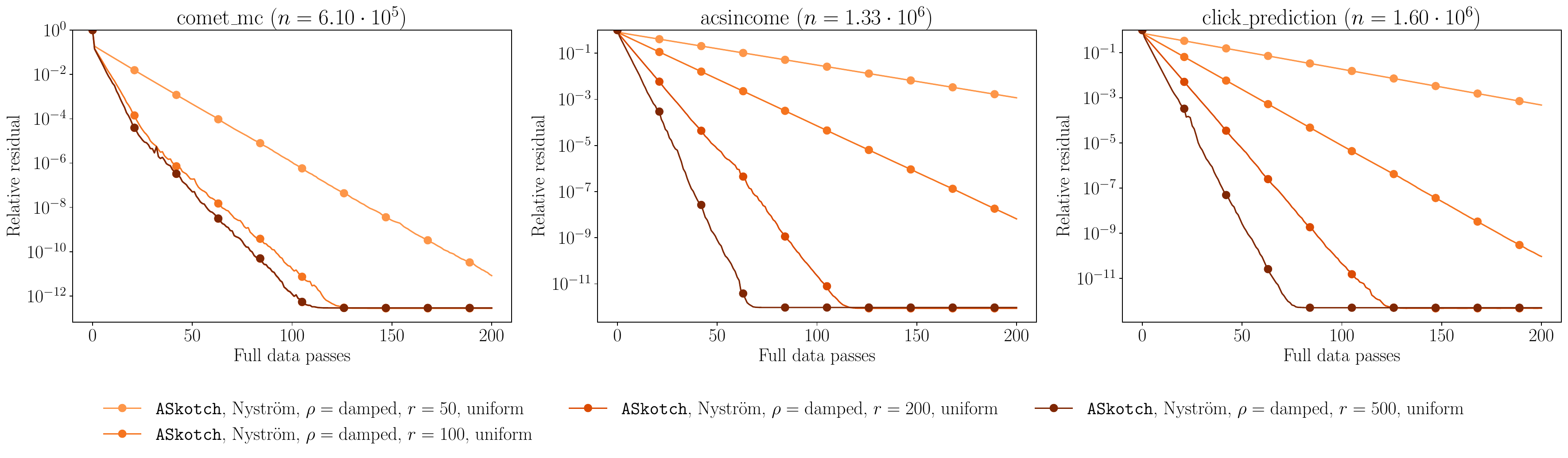}}%
    \caption{\asko{} converges linearly on large-scale full KRR. 
    ``Full data passes'' indicates the number of passes through $\Klambd$: since $b = n/100$, one full data pass is equivalent to 100 iterations of \asko{}.
    \asko{} tends to converge faster as the rank $r$ increases, which matches our theoretical results.
    Interestingly, $r = 200$ and $r = 500$ yield similar results on comet\_mc.
    }
    \label{fig:lin_cvg_actual_data}
\end{figure}

\preprintcontent{
\subsection{Ablation Study of \asko{}}
\label{subsec:ablation}
\asko{} combines several algorithmic techniques,
such as \nys{} approximation, acceleration, and coordinate sampling schemes.
We perform an ablation study to understand how each of these techniques impacts the performance of \asko{}.
We show results for selected classification and regression tasks in \cref{fig:particle_physics_ablation,fig:molecules_ablation}; additional figures are in \cref{subsec:ablation_additional}. 

\textit{\nys{} approximation.} 
We study what happens when we replace the regularized \nys{} approximation in the \asko{} update with the identity matrix.
We also compare the damped vs. regularization settings for $\rho$.
Replacing the \nys{} approximation with the identity matrix in the approximate projection significantly degrades the performance of both \sko{} and \asko{}.
Moreover, the damped setting for $\rho$ leads to better performance on regression tasks, while yielding similar performance to the regularized setting on classification tasks.
% These findings demonstrate the value of using the \nys{} approximation and ``damped'' setting for $\rho$.
This finding aligns with our theory in \cref{sec:asko_cvg}, as \cref{thm:mast_conv_thm} states $\rho$ should be at least as large as $\lambda$.

\textit{Acceleration.}
We compare \asko{} with \sko{}, since \asko{} is the accelerated version of \sko{}.
\asko{} and \sko{} perform similarly on classification tasks, but \asko{} outperforms \sko{} on regression tasks.
% This finding aligns with our theoretical claim that \asko{} does not perform worse than \sko{}. \pnote{Add cref to theoretical claim}
% In other words, acceleration can be used as a hedge against poor performance of \sko{}.
% Hence for simplicity we recommend using \asko{} for all problems in practice.

\textit{Coordinate sampling schemes.}
We compare uniform vs.~approximate RLS sampling.
The results show that the sampling scheme has little to no impact on the performance of \asko{} and \sko{}.
% Given the additional cost to compute (approximate) RLSs, these findings suggest that uniform sampling is a suitable default for \asko{} and \sko{}.

\begin{figure}[htbp]
    \centering
    \includegraphics[width=1.0\linewidth]{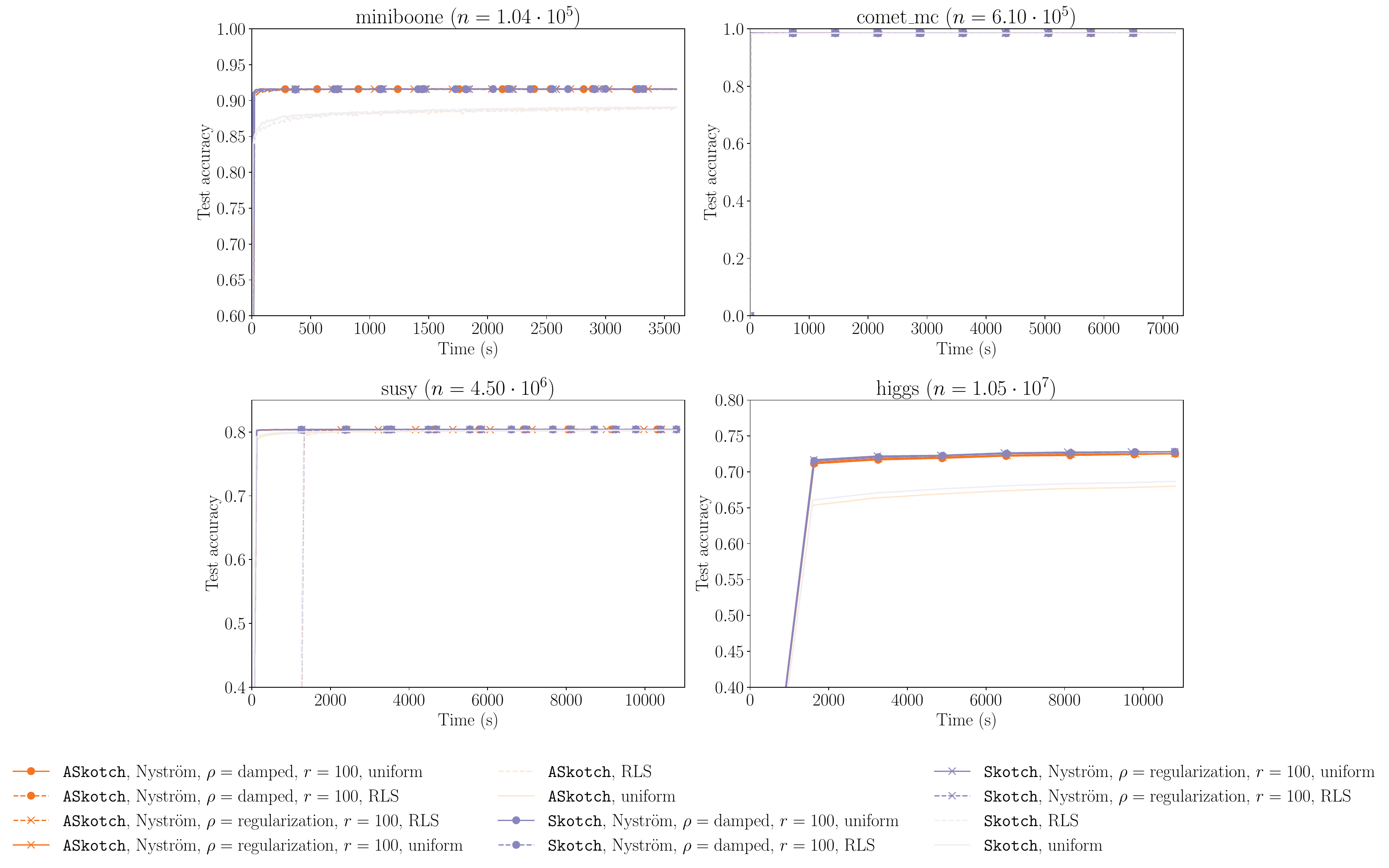}
    \caption{Ablation study of \sko{} and \asko{} on classification tasks from particle physics.
    Using the \nys{} approximation often improves classification accuracy.
    Acceleration and approximate RLS sampling have little to no impact.}
    \label{fig:particle_physics_ablation}
\end{figure} 

\begin{figure}
    \centering
    \includegraphics[width=1.0\linewidth]{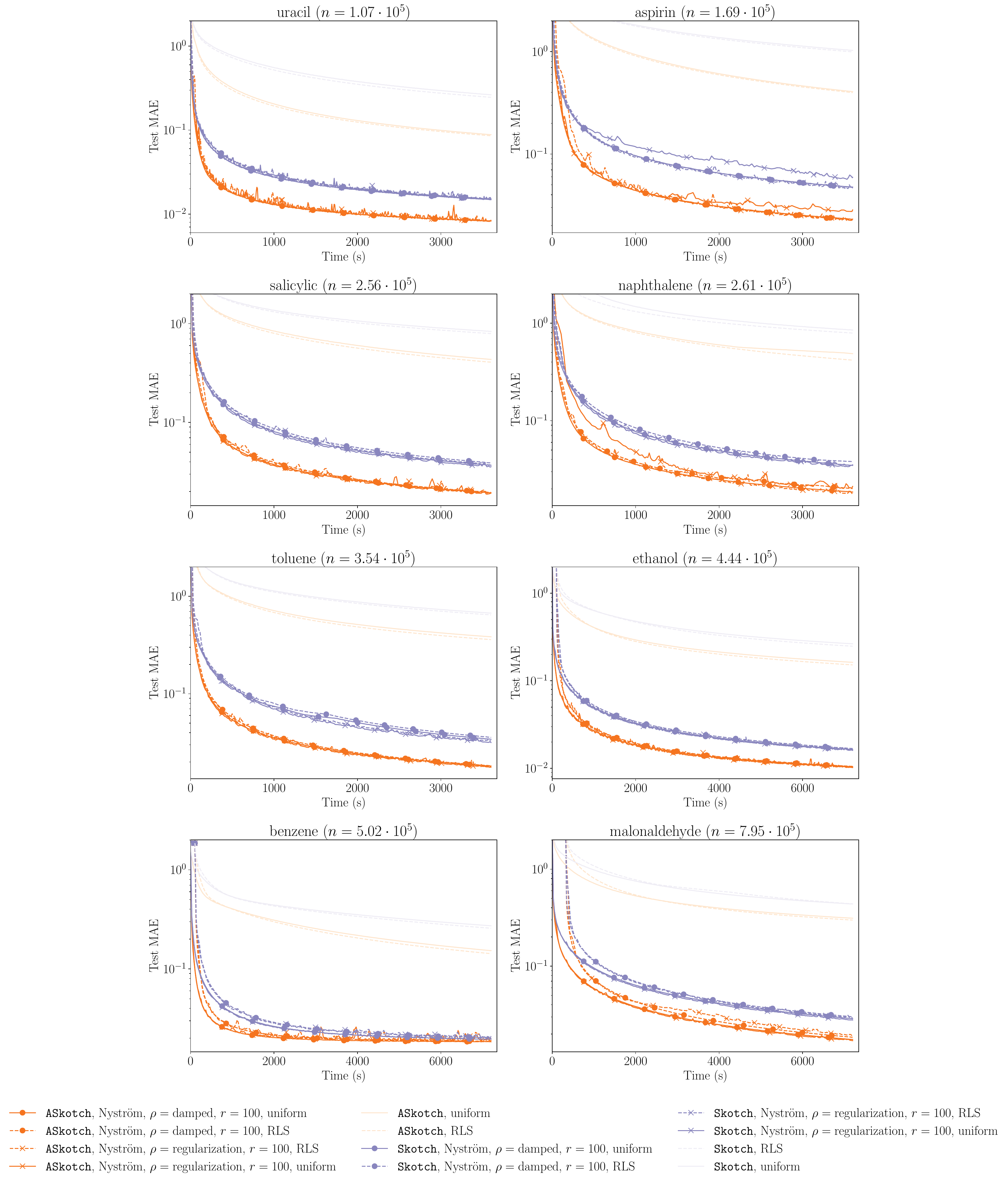}
    \caption{Ablation study of \sko{} and \asko{} on regression tasks for the eight molecules datasets.
    Using acceleration and the \nys{} approximation improves MAE across all eight datasets.
    Furthermore, the damping strategy for the \nys{} approximation leads to faster convergence on MAE.}
    \label{fig:molecules_ablation}
\end{figure}
}%
\section{Conclusion}
\label{sec:conclusion}
We introduce \asko{}, an approximate sketch-and-project method for large-scale full KRR. 
Our theoretical analysis and experiments demonstrate that \asko{} is a promising replacement for existing KRR solvers.
Looking ahead, we aim to develop methods for automatically selecting the acceleration parameters $\hat \mu$ and $\hat \nu$ in \asko{}, create a distributed implementation of \asko{} that scales to datasets with $n \gtrsim 10^9$ training points (such as the full taxi dataset), 
and develop a mixed-precision version of \asko{} that delivers high-quality results with less memory and compute.
% \mnote{Say what applications this scale would open up! Or what new applications our current scale admits.}

Our work, along with previous studies \citep{wang2019exact,frangella2023randomized,diaz2023robust}, shows that full KRR allows better predictive performance than inducing points KRR. 
\asko{} scales full KRR to datasets orders of magnitude larger than previously possible. 
In future work, we look forward to seeing how a fast solver for full KRR enables new applications.
Moreover, following our methodology, it should be possible to solve general pd linear systems within the (approximate) sketch-and-project framework.

\acks{
We thank Rob Webber for helpful discussions.

PR, ZF, and MU gratefully acknowledge support from
the National Science Foundation (NSF) Award IIS-2233762, 
the Office of Naval Research (ONR) Awards N000142212825, 
N000142412306, and N000142312203, the Alfred P. Sloan Foundation, and from IBM Research as a founding member of Stanford Institute for Human-centered Artificial Intelligence (HAI). 
MD and JY gratefully acknowledge support from NSF Award CCF-2338655.
}

% Acknowledgements and Disclosure of Funding should go at the end, before appendices and references

% Manual newpage inserted to improve layout of sample file - not
% needed in general before appendices/bibliography.

\newpage

\appendix
\section{Additional Algorithm Details}

We provide additional details for the algorithms proposed in this paper.
\cref{subsec:nystrom} describes the practical implementation of the randomized \nys{} approximation and provides pseudocode for \texttt{\nys}.
\cref{subsec:get_l} describes how we compute preconditioned smoothness constants and provides pseuodocode for \texttt{get\_L}.

\subsection{Randomized \nys{} Approximation: Implementation}
\label{subsec:nystrom}

Here, we present a practical implementation of the Nystr{\"o}m approximation from \cref{subsec:nys_appx} in \algnys{} (\cref{alg:nystrom}). 
\algnys{} is based on \citet[Algorithm 3]{tropp2017fixed}. 
$\text{eps}(x)$ is defined as the positive distance between $x$ and the next largest floating point number of the same precision as $x$. 
The resulting Nystr\"{o}m approximation $\hat{M}$ is given by $\hat U \diag(\hat \Lambda) \hat U^T$, where $\hat{U} \in \R^{p \times r}$ is an orthogonal matrix that contains the approximate top-$r$ eigenvectors of $M$, and $\hat \Lambda \in \R^r$ contains the top-$r$ eigenvalues of $M$.
The Nystr{\"o}m approximation is psd but may have eigenvalues that are equal to $0$.
In our algorithms, this approximation is always used in conjunction with a regularizer to ensure positive definiteness.

\begin{algorithm}
    \caption{\texttt{\nys}}
    \label{alg:nystrom}
    \scriptsize
    \begin{algorithmic}
        \Require psd matrix $M \in \R^{p \times p}$, approximation rank $r \leq p$
        \State $\Omega \gets \mathrm{randn}(p, r)$
        \hfill \Comment{Test matrix}
        \State $\Omega \gets \mathrm{thin\_qr}(\Omega )$
        \hfill \Comment{Orthogonalize test matrix}
        \State $\Delta \gets \mathrm{eps}(\Omega.\mathrm{dtype}) \cdot \Tr(M)$ 
        \hfill \Comment{Compute shift}
        \State $Y_{\Delta} \gets (M + \Delta I) \Omega$ 
        \hfill \Comment{Compute sketch, adding shift for stability}
        \State $C \gets \text{chol}(\Omega^TY_\Delta)$ 
        \hfill \Comment{Cholesky decomposition: $C^{T}C = \Omega^{T}Y_\Delta$}
        \State $B \gets YC^{-1}$ 
        \hfill \Comment{Triangular solve}
        \State $[\hat U, \Sigma, \sim] \gets \text{svd}(B, 0)$ 
        \hfill \Comment{Thin SVD}
        \State $\hat \Lambda \gets \text{max}\{0, \diag(\Sigma^2 - \Delta I)\}$ 
        \hfill \Comment{Compute eigs, and remove shift with element-wise max}
        \State \Return $\hat{U} \diag(\hat \Lambda) \hat{U}^T$
    \end{algorithmic}
\end{algorithm}

The dominant costs are in computing the (shifted) sketch $Y_\Delta$, which has complexity $\bigO(p^2 r)$, and computing an SVD of $B$, which has complexity $\bigO(p r^2)$.
In total, the overall complexity of the algorithm is $\bigO(p^2 r + p r^2)$.

\subsubsection{Applying the \nys{} Approximation to a Vector}
In our algorithms, we 
often perform computations of the form $(\hat{M} + \rho I)^{-1} g = (\hat{U} \diag(\hat \Lambda) \hat{U}^T + \rho I)^{-1} g$.
This computation can be performed in $\bigO(rp)$ time using the Woodbury formula \citep{higham2002accuracy}:
\begin{align}
\label{eq:inv_nys_woodbury}
        (\hat{U} \diag(\hat \Lambda) \hat{U}^T + \rho I)^{-1} g
        &= \hat{U} \left( \diag (\hat \Lambda) + \rho I \right)^{-1} \hat{U}^{T} g + \frac{1}{\rho} (g - \hat{U} \hat{U}^{T} g).
\end{align}

We also use the randomized \nys{} approximation to compute preconditioned smoothness constants in \texttt{get\_L} (\cref{alg:get_L}).
This computation requires the calculation $(P + \rho I)^{-1/2} v$ for some $v \in \R^p$, which can also be performed in $\bigO(pr)$ time using the Woodbury formula:
\begin{align}
\label{eq:inv_sqrt_nys_woodbury}
        (\hat{U} \diag(\hat \Lambda) \hat{U}^T + \rho I)^{-1/2} v = \hat{U} \left( \diag (\hat \Lambda) + \rho I \right)^{-1/2} \hat{U}^{T} v + \frac{1}{\sqrt{\rho}} (v - \hat{U} \hat{U}^{T} v).
\end{align}

In single precision, \eqref{eq:inv_nys_woodbury} is unreliable for computing $(P + \rho I)^{-1} g$. This instability arises due to roundoff error: 
the derivation of \eqref{eq:inv_nys_woodbury} assumes that $\hat U^T \hat U = I$, but we empirically observe that orthogonality does not hold in single precision.
To improve stability, we compute a Cholesky decomposition $LL^T$ of $\rho \diag(\hat \Lambda^{-1}) + \hat U^T \hat U$, which takes $\bigO(pr^2)$ time since we form $\hat U^T \hat U$.
Using the Woodbury formula and Cholesky factors, 
\begin{align*}
   (\hat{U} \diag(\hat \Lambda) \hat{U}^T + \rho I)^{-1} g &= \frac{1}{\rho} g - \frac{1}{\rho} \hat U (\rho \diag(\hat \Lambda^{-1}) + \hat U^T \hat U)^{-1} \hat U^T g  \\
   &= \frac{1}{\rho} g - \frac{1}{\rho} \hat U L^{-T} L^{-1} \hat U^T g.
\end{align*}
This computation can be performed in $\bigO(pr)$ time, since the $\bigO(r^2)$ cost of triangular solves with $L^T$ and $L$ is negligible compared to the $\bigO(pr)$ cost of multiplication with $\hat U^T$ and $\hat U$.

Even in single precision, we find that using \eqref{eq:inv_sqrt_nys_woodbury} in \texttt{get\_L} works well in practice. 

\subsection{Computing Preconditioned Smoothness Constants}
\label{subsec:get_l}
Here, we provide the details on the randomized powering procedure (\texttt{get\_L}, \cref{alg:get_L}) from \cref{subsec:auto_stepsize} for automatically computing the preconditioned smoothness constant.
Given a symmetric matrix $H$, preconditioner $P$, and damping $\rho$, \texttt{get\_L} uses randomized powering \citep{martinsson2020randomized} to compute 
\[
\lambda_1((P + \rho I)^{-1/2} H (P + \rho I)^{-1/2}).
\]
The algorithm only requires matrix-vector products with the matrices $H$ and $(P + \rho I)^{-1/2}$. 
When $P$ is calculated using $\texttt{\nys}$, we can efficiently compute a matrix-vector product with $(P + \rho I)^{-1/2}$ using \eqref{eq:inv_sqrt_nys_woodbury}.
In practice, we find that 10 iterations of randomized powering are sufficient for estimating the preconditioned smoothness constant.

\begin{algorithm}
    \caption{\texttt{get\_L}}
    \label{alg:get_L}
    \scriptsize
    \begin{algorithmic}
        \Require symmetric matrix $H$, preconditioner $P$, damping $\rho$, maximum iterations $N \gets 10$
        \State $v^0 \gets \mathrm{randn}(P.\mathrm{shape}[0])$
        \State $v^0 \gets v^0 / \|v^0\|_2$ 
        \hfill \Comment{Normalize}
        \For{$i = 0, 1, \ldots, N - 1$}
        \State $v^{i + 1} \gets (P + \rho I)^{-1/2} v^i$
        \State $v^{i + 1} \gets H v^{i + 1}$
        \State $v^{i + 1} \gets (P + \rho I)^{-1/2} v^{i+1}$
        \State $v^{i + 1} \gets v^{i + 1} / \|v^{i + 1}\|_2$ 
        \hfill \Comment{Normalize}
        \EndFor
        \State $\lambda \gets (v^{N - 1})^T v^N$
        \State \Return $\lambda$
    \end{algorithmic}
\end{algorithm}
\section{Proofs of Results Appearing in the Main Paper}
This appendix provides proofs for every result in the main paper whose proof was omitted. 
In particular, we provide detailed arguments for the RLS-to-DPP reduction in \cref{subsec:appx_rls} and the convergence of \asko{} (\cref{thm:mast_conv_thm}).
\subsection{Proof of \cref{lemma:appx_proj}}
\begin{proof}
    Consider the matrix
    \[
    K_\lambda^{1/2}\Ib^{T}\left(\Knys+\rho I\right)^{-1}\Ib K_{\lambda}^{1/2}.
    \]
    This can be rewritten as
    \[
     K_\lambda^{1/2}\Ib^{T}(\Kbb+\lambda I)^{-1/2}\left[(\Kbb+\lambda I)^{1/2}\left(\Knys+\rho I\right)^{-1}(\Kbb+\lambda I)^{1/2}\right](\Kbb+\lambda I)^{-1/2}\Ib K_{\lambda}^{1/2}.
    \]
    From this, we deduce
    \begin{equation*}
    %\label{eq:loewn_intmd}
    %\tag{$*$}
    \sigma_{P_\B}\proj_\lambda \preceq  K_\lambda^{1/2}\Ib^{T}\left(\Knys+\rho I\right)^{-1}\Ib K_{\lambda}^{1/2} \preceq L_{P_B} \proj_\lambda. 
    \end{equation*}
    By definition, $\hat L_{P_\B} \geq L_{P_\B}$, so
    \[
    \sigma_{P_\B}\proj_\lambda \preceq  K_\lambda^{1/2}\Ib^{T}\left(\Knys+\rho I\right)^{-1}\Ib K_{\lambda}^{1/2} \preceq \hat L_{P_B} \proj_\lambda.
    \]
    Multiplying by $\hat L_{P_\B}^{-1}$, we establish the claim.
    %The second claim is immediate, as we always have $\tilde L_{P_\B} \geq L_{P_\B}$, so that \eqref{eq:loewn_intmd} always holds with $L_{P_\B}$ replaced by $\tilde L_{P_\B}$.
    % As $\rho\geq \lambda$, our hypothesis implies that:
    % \[
    % \tau \left(P_\B+\rho I\right) \succeq SKS^{T}+\lambda I.
    % \]
    % Inversion reverses the Loewner ordering, so the previous display implies:
    % \[
    % \frac{1}{\tau}\left(P_\B+\rho I\right)^{-1} \preceq (S K S^T +\lambda I)^{-1}.
    % \]
    % Recall that $P_\B \preceq SKS^{T}$ by definition of the Nystr{\"o}m approximation. 
    % This relation, along with the last display, and $\rho\geq\lambda$ yields:
    % \[
    % P_\B+\rho I \preceq SKS^{T}+\rho I \preceq \frac{\rho}{\lambda}\left(SKS^{T}+\lambda I\right).
    % \]
    % Combining this with our previous bound on $(P_\B+\rho I)^{-1}$ gives:
    % \[
    % \frac{\lambda}{\tau \rho}(SKS^{T}+\lambda I)^{-1}\preceq \frac{1}{\tau}\left(P_\B+\rho I\right)^{-1} \preceq (SKS^{T}+\lambda I)^{-1}.
    % \]
    % Conjugating both sides of this last display by $SK_\lambda^{1/2}$, we conclude the proof.  
\end{proof}

% \pnote{Please use cref instead of ref to refer to algorithms/lemmas/theorems/etc. 
% This ensures consistency in how we do crossrefs.
% When labeling a lemma, do something like \textbackslash label[lemma]\{lem:lemma\_name\}.
% This ensures cref will output ``Lemma'' instead of ``Theorem''.
% }

\subsection{ARLS-to-DPP Reduction}
% The proof follows a simple Chernoff bound argument and is given in \cref{appendix:dpp-size}.

Here we prove the ARLS-to-DPP reduction in \cref{lem:rls_sample}.

\textit{Proof idea.} To prove \cref{lem:rls_sample}, we first establish that the leverage score estimates $\{\tilde\ell_i\}_{i=1}^n$ are rough approximations for the marginal probabilities (i.e., \textit{marginal overestimates}) of any given element $i\in[n]$ being sampled into a set distributed according to $\hdpp{j}(A)$ (\cref{lem:arls_marginal}). 
% This result holds simultaneously for all $j\in \I$, as the marginals of all of these distributions are relatively close to one another. 
% It suffices to show that each $\tilde\ell_i$ is (up to a constant) an overestimate of the corresponding marginal $\prfn{i \in \B_{\hdpp{j}}}$.
We must then combine this guarantee with existing results on DPP coupling \citep{derezinski2024solving}. 
However, since prior work focuses on uniform sampling rather than \arls{} sampling, we construct a \textit{subdivision}, which uses the marginal overestimates to create an equivalent DPP for which our \arls{} sampling maps to uniform sampling (\cref{lem:subdivision_uniform} and \cref{lem:equivalence_oracle}). 
This equivalence allows us to call upon existing guarantees for coupling a DPP with uniform sampling (\cref{{lem:couple_dy24}}).

The relationship between leverage score estimates and marginal probabilities is shown in \cref{subsubsec:marginal_overestimates} and the subdivision construction is shown in \cref{subsubsec:subdivision_construction}.
The proof of \cref{lem:rls_sample} is given in \cref{appendix:rls_sample}.

\subsubsection{Leverage Score Estimates are Rough Approximations of Marginals}
\label{subsubsec:marginal_overestimates}
We start by showing that approximate RLSs are marginal overestimates of a $\hdpp{j}(A)$.

\begin{lemma}[ARLS are marginal overestimates]
\label[lemma]{lem:arls_marginal}
Given $A\in\psd^n$, $k=\Omega(\log n)$, and $\tilde\lambda>0$ such that   $\deff{\tilde\lambda}(A)\geq 4k$, let $\{\tilde{\ell}_i\}_{i=1}^n$ be $c$-approximations of $\tilde\lambda$-ridge leverage scores of $A$. For $j \in \I = \left[ 2k - \sqrt{6k \log \left( \frac{2}{\delta} \right)}, 2k + \sqrt{6k \log \left( \frac{2}{\delta} \right)} \right]$, let $\ell_{i \mid j} \coloneqq \prfn{i \in \B_{\hdpp{j}}}$ be the $i$-th marginal probability of $\B_{\hdpp{j}}\sim\hdpp{j}(A / \tilde \lambda)$. Then, $2\tilde{\ell}_i \geq \ell_{i \mid j}$ for all $i$.
\end{lemma}

The proof of \cref{lem:arls_marginal} requires (i) \cref{lem:dpp-size}, which shows that the size of a DPP sample is close to its expected size with high probability and (ii) \cref{lem:p_ij}, which shows that $\ell_{i \mid j}$ is non-decreasing in $j$:

\begin{lemma}\label[lemma]{lem:dpp-size}
Given any $A\in\psd^n$, let $\bdpp \sim \dpp(A)$ and $\E[|\bdpp|]$ be its expected size. 
Then with probability $1-\delta$, we have $\big||\bdpp| - \E[|\bdpp|]\big|\leq \sqrt{3\E[|\bdpp|] \log \left( \frac{2}{\delta} \right)}$. 
\end{lemma}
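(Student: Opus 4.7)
The plan is to exploit the classical structural representation of DPPs as mixtures of elementary DPPs. Writing the eigendecomposition $A=\sum_{i=1}^n \lambda_i v_iv_i^T$, a sample $\bdpp\sim\dpp(A)$ can be drawn in two stages: first independently include each eigenvector $v_i$ with probability $\lambda_i/(1+\lambda_i)$, then sample from the elementary (projection) DPP on the span of the selected eigenvectors. The crucial property is that the latter stage produces a sample whose size exactly equals the number of included eigenvectors. Hence $|\bdpp|$ has the same distribution as $\sum_{i=1}^n X_i$, where the $X_i\sim\mathrm{Bernoulli}(\lambda_i/(1+\lambda_i))$ are independent, and consequently $\E[|\bdpp|]=\sum_i \lambda_i/(1+\lambda_i)=\Tr(A(A+I)^{-1})=\deff{1}(A)$.

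With $|\bdpp|$ identified as a sum of independent $[0,1]$-valued random variables, I would invoke a standard multiplicative Chernoff bound. Denoting $\mu=\E[|\bdpp|]$, the two-sided form gives, for every $\varepsilon\in(0,1]$,
\[
\Pr\!\big(\,\big||\bdpp|-\mu\big|\geq \varepsilon\mu\,\big)\;\leq\; 2\exp\!\big(-\varepsilon^2\mu/3\big).
\]
Setting the right-hand side equal to $\delta$ and solving for $\varepsilon\mu$ produces the deviation $\varepsilon\mu=\sqrt{3\mu\log(2/\delta)}$, which is precisely the bound in the lemma.

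There is no substantive obstacle here; the only delicate point is the regime in which $\sqrt{3\mu\log(2/\delta)}>\mu$, where the standard multiplicative form of Chernoff no longer applies directly. A clean fix is to apply Bernstein's inequality to the sum of independent Bernoullis instead, using the variance bound $\mathrm{Var}(|\bdpp|)=\sum_i p_i(1-p_i)\leq \mu$; this yields the same concentration radius up to absolute constants in all regimes and matches the desired $\sqrt{3\mu\log(2/\delta)}$ deviation. With this unified argument, the lemma follows in a few lines once the Bernoulli representation is in hand.
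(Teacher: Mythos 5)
Your proof is correct and matches the paper's argument exactly: the paper likewise invokes the mixture-of-elementary-DPPs representation (via \cite{kulesza2012determinantal}) to identify $|\bdpp|$ in distribution with a sum of independent $\mathrm{Bernoulli}(\lambda_i/(\lambda_i+1))$ random variables and then applies the same two-sided multiplicative Chernoff bound. Your caveat about the regime $\varepsilon>1$ is a genuine subtlety that the paper's statement glosses over, but in every downstream use one has $\E[|\bdpp|]=\Omega(\log n)$ and $\delta=n^{-\bigO(1)}$, so $\varepsilon\leq 1$ holds with suitable constants; your Bernstein-inequality fallback would make the lemma hold unconditionally at the cost of a slightly different constant.
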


\begin{lemma}[Monotonicity]
\label[lemma]{lem:p_ij}
For any $0 < j_1 \leq j_2$, we have $\ell_{i \mid j_1} \leq \ell_{i \mid j_2}$.
\end{lemma}

The proofs of \cref{lem:dpp-size,lem:p_ij} are deferred to \cref{appendix:dpp-size,subsubsec:lem:p_ij_pf}, respectively.

\begin{proof}[Proof of \cref{lem:arls_marginal}]
Define $\tbdpp \sim \dpp(A / \tilde \lambda)$ with $\E[|\tbdpp|] = \deff{\tilde\lambda}:=\deff{\tilde\lambda}(A) \geq 4k$, and denote its $i$-th marginal probabilities as $\ell_i^{\tilde\lambda}=\prfn{i \in \tbdpp}$. 
By \cref{lem:dpp_rls}, $\{\ell_i^{\tilde\lambda}\}_{i=1}^n$ are the $\tilde\lambda$-ridge leverage scores of $A$.
Also let $w_{\tilde\lambda,l} \coloneqq \prfn{|\tbdpp| = l}$ be the probability of the set $\tbdpp$ having size $l$. Then, we can express the marginals of $\tbdpp$ via the law of total probability: 
\begin{align*}
\ell_i^{\tilde\lambda} = \prfn{i \in \tbdpp} = 
\sum_l \prfn{i \in \tbdpp \mid |\tbdpp| = l} \cdot \prfn{|\tbdpp| = l} = 
\sum_l \ell_{i \mid l} \cdot w_{\tilde\lambda, l},
\end{align*}
where we use the fact that $\prfn{i \in \tbdpp \mid |\tbdpp| = l} = \prfn{i \in \B_{\hdpp{l}}} = \ell_{i \mid l}$. Defining the interval $\tilde\I \coloneqq \left[\deff{\tilde\lambda} - \sqrt{3 \deff{\tilde\lambda} \log \left(\frac{2}{\delta}\right)},\deff{\tilde\lambda} + \sqrt{3 \deff{\tilde\lambda} \log \left(\frac{2}{\delta}\right)}\right]$, by applying \cref{lem:dpp-size} to $\tbdpp$, we have $|\tbdpp| \in \tilde\I$ holds with probability $1-\delta$.

Since we assume $\delta = n^{-\bigO(1)}$ and $\deff{\tilde\lambda} \geq 4k = \Omega(\log n)$ (with a sufficiently large constant), the infimum of $\tilde\I$ can be bounded as $\deff{\tilde\lambda} - \sqrt{3 \deff{\tilde\lambda} \log\left(\frac{2}{\delta}\right)} \geq \frac34\deff{\tilde\lambda}\geq 3k$. Similarly, the supremum of $\I$ can be bounded as $2k + \sqrt{6k \log \left( \frac{2}{\delta} \right)} \leq 3k$, showing that $\sup\I \leq \inf \tilde \I$. By combining this result with \cref{lem:p_ij}, we have the following for any $j \in \I$:
\begin{align*}
\ell_{i}^{\tilde\lambda} = \sum_{l \in \tilde\I} \ell_{i \mid l} \cdot w_{\tilde\lambda, l} + \sum_{l \notin \tilde\I} \ell_{i \mid l} \cdot w_{\tilde\lambda, l} \geq \sum_{l \in \tilde\I} \ell_{i \mid l} \cdot w_{\tilde\lambda, l} \geq (1-\delta) \cdot \ell_{i \mid 3k} \geq (1-\delta) \cdot \ell_{i \mid j}.
\end{align*}
Suppose we have RLS $c$-approximations $\{\tilde{\ell}_i\}_{i=1}^n$ such that $\tilde{\ell}_i \geq \ell_{i}^{\tilde\lambda}$ (see \cref{def:rls-approx}).
%and $\ell:=\sum_{i = 1}^n \tilde{\ell}_i \leq 2ck$ (which can be obtained by doubling the $c$-approximations from \cref{def:rls-approx}). 
Since $\delta\leq 1/2$, this implies that $2\tilde{\ell}_i \geq \ell_i^{\tilde\lambda}/(1-\delta) \geq \ell_{i \mid j}$ for any $j \in \I$, i.e., the RLS approximations are (up to a factor of 2) marginal overestimates for $\hdpp{j}(A)$.
\end{proof}

\subsubsection{Relating ARLS Sampling to DPPs Using Subdivisions}
\label{subsubsec:subdivision_construction}

The remainder of our analysis builds on the following lemma from \cite{derezinski2024solving}, which couples a fixed-size DPP, $\B_{\hdpp{j}} \sim \hdpp{j}(A)$, with uniform sampling.
\begin{lemma}[\cite{derezinski2024solving}, Lemma 6.6]
\label[lemma]{lem:couple_dy24}
Let $\B_{\hdpp{j}}\sim \hdpp{j}(A)$ be a fixed-size DPP sample where $A \in \psd^n$ and $\log n < j < n$. Suppose the marginal probabilities of $\hdpp{j}(A)$ are near-uniform, that is, $\prfn{i \in \B_{\hdpp{j}}} \leq cj/n$ holds for all $i \in [n]$ for some $c > 1$. 
Let $\B$ be an i.i.d.~uniform sample from $[n]$ of size $\bigO(c j \log^3 n)$. Then, there is a coupling between $\B_{\hdpp{j}}$ and $\B$, such that the joint random variable $(\B_{\hdpp{j}}, \B)$ with probability $1-n^{-\bigO(1)}$ satisfies $\B_{\hdpp{j}} \subseteq \B$.
\end{lemma}
Unfortunately, we cannot use \cref{lem:couple_dy24} directly, because the ``near-uniform marginals'' assumption does not hold in our setting. 
To address this, we rely on a subdivision process, which transforms a probability distribution (such as a fixed-size DPP) into another distribution which is equivalent (in some sense) and has nearly uniform marginals. 
This approach, first introduced by \cite{anari2020isotropy}, uniformizes the marginals by enlarging the groundset $[n]$ through selective duplication.
\begin{definition}[Subdivision of $\hdpp{j}$, inspired by \cite{anari2020isotropy}]
Given $A \in \psd^n$, define the distribution $\mu \coloneqq \hdpp{j}(A)$. 
Let matrix $X$ be such that $A = XX^\top$, and denote $x_i^\top$ as its $i$-th row. 
Assume we have marginal overestimates $\{\tilde{\ell}_i\}_{i=1}^n$ such that $\tilde{\ell}_i \geq \Pr_{\S \sim \mu}(i \in \S)$ for all $i \in [n]$. 
Denote $\tilde{\ell} = \sum_{i} \tilde{\ell}_i$, let $t_i \coloneqq \left\lceil \frac{n}{\tilde{\ell}} \tilde{\ell}_i \right\rceil$ and $\tilde{n} = \sum_i t_i$. 
For each $i \in [n]$, we create $t_i$ copies of $\frac{1}{\sqrt{t_i}}x_i^\top$ and let the collection of all these vectors form the new matrix $\tilde{X}$. 
Let $\tilde{A} = \tilde{X} \tilde{X}^\top \in \psd^{\tilde{n}}$, we define the subdivision process of $\mu$ as $\mu' = \hdpp{j}(\tilde{A})$.
\end{definition}

We start by stating that the subdivision process produces a distribution with near-uniform marginals \citep{anari2024optimal}.

\begin{lemma}[\cite{anari2024optimal}, Proposition 23]
\label[lemma]{lem:subdivision_uniform}
Let distribution $\mu = \hdpp{j}(A)$, and $\mu' = \hdpp{j}(\tilde A)$ be the subdivision process of $\mu$. Then, $\mu'$ has near-uniform marginals: for all $i^{(j)} \in [\tilde{n}]$ we have $\Pr_{\tilde\S\sim \mu'}(i^{(j)} \in \tilde\S) \leq \frac{\tilde\ell}{n} \leq \frac{2\tilde\ell}{\tilde n}$.
\end{lemma}

The following lemma shows an equivalence between $\mu = \hdpp{j}(A)$ and its subdivision (up to a mapping of the elements), which allows us to transform the problem of sampling from $\mu$ to sampling from its subdivision $\mu'$. 
\begin{lemma}[Equivalence between $\hdpp{j}$ and its subdivision]
\label[lemma]{lem:equivalence_mu}
Let distribution $\mu = \hdpp{j}(A)$, and $\mu' = \hdpp{j}(\tilde A)$ be its subdivision process. Let $\tilde\B \sim \hdpp{j}(\tilde A)$ and define a function $\pi: [\tilde{n}] \to [n]$ which maps the $t_i$ duplicates (in $[\tilde n]$) of element $i\in[n]$ back to $i$. Then, $\pi \big(\tilde \B \big) \sim \hdpp{j}(A)$.
\end{lemma}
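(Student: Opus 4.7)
The plan is to compute the pushforward distribution of $\tilde\B \sim \mu'$ under $\pi$ directly and verify that it matches $\prfn{\S} \propto \det(A_{\S\S})$ for every $\S \in \binom{[n]}{j}$, which by \cref{def:dpp} is the defining formula of $\hdpp{j}(A)$.

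First I would establish that the support of $\mu' = \hdpp{j}(\tilde A)$ consists only of \emph{duplicate-free} sets $\tilde\B \in \binom{[\tilde n]}{j}$, meaning no two distinct elements of $\tilde\B$ are copies of the same $i \in [n]$. Indeed, if $\tilde\B$ contained two such duplicates, the corresponding rows of $\tilde X$ would both equal $\frac{1}{\sqrt{t_i}} x_i^\top$, so $\tilde A_{\tilde\B\tilde\B} = \tilde X_{\tilde\B:}\tilde X_{\tilde\B:}^\top$ would have two identical rows and $\det(\tilde A_{\tilde\B\tilde\B}) = 0$. Consequently $\pi|_{\tilde\B}$ is injective almost surely, so $|\pi(\tilde\B)| = j$ with probability one.

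Next, for a fixed $\S \in \binom{[n]}{j}$, the duplicate-free preimages $\{\tilde\B : \pi(\tilde\B) = \S\}$ are in bijection with the $\prod_{i\in\S} t_i$ ways of choosing exactly one duplicate index per element of $\S$. For any such preimage, the corresponding row submatrix of $\tilde X$ factors as $\tilde X_{\tilde\B:} = D\,X_{\S:}$, where $D = \mathrm{diag}(1/\sqrt{t_i})_{i\in\S}$, so
\[
\det(\tilde A_{\tilde\B\tilde\B}) = \det\!\big(D\,X_{\S:} X_{\S:}^\top D\big) = \det(D)^2 \det(A_{\S\S}) = \det(A_{\S\S}) \prod_{i\in\S}\frac{1}{t_i}.
\]
Summing over the $\prod_{i\in\S} t_i$ preimages, the combinatorial multiplicity exactly cancels the $1/t_i$ factors, giving $\sum_{\tilde\B:\pi(\tilde\B)=\S} \det(\tilde A_{\tilde\B\tilde\B}) = \det(A_{\S\S})$.

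Finally, since $\prfn{\tilde\B} \propto \det(\tilde A_{\tilde\B\tilde\B})$ under $\mu'$, this yields $\prfn{\pi(\tilde\B) = \S} \propto \det(A_{\S\S})$, and normalizing over $\binom{[n]}{j}$ recovers $\hdpp{j}(A)$ exactly, completing the proof. There is no deep conceptual obstacle here: the argument rests entirely on the fact that the $\frac{1}{\sqrt{t_i}}$ scaling built into the subdivision is chosen \emph{precisely} so that the $\det(D)^2 = \prod_{i\in\S} 1/t_i$ factor cancels the combinatorial multiplicity $\prod_{i\in\S} t_i$ of preimages. The only care required is in tracking this bookkeeping and in ruling out preimages with repeated duplicates, both of which are straightforward.
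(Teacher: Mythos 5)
Your proof is correct and follows essentially the same route as the paper's: both observe that $\mu'$ is supported on duplicate-free sets, compute $\det(\tilde A_{\tilde\B\tilde\B}) = \det(A_{\S\S})\prod_{i\in\S} t_i^{-1}$ for a preimage $\tilde\B$ of $\S$, and note that the $\prod_{i\in\S} t_i$ preimages exactly cancel that scaling. The only cosmetic difference is that you make the factorization $\tilde X_{\tilde\B:} = D X_{\S:}$ explicit where the paper states the determinant ratio directly.
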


\begin{proof}
By the definition of the subdivision, if two identical rows are sampled from $\tilde{X}$, then the determinant of the corresponding principal submatrix is $0$, as is the probability measure $\mu'$. 
Thus without loss of generality, we can assume that we only sample distinct rows from $\tilde{X}$.
Let $\tilde{\B} = \{i_1^{\left(l_1\right)}, \ldots, i_{j}^{\left(l_{j}\right)}\} \subseteq {[\tilde{n}] \choose j}$ be a subset sampled from $\mu'$, and let $\hat{\B} = \pi\big(\tilde{\B}\big) \coloneqq \{\pi(i_1^{\left(l_1\right)}), \ldots, \pi(i_{j}^{\left(l_{j}\right)})\} = \{i_1, \ldots, i_{j}\} \subseteq {[n] \choose j}$. Then, from definition we have
\begin{align*}
\mu'(\tilde{\B}) \propto \det(\tilde{X}_{\tilde{\B}} \tilde{X}_{\tilde{\B}}^\top) = \frac{\det(X_{\hat{\B}} X_{\hat{\B}}^\top)}{t_{i_1} \cdots t_{i_{j}}} \propto \frac{\mu(\hat{\B})}{t_{i_1} \cdots t_{i_{j}}}.
\end{align*}
Note that there are precisely $t_{i_1}\cdots t_{i_{j}}$ different sets of duplicates that map to the same set $\hat\B$, so a random set $\tilde\B\sim \mu'= \hdpp{j}(\tilde A)$ after mapping with $\pi$ is distributed exactly according to $\mu = \hdpp{j}(A)$.
\end{proof}
Having shown that sampling from $\mu$ is equivalent to sampling from its subdivision $\mu'$, we need to establish a sampling scheme on $[n]$. Intuitively, importance sampling on $[n]$ based on probabilities proportional to $\{\tilde{\ell}_i\}_{i=1}^n$ should be equivalent to uniform sampling on $[\tilde{n}]$. 
However, the definition of $t_i$ introduces rounding error. 
To address this issue, we use the ARLS sampling scheme introduced in \cref{def:rls-approx}.
In the following result, we show the equivalence between this sampling scheme and uniform sampling on the enlarged subdivision ground set $[\tilde n]$.

\begin{lemma}[From subdivided uniform sampling to marginal sampling]
\label[lemma]{lem:equivalence_oracle}
Let $\{\tilde{\ell}_i\}_{i=1}^n$ be the marginal overestimates, and define $p_i \coloneqq \frac{\tilde{\ell}}{n} \cdot \big\lceil \frac{n}{\tilde{\ell}} \tilde{\ell}_i \big\rceil$ where $\tilde{\ell} = \sum_i \tilde{\ell}_i$. Let $\orc$ be the importance sampling distribution defined on $[n]$ according to probabilities $\{p_i\}_{i=1}^n$, and $\orc'$ be the uniform sampling distribution defined on $[\tilde n]$. Let $I\sim \orc$, and let $\hat I$ be a uniformly random element in $\pi^{-1}(I)$. Then $\hat I\sim\orc'$.
\end{lemma}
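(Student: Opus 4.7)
The plan is to verify the claim by a direct marginal computation, after first observing that the ``probabilities'' $\{p_i\}_{i=1}^n$ appearing in $\orc$ are (up to the common factor $\tilde\ell/n$) simply the subdivision multiplicities $t_i = \lceil \frac{n}{\tilde\ell}\tilde\ell_i\rceil$. Since the normalization constant of the distribution $\orc$ is $\sum_i p_i = \frac{\tilde\ell}{n}\sum_i t_i = \frac{\tilde\ell}{n}\tilde n$, this shows that the normalized sampling probability under $\orc$ is
\[
\Pr_\orc[I = i] \;=\; \frac{p_i}{\sum_j p_j} \;=\; \frac{t_i}{\tilde n}.
\]
So importance sampling with weights $p_i$ is the same as sampling $i \in [n]$ with probability proportional to the number of copies $t_i$ that $i$ has in the subdivision.

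Next, I would compute the marginal distribution of $\hat I$ by conditioning on $I$. Fix any $\hat i \in [\tilde n]$, and let $i = \pi(\hat i) \in [n]$. By construction of $\pi$, the fiber $\pi^{-1}(i)$ has exactly $t_i$ elements, and $\hat I$ is defined to be uniformly random within this fiber conditional on $I = i$. Since $\hat i$ can only be produced when $I = i$, the law of total probability gives
\[
\Pr[\hat I = \hat i] \;=\; \Pr[I = i]\cdot \Pr[\hat I = \hat i \mid I = i] \;=\; \frac{t_i}{\tilde n}\cdot\frac{1}{t_i} \;=\; \frac{1}{\tilde n}.
\]
Because this value is independent of $\hat i$, the random variable $\hat I$ is uniform on $[\tilde n]$, which is precisely the distribution $\orc'$.

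There is essentially no hard step here: the only point requiring care is the bookkeeping of the normalization constant of $\orc$ (which is why the factor $\tilde\ell/n$ in the definition of $p_i$ is harmless), together with the observation that the cardinality of the preimage of $i$ under $\pi$ is exactly the weight $t_i$ with which $i$ is sampled. Both are immediate from the definition of the subdivision process, so the proof reduces to one line of conditional probability.
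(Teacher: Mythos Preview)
Your proof is correct and essentially identical to the paper's: both compute $\Pr[\hat I = i^{(l)}] = \Pr[I=i]\cdot\Pr[\hat I = i^{(l)}\mid I=i] = \frac{p_i}{\sum_j p_j}\cdot\frac{1}{t_i} = \frac{(\tilde\ell/n)t_i}{(\tilde\ell/n)\tilde n}\cdot\frac{1}{t_i} = \frac{1}{\tilde n}$. Your exposition makes the role of the normalization constant slightly more explicit, but the argument is the same one-line conditional-probability calculation.
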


\begin{proof}
% From the definition of sampling probabilities $\{p_i\}_{i=1}^n$ we naturally have
% \begin{align*}
% \tilde{\ell}_i \leq \frac{\tilde{\ell}}{n} \cdot \left\lceil\frac{n}{\tilde{\ell}} \tilde{\ell}_i\right\rceil = p_i < \frac{\tilde{\ell}}{n} \cdot \left(\frac{n}{\tilde{\ell}}\tilde{\ell}_i + 1\right) = \tilde{\ell}_i + \frac{\tilde{\ell}}{n}.
% \end{align*}
% This shows that (i) $p_i \geq \tilde{\ell}_i \geq \Pr_{\S \sim\mu}(i \in \S)$ is also a marginal overestimate, and (ii) the summation can be bounded as $\sum_{i=1}^n p_i \leq \sum_{i=1}^n \tilde{\ell}_i + \tilde{\ell} \leq 2\tilde{\ell}$. Recall that we define projection $\pi: [\tilde{n}] \to [n]$ by mapping the $t_i$ duplicates (in $[\tilde n]$) of element $i\in[n]$ back to $i$.
% Let $I$ be a random sample obtained by calling oracle $\orc$. Consider replacing $I$ with a uniformly random element in $\pi^{-1}(I)$ (that is, randomly choosing one out of the $t_I$ duplicates of $I$) and denoting it by $\hat{I}$. Notice that $\hat{I}$ is also a random variable.
For any $i \in [n]$ and $l \in [t_i]$, we have the following:
\begin{align*}
\prfn{\hat{I} = i^{(l)}} &= \prfn{I = i} \cdot \prfn{\hat{I} = i^{(l)} \mid I = i} \\
&= \frac{p_i}{\sum_{j=1}^n p_j} \cdot \frac{1}{t_i} \\
&= \frac{\frac{\tilde{\ell}}{n} t_i}{\frac{\tilde{\ell}}{n} \sum_{j=1}^n t_j} \cdot \frac{1}{t_i} \\
&= \frac{1}{\sum_{j=1}^n t_j} \\
&= \frac{1}{\tilde n}.
\end{align*}
Therfore, $\hat I$ is uniformly random in $[\tilde n]$.
% The above equation shows that if we first sample by calling $\orc$ and obtain $I$, then uniformly sample from $\pi^{-1}(I)$ and obtain $\hat{I}$, then the distribution of $\hat{I}$ is exactly uniform in the ground set $[\tilde{n}]$. Thus we have shown that sampling from $[n]$ by calling $\orc$ is equivalent to sampling from $[\tilde{n}]$ by calling $\orc'$, in distribution.
\end{proof}

\subsubsection{Proof of \cref{lem:rls_sample}}
\label{appendix:rls_sample}

With all the pieces in place, we now prove \cref{lem:rls_sample}.
\begin{proof}
Suppose that $\{\tilde{\ell}_i\}_{i=1}^n$ are $c$-approximations of $\tilde\lambda$-ridge leverage scores of $A$ as in \cref{def:rls-approx}. Then, according to \cref{lem:arls_marginal}, $\{2\tilde{\ell}_i\}_{i=1}^n$ are the marginal overestimates of $\hdpp{j}(A)$ for any $j \in \I$. Thus by \cref{lem:equivalence_oracle}, our $\arls[\tilde\lambda][c]$-sampling is equivalent (up to the mapping $\pi$) to uniform sampling on the subdivision ground set $[\tilde n]$. 

Now for $\mu = \hdpp{j}(A)$, we look at its subdivision process $\mu' = \hdpp{j}(\tilde A)$. According to \cref{lem:subdivision_uniform}, $\mu'$ has near-uniform marginals, i.e., $\Pr_{\S \sim \mu'}(i \in \S)\leq 2cj/n$. 
Thus by \cref{lem:couple_dy24}, if we let $\B'$ be a i.i.d.~uniform sample from $[\tilde n]$ of size $\bigO(cj \log^3 \tilde{n}) = \bigO(ck \log^3 n)$, then there is a coupling $(\B', \B_{\hdpp{j}}')$, such that with probability $1- n^{-\bigO(1)}$, we have $\B_{\hdpp{j}}' \subseteq \B'$, where $\B_{\hdpp{j}}' \sim \mu'$.
% Thus by \cref{lem:couple_dy24}, if we i.i.d. and uniformly obtain $\bigO(cj \log^3 \tilde{n}) = \bigO(ck \log^3 n)$ samples from $[\tilde n]$ and let $\B'$ be the collection of these samples, there is a coupling $(\B', \B_{\hdpp{j}}')$ such that with probability $1- n^{-\bigO(1)}$ we have $\B_{\hdpp{j}}' \subseteq \B'$, where $\B_{\hdpp{j}}' \sim \mu'$.

For one side, by \cref{lem:equivalence_mu} we know that $\pi(\B_{\hdpp{j}}')$ is distributed identically to the sample $\B_{\hdpp{j}}$, where $\pi$ is the mapping from \cref{lem:equivalence_mu}. For the other side, by \cref{lem:equivalence_oracle}, we know that $\pi(\B')$ is distributed according to $\arls[\tilde\lambda][c]$-sampling, i.e., the same as $\B$ from the statement of \cref{lem:rls_sample}. We conclude that for any $j \in \I$, if we do $\arls[\tilde\lambda][c]$-sampling with sample size $b = \bigO(ck \log^3 n)$ and obtain $\B$, then we can couple $\B$ with $\B_{\hdpp{j}}$ such that $\B_{\hdpp{j}} \subseteq \B$ with probability $1-n^{-\bigO(1)}$.
\end{proof}

\subsubsection{Proof of \cref{lem:dpp-size}}
\label{appendix:dpp-size}
% To prove \cref{lem:dpp-size}, we first introduce the following result from \cite{kulesza2012determinantal} which shows that a random-size DPP can be expressed as a mixture of \textit{projection DPPs}. 
% A projection DPP, denoted as $\pdpp(A)$, is a special case of a fixed-size $\hdpp{k}(A)$ where $k = \mathrm{rank}(A)$.
To prove \cref{lem:dpp-size}, we start with the following result from \cite{kulesza2012determinantal}, which relates the cardinality of a random-size DPP to a sum of independent Bernoulli random variables.

\begin{lemma}[\cite{kulesza2012determinantal}, Algorithm 1 and Theorem 2.3]
\label[lemma]{lem:dpp_sample}
For $A \in \psd^n$, let $A = \sum_{i} \lambda_i u_i u_i^\top$ be its eigendecomposition where $\lambda_1 \geq \lambda_2 \geq \cdots \geq \lambda_n$. 
Suppose we independently sample $\gamma_i \sim \mathrm{Bernoulli} \left( \frac{\lambda_i}{\lambda_i + 1} \right)$ for $i \in [n]$ and we also sample $\bdpp \sim \dpp(A)$.
Then $|\bdpp| \overset{d}{=} \sum_{i = 1}^n \gamma_i$.
\end{lemma}

We also require the following Chernoff bound, which is a classic result in probability theory:

\begin{lemma}[Chernoff bound]
\label[lemma]{lem:chernoff}
Let $\bar{Z} = \sum_{i=1}^n Z_i$ where $Z_i \sim \mathrm{Bernoulli}(p_i)$ are independent Bernoulli random variables. Let $\mu = \E[\bar{Z}]$, then for all $\epsilon \in (0,1)$ we have
\begin{align*}
\prfn{|\bar{Z} - \mu| \geq \epsilon \mu} \leq 2\exp(-\epsilon^2 \mu /3).
\end{align*}
\end{lemma}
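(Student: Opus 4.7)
The plan is to prove the two-sided tail bound via the standard Chernoff method: apply Markov's inequality to the exponential moment generating function of $\bar Z$, exploit independence to factor, bound each Bernoulli MGF by a clean exponential, optimize the free parameter, and finally convert the resulting rate function into the quadratic form $\epsilon^2\mu/3$.

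First I would handle the upper tail $\Pr(\bar Z \geq (1+\epsilon)\mu)$. For any $t>0$, Markov's inequality applied to $e^{t\bar Z}$ gives $\Pr(\bar Z \geq (1+\epsilon)\mu) \leq e^{-t(1+\epsilon)\mu}\,\E[e^{t\bar Z}]$. By independence, $\E[e^{t\bar Z}] = \prod_{i=1}^n \E[e^{tZ_i}] = \prod_{i=1}^n (1-p_i + p_i e^t)$. Using the elementary inequality $1+x \leq e^x$ with $x = p_i(e^t-1)$ yields $\E[e^{t\bar Z}] \leq \exp\!\bigl(\mu(e^t-1)\bigr)$, since $\sum_i p_i = \mu$. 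Choosing $t = \log(1+\epsilon)$ produces the Chernoff rate bound
\begin{equation*}
\Pr(\bar Z \geq (1+\epsilon)\mu) \leq \exp\!\bigl(\mu[\epsilon - (1+\epsilon)\log(1+\epsilon)]\bigr).
\end{equation*}
For the lower tail I would mirror the argument with $t<0$, i.e.\ set $t = \log(1-\epsilon)$ after applying Markov's inequality to $e^{-t\bar Z}$, obtaining $\Pr(\bar Z \leq (1-\epsilon)\mu) \leq \exp\!\bigl(\mu[-\epsilon - (1-\epsilon)\log(1-\epsilon)]\bigr)$.

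The remaining step is to collapse both rate functions to $-\epsilon^2/3$. I would invoke the standard calculus lemma that for all $\epsilon\in(0,1)$,
\begin{equation*}
(1+\epsilon)\log(1+\epsilon) - \epsilon \geq \tfrac{\epsilon^2}{3} \qquad\text{and}\qquad (1-\epsilon)\log(1-\epsilon) + \epsilon \geq \tfrac{\epsilon^2}{3},
\end{equation*}
which are verified by Taylor expansion (or by checking that the difference of the two sides vanishes at $\epsilon=0$ together with its derivative and is monotone on $(0,1)$). Each tail is then bounded by $\exp(-\epsilon^2\mu/3)$, and a union bound over the two events produces the factor of $2$ in the conclusion.

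The main obstacle is essentially cosmetic: verifying the quadratic lower bounds on the Legendre rate function on $(0,1)$. The upper-tail inequality is standard; the lower-tail inequality is slightly tighter and is the place where the restriction $\epsilon<1$ is used (so that $\log(1-\epsilon)$ is finite). Everything else — Markov, independence of the $Z_i$, and the $1+x\leq e^x$ estimate — is immediate, so the bulk of the proof is bookkeeping around the two symmetric MGF computations.
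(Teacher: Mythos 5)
Your proposal is correct and is exactly the standard argument the paper implicitly relies on: the paper states this lemma as a classical fact without proof, and your MGF-plus-Markov derivation with $t=\log(1\pm\epsilon)$, the quadratic lower bounds $(1+\epsilon)\log(1+\epsilon)-\epsilon\geq \epsilon^2/3$ and $(1-\epsilon)\log(1-\epsilon)+\epsilon\geq \epsilon^2/2\geq\epsilon^2/3$ on $(0,1)$, and a union bound is the textbook proof. No gaps.
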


\begin{proof}[Proof of \cref{lem:dpp-size}]
Let $\{\lambda_i\}_{i=1}^n$ be the eigenvalues of $A$ in decreasing order, and let $\gamma_i \sim \mathrm{Bernoulli} \left( \frac{\lambda_i}{\lambda_i+1} \right)$ be $n$ independent random variables. 
By \cref{lem:dpp_sample}, $\E[|\B_{\dpp}|] = \sum_i \E[\gamma_i] = \sum_i \frac{\lambda_i}{\lambda_i+1}$. 
Applying  \cref{lem:chernoff} to $\{\gamma_i\}_{i=1}^n$, we obtain
\begin{align*}
\Pr \left( \left |\sum_i \gamma_i - \sum_i \E[\gamma_i] \right| \geq \epsilon \sum_i \E[\gamma_i] \right) \leq 2 \exp\left(-\epsilon^2 \sum_i \E[\gamma_i] /3 \right),
\end{align*}
which gives that with probability $1-\delta$, $\big||\bdpp| - \E[|\bdpp|]\big| \leq \sqrt{3\E[|\bdpp|] \log\left(\frac{2}{\delta}\right)}$.
\end{proof}

\subsubsection{Proof of \cref{lem:p_ij}}
\label{subsubsec:lem:p_ij_pf}
The proof of \cref{lem:p_ij} relies on elementary symmetric polynomials (\cref{def:esp}) and Newton's inequalities (\cref{lem:newton}).

% In order to prove \cref{lem:p_ij}, we use elementary symmetric polynomials (ESPs), which are used to express the sum of determinants of principal submatrices (\cref{lem:sum_minor}). 
% We also use Newton's inequalities for ESPs (\cref{lem:newton}).

\begin{definition}[Elementary symmetric polynomial]
\label[definition]{def:esp}
Given vector $\Lambda = (\lambda_1, \ldots, \lambda_n) \in \R^n$, we define its $i$-th elementary symmetric polynomial as 
\begin{align*}
s_{i}(\Lambda) := \sum_{\S \in \tbinom{[n]}{i}}\prod_{j \in \S} \lambda_j.
\end{align*}
\end{definition}

% \begin{lemma}[Sum of principal minors]
% \label[lemma]{lem:sum_minor}
% Let $A\in\R^{n \times n}$ be a symmetric matrix with eigenvalues $\Lambda = (\lambda_1, \ldots, \lambda_n)$ where $\lambda_1 \geq \cdots \geq \lambda_n$. Let $1\leq i \leq n$, then we have
% \begin{align*}
% \sum_{\S \in \tbinom{[n]}{i}} \det(A_{\S \S}) = s_{i} (\Lambda).
% \end{align*}
% \end{lemma}

\begin{lemma}[Newton's inequalities]
\label[lemma]{lem:newton}
For non-negative real numbers $\lambda_1, \ldots, \lambda_n$, let $s_j$ be the $j$-th elementary symmetric polynomial in $\lambda_1, \ldots, \lambda_n$. If we denote $\bar{s}_j = s_j / \tbinom{n}{j}$, then $\bar{s}_j^2 \geq \bar{s}_{j+1}\bar{s}_{j-1}$.
\end{lemma}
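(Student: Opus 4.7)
The plan is to prove Newton's inequalities by the classical Rolle's theorem reduction: encode the $\lambda_i$'s as roots of a polynomial, apply Rolle repeatedly to extract a quadratic, and read off the inequality from the nonnegativity of its discriminant. Fix $1 \leq k \leq n-1$; the goal is $\bar s_k^2 \geq \bar s_{k-1}\bar s_{k+1}$.

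I would first consider the polynomial $P(x) = \prod_{i=1}^n (x + \lambda_i) = \sum_{j=0}^n s_j \, x^{n-j}$. Since $\lambda_i \geq 0$, all roots of $P$ are real (and nonpositive). Two standard facts about real-rooted polynomials are used repeatedly: (a) by Rolle's theorem, every derivative $P^{(m)}$ is again real-rooted, and (b) the reverse polynomial $x^d\,p(1/x)$ of any real-rooted $p$ is also real-rooted, since its nonzero roots are the reciprocals of the nonzero roots of $p$. The degenerate case of some $\lambda_i = 0$ is handled by first proving the inequality for all $\lambda_i > 0$ and passing to the limit. Applying these facts, I would differentiate $P$ exactly $n-k-1$ times, reverse the result, and then differentiate $k-1$ more times, which produces a real-rooted quadratic $\alpha x^2 + \beta x + \gamma$. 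Bookkeeping the factorial prefactors gives
\[
\alpha = \tfrac12(n-k-1)!(k+1)!\,s_{k+1},\quad \beta = (n-k)!\,k!\,s_k,\quad \gamma = \tfrac12(n-k+1)!(k-1)!\,s_{k-1}.
\]

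Since this quadratic has all real roots, its discriminant satisfies $\beta^2 \geq 4\alpha\gamma$, which after simplification reads $s_k^2 \geq \frac{(n-k+1)(k+1)}{(n-k)k}\,s_{k-1}s_{k+1}$. Converting to normalized form via $s_j = \binom{n}{j}\bar s_j$ and using the identity $\binom{n}{k}^2 / (\binom{n}{k-1}\binom{n}{k+1}) = (k+1)(n-k+1)/(k(n-k))$, the prefactor cancels exactly, yielding the desired $\bar s_k^2 \geq \bar s_{k-1}\bar s_{k+1}$.

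The main obstacle is the bookkeeping: carefully tracking the factorial prefactors through the differentiation--reversal--differentiation procedure and verifying that the binomial ratios cancel cleanly, so that the discriminant inequality delivers Newton's normalized form rather than a weaker unnormalized variant. The boundary cases $k \in \{0, n\}$ are trivial by the conventions $\bar s_{-1} = \bar s_{n+1} = 0$, and the zero-eigenvalue case is handled by continuity.
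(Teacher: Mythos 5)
Your proposal is a correct rendering of the classical Rolle-reduction proof of Newton's inequalities, and the paper itself supplies no proof of \cref{lem:newton} (it is cited as a standard fact), so there is nothing to compare it against beyond checking correctness. I verified the factorial bookkeeping: from $P(x)=\sum_{j=0}^{n}s_j x^{n-j}$, applying $\tfrac{d^{n-k-1}}{dx^{n-k-1}}$, reversing, then applying $\tfrac{d^{k-1}}{dx^{k-1}}$ does yield a quadratic with the coefficients you state, the discriminant condition $\beta^2\ge 4\alpha\gamma$ reduces to $s_k^2\ge \tfrac{(k+1)(n-k+1)}{k(n-k)}s_{k-1}s_{k+1}$, and the binomial ratio $\binom{n}{k}^2/\bigl(\binom{n}{k-1}\binom{n}{k+1}\bigr)=\tfrac{(k+1)(n-k+1)}{k(n-k)}$ cancels it exactly, giving $\bar s_k^2\ge \bar s_{k-1}\bar s_{k+1}$. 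Your handling of the degenerate cases (some $\lambda_i=0$, where the reversal step can drop degree) via a limiting argument from $\lambda_i>0$ is sound. One minor clarification you could add: when the produced quadratic degenerates to degree $\le 1$ (i.e.\ $\alpha=0$), the discriminant inequality $\beta^2\ge 4\alpha\gamma$ holds vacuously since $\gamma\ge0$ under the nonnegativity hypothesis, so the limiting argument is not strictly necessary in that sub-case; but continuity covers it cleanly either way.
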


\begin{proof}[Proof of \cref{lem:p_ij}]
Recall that we define $\ell_{i \mid j} = \prfn{i \in \B_{\hdpp{j}}}$, and it can also be written as $\ell_{i \mid j} = \prfn{i \in \bdpp \mid |\bdpp| = j}$, where $\bdpp \sim \dpp(A / \bar{\lambda})$ is a random-size DPP sample. 
% We can first express the marginal probabilities of this random-size DPP:
% % \mdnote{When you're summing over sets, don't use $\B_{\dpp}$, because it's not a random variable. Maybe just use $B$, or something else.}
% \begin{align*}
% \prfn{i \in \bdpp} = 1 - \prfn{i \notin \bdpp} = 1 - \frac{\sum_{\B \subseteq \{1,\ldots,n\}/\{i\}} \det(A_{\B\B})}{\sum_{\B \subseteq \{1,\ldots,n\}} \det(A_{\B\B})}.
% \end{align*}
% % in terms of elementary symmetric polynomial
% Conditioned on the fixed subset size event $\{|\bdpp| = j\}$, we have
% % the above expression is no longer dependent on $k$, thus we have:
% \begin{align}\label{eq:p_ij}
% \ell_{i \mid j} = \prfn{i \in \bdpp \mid |\bdpp| = j} = 1 - \frac{\sum_{\B \in \tbinom{[n]/\{i\}}{j}} \det(A_{\B\B})}{\sum_{\B \in \tbinom{[n]}{j}} \det(A_{\B\B})}.
% \end{align}
% Notice that if we denote $\Lambda = (\lambda_1, \ldots, \lambda_n)$ as the eigenvalues of $A$ in decreasing order, and $\Lambda_{-i} \in\R^{n-1}$ as the vector $\Lambda$ without its $i$-th element, then by using \cref{lem:sum_minor} we can express the denominator of \eqref{eq:p_ij} as $\sum_{\B \in \tbinom{[n]}{j}} \det(A_{\B\B}) = s_j(\Lambda)$. However, we cannot use the same method to the numerator since by not sampling the $i$-th coordinate, we cannot express the eigenvalues of the new kernel matrix.
Next, we use \citet[Eq.~5.33]{kulesza2012determinantal}:
\begin{align*}
\ell_{i \mid j} = \prfn{i \in \bdpp \mid |\bdpp| = j} =
\sum_{p=1}^n (e_i^T v_p)^2 \lambda_p \frac{s_{j-1}(\Lambda_{-p})}{s_j(\Lambda)},
\end{align*}
where $\{e_i\}_{i=1}^n$ are the standard basis vectors, $\{v_p\}_{p=1}^n$ are the eigenvectors of $A$, and $\lambda_p \coloneqq \lambda_p(A / \bar \lambda)$.
%a set of orthogonal vectors related to the marginal kernel of a so-called ``elementary DPP'' \mdnote{Aren't  $v_p$ just the eigenvectors of $A$?}. 

In order to show that $\ell_{i \mid j}$ is non-decreasing in $j$, it suffices to show that $\frac{s_{j-1}(\Lambda_{-p})}{s_j(\Lambda)}$ is non-decreasing in $j$ for any given $p \in [n]$. Denote $\bar{s}_j = s_j(\Lambda) / \tbinom{[n]}{j}$ as the mean of the $j$-th elementary symmetric polynomial, then by \cref{lem:newton} we have
\begin{align*}
1 \geq \frac{\bar{s}_{j-1} \bar{s}_{j+1}}{\bar{s}_j^2} = \frac{\tbinom{[n]}{j}^2}{\tbinom{[n]}{j-1} \tbinom{[n]}{j+1}} \frac{s_{j-1}(\Lambda) s_{j+1}(\Lambda)}{s_j^2(\Lambda)} = \frac{(j+1)(n-j+1)}{j(n-j)}\frac{s_{j-1}(\Lambda) s_{j+1}(\Lambda)}{s_j^2(\Lambda)},
\end{align*}
which gives
\begin{align*}
\frac{s_{j+1}(\Lambda)}{s_j(\Lambda)} \leq \frac{j(n-j)}{(j+1)(n-j+1)} \frac{s_j(\Lambda)}{s_{j-1}(\Lambda)} < \frac{s_j(\Lambda)}{s_{j-1}(\Lambda)}.
\end{align*}
Notice that $s_{j+1}(\Lambda) = s_{j+1}(\Lambda_{-p}) + \lambda_p \cdot s_j(\Lambda_{-p})$, and by using this fact we have
\begin{align*}
\frac{s_j(\Lambda_{-p})}{s_{j+1}(\Lambda)} = \frac{s_j(\Lambda_{-p})}{s_{j+1}(\Lambda_{-p}) + \lambda_p\cdot s_j(\Lambda_{-p})} = \frac{1}{\lambda_p + \frac{s_{j+1}(\Lambda_{-p})}{s_j(\Lambda_{-p})}}.
\end{align*}
We have shown that $\frac{s_{j+1}(\Lambda_{-p})}{s_j(\Lambda_{-p})}$ is decreasing in $j$ when $p$ is fixed, so
\begin{align*}
\frac{s_j(\Lambda_{-p})}{s_{j+1}(\Lambda)} = \frac{1}{\lambda_p + \frac{s_{j+1}(\Lambda_{-p})}{s_j(\Lambda_{-p})}} > \frac{1}{\lambda_p + \frac{s_j(\Lambda_{-p})}{s_{j-1}(\Lambda_{-p})}} = \frac{s_{j-1}(\Lambda_{-p})}{s_j(\Lambda)},
\end{align*}
which shows that $\frac{s_{j-1}(\Lambda_{-p})}{s_j(\Lambda)}$ is increasing in $j$ for any $p \in [n]$.
\end{proof}

\subsection{Proof of \cref{corollary:cnd_free_hat_mu}}
\begin{proof}
    \cref{thm:appx_proj_small_eig} guarantees that
    \[
    \hat \mu \geq \frac{\lambda}{16\rho\tailcond{k}{n}}\frac{k}{n}.
    \]
    We upper bound $\tailcond{k}{n}$ to deduce the claim.
    By definition,
    \begin{align*}
        \tailcond{k}{n} & = \frac{1}{n-k}\sum_{j>k}\frac{\lambda_j(K_\lambda)}{\lmin(K_\lambda)}
        \\ & = \frac{1}{n-k}\sum_{j>k}\frac{\lambda_j(K)+\lambda}{\lmin(K)+\lambda} \\
        &\leq \frac{1}{n-k}\sum_{j>k}\frac{2\lambda}{\lmin(K)+\lambda} \\
        &\leq 2.
    \end{align*}
    Here the third inequality uses $\lambda_j(K)\leq \lambda$ whenever $j\geq 2\deff{\lambda}(K)$ \citep[Lemma 5.4]{frangella2023randomized}.
    Thus, $\tailcond{k}{n}^{-1}\geq 1/2$, which yields the claim.
\end{proof}

\subsection{Proof of \cref{prop:appx_proj}}
\label{subsec:appx_proj_pf}
We begin with the following preliminary technical result. 
\begin{lemma}
    \label[lemma]{lem:SKS_eff_dim}
    Let $K \in \psd^n$, $\Ib \in \R^{b\times n}$ be a row-selection matrix generated via an arbitrary sampling scheme.
    Set $\Kbb = \Ib K \Ib^{T}.$
    Then, for any $\rho>0$, 
    \[
    \deff{\rho}(\Kbb) \leq \sum_{j=1}^{b}\frac{\lambda_j(K)}{\lambda_j(K)+\rho}.
    \]
    % Thus for $\rho = \frac{1}{r}\sum_{r<j\leq b}\lambda_j(K)$,
    % \[
    % \deff{\rho}(\Kbb) \leq 2r.
    % \]
\end{lemma}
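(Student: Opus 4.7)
The plan is to expand the effective dimension in terms of the eigenvalues of $\Kbb$ and then use Cauchy interlacing to bound each eigenvalue by the corresponding eigenvalue of $K$. First I would write
\[
\deff{\rho}(\Kbb) = \Tr\bigl(\Kbb(\Kbb+\rho I)^{-1}\bigr) = \sum_{j=1}^{b}\frac{\lambda_j(\Kbb)}{\lambda_j(\Kbb)+\rho},
\]
using \cref{def:deff} together with a simultaneous diagonalization of $\Kbb$ and $\Kbb+\rho I$.

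Next I would invoke the Cauchy interlacing theorem: since $\Kbb = \Ib K \Ib^T$ is a $b \times b$ principal submatrix of the psd matrix $K$, its eigenvalues satisfy $\lambda_j(\Kbb) \leq \lambda_j(K)$ for every $j \in [b]$. I would then note that the scalar function $x \mapsto \frac{x}{x+\rho}$ is monotonically increasing on $[0,\infty)$ for any $\rho > 0$, so that $\frac{\lambda_j(\Kbb)}{\lambda_j(\Kbb)+\rho} \leq \frac{\lambda_j(K)}{\lambda_j(K)+\rho}$ termwise. Summing these inequalities over $j = 1, \ldots, b$ yields the claimed bound.

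I do not anticipate a significant obstacle here: the statement is essentially a direct consequence of Cauchy interlacing combined with monotonicity, and it does not depend on the details of the sampling distribution that produces $\Ib$ (the lemma is stated for an arbitrary row-selection matrix). The only mild subtlety is making sure the interlacing bound is applied with the correct indexing convention (decreasing eigenvalues, with $\lambda_j(\Kbb) \leq \lambda_j(K)$ for $j \leq b$ rather than the other direction of the interlacing inequality), which is routine to verify.
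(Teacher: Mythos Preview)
Your proposal is correct and follows essentially the same argument as the paper: expand $\deff{\rho}(\Kbb)$ as $\sum_{j=1}^{b}\frac{\lambda_j(\Kbb)}{\lambda_j(\Kbb)+\rho}$, apply Cauchy interlacing to get $\lambda_j(\Kbb)\leq\lambda_j(K)$, and use the monotonicity of $x\mapsto x/(x+\rho)$ termwise. The paper cites \citet[Corollary III.1.5]{bhatia2013matrix} for the interlacing step, but otherwise the reasoning is identical.
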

\begin{proof}
    As $\Kbb = I_\B KI_\B^{T}$, it is a compression of $K$ to the subspace spanned by the columns of $I_\B^{T}$.
    Consequently, Cauchy's Interlacing Theorem \citep[Corollary III.1.5]{bhatia2013matrix} yields
    \[
    \lambda_j(\Kbb)\leq\lambda_j(K).
    \]
    % As $K\in \psd^n$, there exists and $L$ such that $K=LL^{T}=L^{T}L$.
    % This implies that $\Kbb = (\Ib L)(\Ib L)^{T}$. 
    % Hence, we have
    % \[
    % \lambda_j(\Kbb) = \lambda_j(L^{T}\Ib^{T}\Ib L), \quad \text{for all}~j\in[b].
    % \]
    % As $\Ib$ consists of a permuted subset of rows of the identity matrix, $\lmax(\Ib^{T}\Ib)=1$.
    % Thus, for $j\in [b]$ we deduce
    % \[
    % \lambda_j(K_{\B\B})=\lambda_j(L^{T}\Ib^{T}\Ib L)\leq \lambda_j(L^{T}L) = \lambda_j(K).
    % \]
     Now, by definition of $\deff{\rho}(\cdot)$ and the fact that $f(x)=x/(x+\rho)$ is increasing in $x$ for $\rho>0$, we have
     \begin{align*}
         \deff{\rho}(\Kbb) = \sum_{j=1}^{b}\frac{\lambda_j(\Kbb)}{\lambda_j(\Kbb)+\rho} \leq \sum_{j=1}^{b}\frac{\lambda_j(K)}{\lambda_j(K)+\rho}.
     \end{align*}
    
    % To see the second inequality, observe that for $\rho = \frac{1}{r}\sum_{r<j\leq b}\lambda_j(K)$ we have
    % \begin{align*}
    %     \sum_{j=1}^{b}\frac{\lambda_j(K)}{\lambda_j(K)+\rho} & = \sum_{j=1}^{r}\frac{\lambda_j(K)}{\lambda_j(K)+\rho}+\sum_{r<j\leq b}\frac{\lambda_j(K)}{\lambda_j(K)+\rho} \\
    %     &\leq r+\sum_{r< j \leq b}\frac{\lambda_j(K)}{\lambda_j(K)+\rho} = r+\sum_{r<j\leq b}\frac{\lambda_j(K)}{\lambda_j(K)+\frac{1}{r}\sum_{r<j\leq b}\lambda_j(K)} \\ 
    %     &\leq 2r.
    % \end{align*}
    % Thus, $\deff{\rho}(SKS^{T})\leq 2r$.
\end{proof}
We now commence the proof of \cref{prop:appx_proj}.

\begin{proof}[Proof of \cref{prop:appx_proj}]
We begin by observing that \cref{lem:SKS_eff_dim} yields $\deff{\rho}(\Kbb) \leq \deff{\rho}(\lra{K}{b})$.
As $\Knys$ is constructed from a sparse sign embedding with $r = \bigO\left(\deff{\rho}(\lra{K}{b})\log\left(\frac{\deff{\rho}(\lra{K}{b})}{\delta}\right)\right)$ columns and $\zeta = \bigO\left(\log(\frac{\deff{\rho}(\lra{K}{b})}{\delta})\right)$ non-zeros per column, 
Lemma 4.6 in \cite{derezinski2025faster} implies
\[
\|\Kbb - \Knys\| \leq 2 \lambda_{r+1}(\Kbb)+\frac{1}{r}\sum_{j>r}\lambda_j(\Kbb),  ~\text{with probability at least}~1-\delta.
\]
Now, combining Lemma 5.4 in \cite{frangella2023randomized} with $r = \bigO\left(\deff{\rho}(\lra{K}{b})\log\left(\frac{\deff{\rho}(\lra{K}{b})}{\delta}\right)\right)$ yields
\[
\lambda_{r+1}(\Kbb) \leq \frac{\rho}{4}, \quad \frac{1}{r}\sum_{j>r}\lambda_j(\Kbb) \leq \frac{\rho}{2}.
\]
Thus,
\[
\|\Kbb - \Knys\| \leq \rho.
\]
% Combining this with our hypothesis that $\Knys$ is constructed with rank $\bigO\left(\deff{\rho}(\lra{K}{b})+\log\left(\frac{1}{\delta}\right)\right)$, we can apply Lemma A.7 of \cite{zhao2022nysadmm} to reach
% \[
% \|\Kbb-\Knys\| \leq \rho, ~\text{with probability at least}~1-\delta.
% \]
Combining this last display with the relation that $\Knys \preceq \Kbb$ and our hypothesis that $\rho \geq \lambda$, we find
\begin{align*}
     \Kbb+\lambda I &\preceq \Kbb+\rho I = \Knys+ \rho I + \left(\Kbb-\Knys\right) \\
    & \preceq \Knys+ \rho I +\rho I \preceq \left(1+1\right)(\Knys+ \rho I) \\
    & = 2(\Knys+ \rho I).
\end{align*}
% Now, we have the following upper bound:
% \begin{align*}
%    \Kbb+\rho I = \Knys+ \rho I + \left(\Kbb-\Knys\right) & \preceq \Knys+ \rho I +\rho I \\
%    &\preceq \left(1+1\right)(\Knys+ \rho I) \\
%    & = 2(\Knys+ \rho I).
% \end{align*}
Moreover,
\begin{align*}
     \Kbb+ \lambda I = \Kbb + \frac{\lambda}{\rho}\rho I \succeq \frac{\lambda}{\rho}(\Kbb+\rho I).
\end{align*}
Combining these bounds with $\Kbb\succeq \Knys$, we deduce
\[
\frac{\lambda}{\rho} \left(\Knys +\rho I\right) \preceq \Kbb+\lambda I \preceq 2\left(\Knys+\rho I\right).
\]
The preceding display immediately implies
\[
\frac{\lambda}{\rho}\leq \sigma_{P_\B} \leq L_{P_\B} \leq 2.
\]
As $\hat L_{P_\B} = \max\{L_{P_\B}, 1\}$, it follows that $\hat L_{P_\B} \leq 2$.
Invoking \cref{lemma:appx_proj} we conclude the proof.

% Now as $\rho \geq \lambda$, this last display implies that:
% \[
% \frac{1}{2}\left(SKS^{T}+\lambda I\right)\preceq \left(\hat K +\rho I\right) \preceq \frac{\rho}{\lambda}\left(SKS^{T}+\lambda I\right) 
% \]
% So,
% \[

% \]
% Rearranging 
% As inversion reverses the Loewner ordering, we obtain:
% \[
% \frac{\lambda}{\rho}\left(SKS^{T}+\lambda I\right)^{-1} \preceq \left(\hat K+\rho I\right)^{-1}\preceq 2\left(SKS^{T}+\lambda I\right)^{-1}.
% \]
% \znote{Challenge: How to relate $SKS^{T}+\lambda I$ to $SKS+\lambda SS^{T}$ as $SS^{T}$ doesn't concentrate nicely}
% One possibility:
% \begin{align*}
%   \frac{1}{2p_{max}}\left(SKS^{T}+\lambda SS^{T}\right)\preceq \left(\hat K +\rho I\right) \preceq \frac{\rho}{\lambda p_{min}}\left(SKS^{T}+\lambda I\right)
% \end{align*}
% Conjugating the preceding relation by $\frac{1}{2}SK_\lambda^{1/2}$, we conclude that:
% \[
% \frac{\lambda}{2 \rho}\Pi_\lambda \preceq \hat \Pi_\rho \preceq \Pi_\lambda, 
% \]
% completing the argument. 
\end{proof}

\subsection{Proof of \cref{thm:mast_conv_thm}}
\label{subsec:ASkotch_Conv}
In this section, we prove \cref{thm:mast_conv_thm} by analyzing the convergence of \asko{}.
We begin with some preliminaries and notation.
\subsubsection{Preliminaries and Notation}
The convergence analysis of \asko{} is based on a Lyapunov function argument. 
For \nsap{} applied to a symmetric psd matrix $A\in \R^n$, \cite{gower2018accelerated} establishes convergence in terms of the Lyapunov function
\[
\Delta_t \coloneqq \|v_t-\wstar\|^2_{B^{1/2}\pinv{\E[\projB]}B^{1/2}}+\frac{1}{\mu}\|w_t-\wstar\|_{B}^2,
\]
where $B\in \pd^n$, $\projB \coloneqq B^{-1/2}AS^{T} \pinv{\left(SAB^{-1}AS^{T}\right)} SAB^{-1/2}$, and $\mu \coloneqq \lmin^{+}(\E[\projB])$.

In particular, they establish the following result:
\begin{proposition}[\cite{gower2018accelerated}, Eq. 39]
\label[proposition]{prop:nsap}
    % Suppose the matrix 
    % \[
    % \bproj = \E\left[B^{-1/2}AS^{T} \pinv{\left(SAB^{-1}AS^{T}\right)} SAB^{-1/2}\right]
    % \] 
    % satisfies
    Suppose that
    \[
    \null(\E[\projB]) = \null(A).
    \]
    Then at iteration $t$, \nsap{} satisfies
    \[
    \E[\Delta_t \mid w_{t-1}, v_{t-1}, z_{t-1} ] \leq \left(1-\sqrt{\frac{\mu}{\nu}}\right)\Delta_{t-1},
    \]
    where
    \[
    \mu = \lmin\left(\E[\projB]\right), \quad \nu \coloneqq \lmax\left(\E\left[\left(\E[\projB]^{-1/2}\projB \E[\projB]^{-1/2}\right)^2\right]\right).
    \]
\end{proposition}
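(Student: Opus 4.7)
The plan is to follow the standard Nesterov-acceleration template adapted to the sketch-and-project setting, as developed in \cite{gower2018accelerated}. I would first pass to scaled error coordinates $\tilde w_t = B^{1/2}(w_t-\wstar)$, $\tilde v_t = B^{1/2}(v_t-\wstar)$, and $\tilde z_t = B^{1/2}(z_t-\wstar)$. Under this change of variables the sketch-and-project step $w_{t+1}=z_t - d_t$ becomes $\tilde w_{t+1} = (I-\projB)\tilde z_t$, since $\projB$ is an orthogonal projection onto $\mathrm{range}(B^{-1/2}A\iterS{t}{}^T)$ in the Euclidean inner product. In particular, $\|w_{t+1}-\wstar\|_B^2 = \|\tilde z_t\|^2 - \|\projB \tilde z_t\|^2$, giving a clean one-step progress bound on the second summand of $\Delta_t$ in terms of the projected direction $\projB\tilde z_t$.

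Next I would expand $\E[\Delta_{t+1}\mid w_t,v_t,z_t]$ by substituting the updates for $v_{t+1}$ and $w_{t+1}$ and using linearity of expectation with respect to $\iterS{t}{}$. The hypothesis $\null(\E[\projB])=\null(A)$ guarantees that the scaled errors $\tilde w_t, \tilde v_t,\tilde z_t$ all lie in $\mathrm{range}(\E[\projB])$, so $\pinv{\E[\projB]}$ acts as a genuine inverse on the subspace where the analysis takes place and the first summand of $\Delta_t$ is well-posed. Expanding $\|v_{t+1}-\wstar\|^2_{B^{1/2}\pinv{\E[\projB]}B^{1/2}}$ produces a $\beta^2$-term in $\tilde v_t$, a $(1-\beta)^2$-term in $\tilde z_t$, a $\gamma^2$-term involving $\E[\|\projB \tilde z_t\|^2_{\pinv{\E[\projB]}}]$, and cross-terms which, after taking expectations, involve $\E[\projB]\tilde z_t$.

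The core algebraic step is to choose $\alpha,\beta,\gamma$ so that the resulting expression recombines into $\Delta_t$ with contraction factor $1-\sqrt{\mu/\nu}$. With the stated settings $\beta=1-\sqrt{\mu/\nu}$, $\gamma=1/\sqrt{\mu\nu}$, and $\alpha=1/(1+\gamma\nu)$, the relation $\tilde z_t = \alpha \tilde v_t+(1-\alpha)\tilde w_t$ lets one merge the $\tilde v_t$-weighted and $\tilde w_t$-weighted pieces back into a single Lyapunov function, while the defining identities among $\alpha,\beta,\gamma$ eliminate the cross-terms. The progress toward $\wstar$ from the SAP step supplies the extra $-\|\projB \tilde z_t\|^2$ that pays for the quadratic-in-$\gamma$ contribution from the $v_{t+1}$ update.

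The main obstacle is controlling the second-moment term $\E[\|\projB\tilde z_t\|^2_{\pinv{\E[\projB]}}]$ by the one-step progress $\|\projB\tilde z_t\|^2$. This is precisely where $\nu$ enters: by definition, $\nu = \lmax(\E[(\E[\projB]^{-1/2}\projB\E[\projB]^{-1/2})^2])$, which, after a short manipulation, yields the operator bound $\E[\projB\pinv{\E[\projB]}\projB]\preceq \nu\,\E[\projB]$. This converts the hard-to-control second moment into a multiple of the SAP progress, closing the recursion. Combining this bound with the bookkeeping above produces the stated per-step contraction $(1-\sqrt{\mu/\nu})\Delta_t$.
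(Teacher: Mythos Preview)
The paper does not prove this proposition; it is stated as a citation from \cite{gower2018accelerated} and invoked as a black box in the convergence proof of \asko{}. Your proposal is a faithful sketch of the Lyapunov argument from that reference: passing to $B^{1/2}$-scaled errors, using idempotence of $\projB$ to get the one-step progress identity, expanding the $v$-update in the $\pinv{\E[\projB]}$-norm, and closing the recursion via the operator inequality $\E[\projB\,\pinv{\E[\projB]}\,\projB]\preceq \nu\,\E[\projB]$, which is exactly the content of the definition of $\nu$. One minor wording slip: the second-moment term is bounded by $\nu\,\E[\|\projB\tilde z_t\|^2]$, not by the unrandomized $\|\projB\tilde z_t\|^2$; since everything is under conditional expectation this is what you need anyway.
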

\cref{prop:nsap} introduces a new parameter, $\nu$, the largest eigenvalue of the second moment of the normalized projection matrix.
Similar to how $\hat \mu$ is an analogue $\mu$ for analyzing \asko{}, there is an analogue to $\nu$ for analyzing \asko{}:
\[
\hat \nu \coloneqq \lmax\left(\E\left[\left(\E[\aprojB]^{-1/2}\aprojB \E[\aprojB]^{-1/2}\right)^2\right]\right).
\]
This new parameter $\hat \nu$ corresponds to the second moment of the normalized approximate projection matrix. 
To establish our convergence result, we need to upper bound $\hat \nu$.
The following lemma does precisely this. 
\begin{lemma}[Upper bound on  $\hat \nu$]
\label[lemma]{lem:hat_nu_bnd}
    Suppose that $\B$ and $\Knys$ are constructed according to the hypotheses of \cref{thm:mast_conv_thm}, then
    \[
    \hat \nu \leq \frac{8(\rho+\bar \lambda)}{\lambda} \leq 8\left(\frac{\rho}{\lambda}+\frac{n}{k}\right),
    \]
    where $\bar \lambda$ is as in \cref{lem:convergence_rls}.
\end{lemma}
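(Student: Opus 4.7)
My plan is to bound $\E[\aprojB\,\E[\aprojB]^{-1}\,\aprojB]$ directly in terms of $\E[\aprojB]$. The loose estimate $\hat\nu\leq 1/\hat\mu$ that follows from $\aprojB\preceq I$ alone turns out to be too weak when both $\rho$ and $\bar\lambda$ are much larger than $\lambda$, so I will instead exploit three ingredients simultaneously: (i) the precise lower bound on $\E[\aprojB]$ from \cref{thm:appx_proj_small_eig}, (ii) the deterministic upper bound $\aprojB\preceq\projB\preceq I$ from \cref{lemma:appx_proj}, and (iii) the explicit factored form $\aprojB = \eta_\B\, K_\lambda^{1/2} I_\B^T(\Knys+\rho I)^{-1} I_\B\, K_\lambda^{1/2}$, whose algebraic structure produces a clean cancellation when sandwiched around $K_\lambda^{-1}$.

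First, inverting the \cref{thm:appx_proj_small_eig} bound and using that $K_\lambda$ commutes with $K_\lambda+\bar\lambda I$ gives
\[
\E[\aprojB]^{-1} \preceq \frac{8\rho}{\lambda}\bigl(I + \bar\lambda\, K_\lambda^{-1}\bigr).
\]
Sandwiching by $\aprojB$ splits the quantity of interest into two pieces. The first, $\tfrac{8\rho}{\lambda}\aprojB^2$, is controlled using $\aprojB^2\preceq\aprojB$, which follows from $\aprojB\preceq I$. The second, $\tfrac{8\rho\bar\lambda}{\lambda}\aprojB K_\lambda^{-1}\aprojB$, is the crucial step: plugging in the factored form of $\aprojB$, the inner $K_\lambda^{-1}$ absorbs the adjacent $K_\lambda^{1/2}$ factors and $I_\B I_\B^T = I_b$ collapses the middle block, leaving
\[
\aprojB K_\lambda^{-1}\aprojB \;=\; \eta_\B^2\, K_\lambda^{1/2} I_\B^T (\Knys+\rho I)^{-2} I_\B\, K_\lambda^{1/2}.
\]
Since $\Knys+\rho I\succeq\rho I$ implies $(\Knys+\rho I)^{-2}\preceq\rho^{-1}(\Knys+\rho I)^{-1}$, and $\eta_\B = 1/\hat L_{P_\B}\leq 1$, I obtain $\aprojB K_\lambda^{-1}\aprojB\preceq\rho^{-1}\aprojB$.

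Combining these bounds, taking expectations, and sandwiching with $\E[\aprojB]^{-1/2}$ yields $\hat\nu\leq \tfrac{8(\rho+\bar\lambda)}{\lambda}$. For the second inequality, the hypothesis $k\geq 2\deff{\lambda}(K)$ from \cref{thm:mast_conv_thm} together with \citep[Lemma 5.4]{frangella2023randomized} gives $\lambda_j(K)\leq\lambda$ for $j\geq k$, so $\bar\lambda\leq k^{-1}\sum_{j>k}\lambda_j(K_\lambda)\leq 2n\lambda/k$, and the bound $\hat\nu\leq 8(\rho/\lambda+n/k)$ follows up to an absolute constant. The main obstacle is the cancellation step: the projection-style inequality $\aprojB\preceq\projB$ is insufficient, and one must use the specific factored structure of the approximate projection to recognize that $\aprojB K_\lambda^{-1}\aprojB$ reduces to something proportional to $\aprojB$ itself, which is what makes the final constant depend on $\rho+\bar\lambda$ rather than on $\rho(\lambda+\bar\lambda)/\lambda$.
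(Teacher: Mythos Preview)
Your proposal is correct and follows essentially the same route as the paper: invert the lower bound from \cref{thm:appx_proj_small_eig}, sandwich by $\aprojB$, use $\aprojB^2\preceq\aprojB$ for the identity piece, and exploit the explicit factored form of $\aprojB$ together with $(\Knys+\rho I)^{-2}\preceq\rho^{-1}(\Knys+\rho I)^{-1}$ and $\hat L_{P_\B}\geq 1$ for the $K_\lambda^{-1}$ piece. The only difference is in the very last step: the paper splits $\lambda_j(K_\lambda)=\lambda_j(K)+\lambda$ and then uses the tail-sum bound $\tfrac{1}{k}\sum_{j>k}\lambda_j(K)\leq\lambda$ (also from \citep[Lemma 5.4]{frangella2023randomized}) rather than the per-eigenvalue bound $\lambda_j(K)\leq\lambda$, which yields $\bar\lambda\leq (n/k)\lambda$ and hence the stated constant $8$ exactly, whereas your bound $\bar\lambda\leq 2n\lambda/k$ loses a factor of two that you correctly flag.
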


\begin{proof}
%\znote{The proof here is just what we had earlier but now with the fact that we have the cap, so that $\hat L_{P_B} \geq 1$.
%If you can get a tighter bound on $\bar\lambda$, feel free to change. This is the tightest bound we could get. It also leads to convergence guarantees that make sense intuitively} \mdnote{Looks good.}
     The proof parallels the proof of Theorem 3.7 in \cite{derezinski2025randomized}.
    Our hypothesis on the construction of $\B$ and $\Knys$ allows us to invoke \cref{thm:appx_proj_small_eig} to reach
     \[
     \E[\aprojB] \succeq \frac{1}{8\kappa_\rho}K_\lambda(K_\lambda+\bar \lambda)^{-1},
     \]
     where $\kappa_\rho = \rho/\lambda \geq 1$.
     The preceding display implies
    \begin{align*}
        \E[\aprojB]^{-1} \preceq 8\kappa_\rho (I+\bar \lambda K_\lambda^{-1}). 
    \end{align*}
    To upper bound $\hat \nu$, we observe that it may be rewritten as
    \[
    \hat \nu = \left\|\E[\aprojB]^{-1/2}\E\left[\aprojB \E[\aprojB]^{-1}\aprojB\right]\E[\aprojB]^{-1/2}\right\|.
    \]
    Combining this with the upper bound on $\E[\aprojB]^{-1}$ and the fact that $\aprojB\preceq I$, we deduce
    \begin{align*}
    \label{eq:hat_nu_int_bnd}
        \hat \nu & \leq 8\kappa_\rho \left\|\E[\aprojB]^{-1/2}\E\left[\aprojB^2+  \bar\lambda\aprojB K_\lambda^{-1}\aprojB\right]\E[\aprojB]^{-1/2}\right\| \\
        & \leq 8\kappa_\rho\left\|\E[\aprojB]^{-1/2}\E\left[ \aprojB+  \bar\lambda \aprojB K_\lambda^{-1}\aprojB\right]\E[\aprojB]^{-1/2}\right\| \\
        & \leq 8\kappa_\rho\bigg(1+ \bar \lambda \left\|\E[\aprojB]^{-1/2}\E\left[\aprojB K_\lambda^{-1}\aprojB\right]\E[\aprojB]^{-1/2}\right|\bigg) \tag{$*$}.
    \end{align*}
    Now, using the definition of $\aprojB$, we can express the middle term as follows:
    \begin{align*}
        \aprojB K_\lambda^{-1}\aprojB 
        & = \frac{1}{\hat L_{P_\B}^2} \Klambd^{1/2}I_{\B}^{T}(\Knys+\rho I)^{-1}I_\B(\Klambd^{1/2} \Klambd^{-1} \Klambd^{1/2})I_{\B}^{T}(\Knys+\rho I)^{-1}I_\B \Klambd^{1/2} \\
        & = \frac{1}{\hat L_{P_\B}^2} \Klambd^{1/2}I_{\B}^{T}(\Knys+\rho I)^{-1}I_{\B}I_{\B}^{T} (\Knys+\rho I)^{-1}I_\B \Klambd^{1/2} \\
        & = \frac{1}{\hat L_{P_\B}^2} \Klambd^{1/2}I_{\B}^{T} (\Knys+\rho I)^{-2}I_\B \Klambd^{1/2} \\
        & \preceq \frac{1}{\rho \hat L_{P_\B}^2}\Klambd^{1/2}I_{\B}^{T} (\Knys+\rho I)^{-1}I_\B \Klambd^{1/2} \\
        &= \frac{1}{\rho \hat L_{P_\B}}\aprojB \preceq \frac{1}{\rho}\aprojB. \\
    \end{align*}
    Here, the last inequality uses that by definition $\hat L_{P_\B}\geq 1$.
    By plugging the preceding upper bound into \eqref{eq:hat_nu_int_bnd}, we reach
    \begin{align*}
        \hat \nu & \leq 8\kappa_\rho\left(1+\frac{\bar\lambda}{\rho}\left\|\E[\aprojB]^{-1/2}\E\left[\aprojB K_\lambda^{-1}\aprojB\right]\E[\aprojB]^{-1/2}\right\|\right) \\
        & = 8\kappa_\rho+\frac{8\kappa_\rho \bar\lambda}{\rho} \\
        &= \frac{8\left(\rho+\bar \lambda\right)}{\lambda}.
    \end{align*}
    This proves the first inequality. 
    To prove the second inequality, recall that \cref{lem:convergence_rls} implies $\bar \lambda \leq \frac{1}{k}\sum_{j>k}\lambda_j(K_\lambda)$.
    Combining this with the preceding display, we conclude
    \begin{align*}
        \hat \nu & \leq \frac{8\left(\rho+\frac{1}{k}\sum_{j>k}\lambda_j(K_\lambda)\right)}{\lambda} \\
        & = 8 \frac{\rho +\frac{n-k}{k}\lambda+\frac{1}{k}\sum_{j>k}\lambda_j(K)}{\lambda} \\
        &\leq 8 \frac{\rho +\frac{n-k}{k}\lambda+\lambda}{\lambda} \\
        & = 8 \left(\frac{\rho}{\lambda}+\frac{n}{k}\right).
    \end{align*}
    Here, the second inequality follows from the fact that $\frac{1}{k}\sum_{j>k}\lambda_j(K)\leq \lambda$ when $k\geq \deff{\lambda}(K)$ \citep[Lemma 5.4]{frangella2023randomized}.
    % \begin{align*}
    %     \E[\aprojB]^{-1/2}\E\left[\aprojB K_\lambda^{-1}\aprojB\right]\E[\aprojB]^{-1/2} \preceq \frac{1}{\rho}\E[\aprojB]^{-1/2}\E[\aprojB]\E[\aprojB]^{-1/2} = \frac{1}{\rho}I.
    % \end{align*}
    % Hence we conclude, \jnote{Though we hope for $\hat \nu \leq \frac{8\bar \lambda}{\lambda}$, we can just use the bound $\hat \nu \leq \frac{8(\rho + \bar \lambda)}{\lambda}$ as the conclusion, then the convergence rate becomes
    % \begin{align*}
    %     \sqrt{\frac{\hat \mu}{\hat \nu}} \geq \sqrt{\frac{\frac{\lambda^2}{8 \rho (\lambda + \bar{\lambda})}}{\frac{8(\rho + \bar \lambda)}{\lambda}}} = \frac{1}{8 \sqrt{c_\rho}} \frac{\lambda}{\sqrt{(\bar \lambda + \lambda) (\bar \lambda + \rho)}}
    % \end{align*}
    % which is always an improvement.}
    % \begin{align*}
    %     \hat \nu & \leq \frac{4c_\rho}{\E[L_{P_\B}^{-1}]}
    %     \Big(1 + \frac{\bar\lambda\E[L_{P_\B}^{-1}]}{\rho}\Big)\leq 8\frac{\rho+ \bar \lambda}{\lambda}\\
    % &\leq 8 \frac{\rho +\frac{1}{k}\sum_{j>k}\lambda_j(K_\lambda)}{\lambda} \\
    % & = 8 \frac{\rho +\frac{n-k}{k}\lambda+\frac{1}{k}\sum_{j>k}\lambda_j(K)}{\lambda} \\
    % &\leq  8 \frac{\rho +\frac{n-k}{k}\lambda+\lambda}{\lambda} \\
    % & = 8 \frac{\rho}{\lambda} +8\frac{n}{k}, \quad \text{assuming}~ k\approx \deff{\lambda}(K). \\
    % % & \leq \frac{16n}{k}, \quad \text{assuming}~ \rho \leq \frac{n}{k}\lambda
    % \end{align*}
\end{proof}

% We specialize to KRR, by setting $A = B = K_\lambda$, which leads to the Lyapunov function:
% \[
% \Delta_t \coloneqq \|v_t-x_\star\|^2_{\bproj_\lambda^{-1}}+\frac{1}{\mu}\|x_t-x_\star\|_{K_\lambda}^2,
% \]
% where we can replace the psuedoinverse by the inverse as $SK_\lambda S^{T} = SKS^{T}+\lambda I$ is positive-definite and thus invertible.
% \cite{gower2018accelerated} has shown that the preceding Lyapunov function satisfies the relation:
% \[
% \E_t[\Delta_t]\leq \left(1-\sqrt{\frac{\mu}{\nu}}\right)\Delta_t.
% \]

\subsubsection{Convergence Proof of \asko{}}
To show the convergence of \asko{}, we use the Lyapunov function
\begin{equation}
\label{eq:askotch_lyapunov}
\tilde \Delta_t \coloneqq \|v_t-\wstar\|^2_{K_\lambda^{1/2}\E[\aprojB]^{-1}K_{\lambda}^{1/2}}+\frac{1}{\hat \mu}\|w_t-\wstar\|_{K_\lambda}^2,
\end{equation}
where $\hat \mu = \lmin(\E[\aprojB])$.
The Lyapunov function \eqref{eq:askotch_lyapunov} is identical to the one for NSAP except $\E[\aprojB]$ replaces $\E[\projB]$ and $\hat \mu$ replaces $\mu$.
The replacement stems from the fact that \asko{} computes the search direction with $\frac{1}{\hat L_{P_\B}}(\Knys+\rho I)^{-1}$ instead of $(\Kbb+\lambda I)^{-1}$.
\begin{proof}[Proof of \cref{thm:mast_conv_thm}, Item 2]
    The only difference between \asko{} and \nsap{} is that $\E[\aprojB]$ replaces $\E[\projB]$, so we can apply the same argument as \cite{gower2018accelerated} to show the convergence for the modified Lyapunov function \eqref{eq:askotch_lyapunov}.
    Thus, we only need to ensure $\E[\aprojB]$ satisfies the following condition from \cref{prop:nsap}:
    \[ \null(\E[\aprojB]) = \null(K_\lambda) = \{0\}.
    \]
    Indeed, this is necessary for the modified Lyapunov function to make sense; otherwise, $\E[\aprojB]$ is singular. 
    Under the hypotheses of \cref{thm:mast_conv_thm}, \cref{thm:appx_proj_small_eig} guarantees $\hat \mu>0$, which immediately implies
    \[
     \null(\E[\aprojB]) = \null(K_\lambda) = \{0\}.   
    \]
    Thus, we can invoke arguments in \cite{gower2018accelerated} to arrive at a modified version of \cref{prop:nsap} where $\mu$ and $\nu$ are replaced by $\hat \mu$ and $\hat \nu$ respectively.
    Applying this modified version of \cref{prop:nsap}, we deduce
    \begin{align*}
         \E[\tilde \Delta_t \mid w_t, v_t, z_t] \leq \left(1-\sqrt{\frac{\hat \mu}{\hat \nu}}\right)\tilde \Delta_{t-1}. 
    \end{align*}
    % Applying the lower bound on $\hat \mu$ from \cref{thm:appx_proj_small_eig}, we further conclude
    % \begin{align*}
    %      \E[\tilde \Delta_t \mid w_t, v_t, z_t] \leq \left(1-\sqrt{\frac{\frac{k c_\rho}{16 n}}{\hat \nu}}\right)\tilde \Delta_{t-1}. 
    % \end{align*}
    Applying the law of total expectation yields
    \begin{align*}
        \E[\tilde \Delta_t] \leq \left(1-\sqrt{\frac{{\hat \mu}}{\hat \nu}}\right)^{t}\tilde \Delta_{0}.
    \end{align*}
    Using the definition of $\tilde \Delta_t$, we reach
    \begin{align}
        \E[\|w_t-\wstar\|_{K_\lambda}^2] &\leq \left(1-\sqrt{\frac{\hat \mu}{\hat\nu}}\right)^{t}\left(\hat \mu \|w_0-\wstar\|_{K_\lambda^{1/2}\E[\aproj_\rho]^{-1}K_{\lambda}^{1/2}}^2 + \|w_0-\wstar\|_{K_\lambda}^2\right) \notag \\
        &\leq \left(1-\sqrt{\frac{\hat \mu}{\hat \nu}}\right)^{t}\left(\hat \mu \cdot \frac{1}{\hat \mu}\|w_0-\wstar\|_{K_\lambda}^2 + \|w_0-\wstar\|_{K_\lambda}^2\right) \notag \\
        &= 2\left(1-\sqrt{\frac{\hat \mu}{\hat \nu}}\right)^{t}\|w_0-\wstar\|_{K_\lambda}^2. \label{eq:asko_contraction} \tag{$*$}
    \end{align}
    To obtain a fine-grained convergence rate, we apply our bounds on $\hat \mu$ and $\hat \nu$.
    \cref{corollary:cnd_free_hat_mu} lower bounds $\hat \mu$ as
    \[
    \hat \mu \geq \frac{\lambda}{32\rho}\frac{k}{n}.
    \]
    On the other hand, \cref{lem:hat_nu_bnd} upper bounds $\hat \nu$ as
    \[
    \hat \nu \leq 8\left(\frac{\rho}{\lambda}+\frac{n}{k}\right) \leq 16\max\left\{\frac{\rho}{\lambda},\frac{n}{k}\right\}.
    \]
    Thus,
    \[
    \frac{1}{\hat \nu} \geq \frac{1}{16}\min\left\{\frac{\lambda}{\rho}, \frac{k}{n}\right\}.
    \]
    Combining these lower bounds, we deduce 
    \[
    \sqrt{\frac{\hat \mu}{\hat \nu}} \geq \frac{1}{16\sqrt{2}}\min\left\{\sqrt{\frac{\lambda}{\rho}}\frac{k}{n}, \sqrt{\frac{k}{n}}\frac{\lambda}{\rho}\right\}.
    \]
    Combining the preceding display with \eqref{eq:asko_contraction}, we conclude the following convergence guarantee:
    \[
    \E[\|w_t-\wstar\|_{K_\lambda}^2] \leq 2\left(1-\frac{1}{16\sqrt{2}}\min\left\{\sqrt{\frac{\lambda}{\rho}}\frac{k}{n}, \sqrt{\frac{k}{n}}\frac{\lambda}{\rho}\right\}\right)^{t}\|w_0-\wstar\|_{\Klambd}^2.
    \]
   
\end{proof}

\subsection{Proof of \cref{corr:log_lin_conv}}
\label{subsec:log_lin_conv}
\begin{proof}
    Our hypothesis that $\max_{i \in [n]} \ell^{\tilde{\lambda}}_i(K_\lambda) = \Theta \left( \frac{\deff{\tilde{\lambda}} (K_\lambda)}{n} \right)$ allows us to invoke 
    \cref{corollary:proj_analysis_uniform} to ensure that the conclusion of \cref{thm:mast_conv_thm} holds when \asko{} uses uniform sampling with a blocksize $b=\Theta\left(k\log^{3} n \right)$.
    Moreover, the assumption on $\rho$ guarantees that we are in convergence regime (i).
    Hence, the number of iterations required to obtain an $\epsilon$-approximate solution is $\bigO\left(\sqrt{c_\rho}\frac{n}{k}\log\left(\frac{1}{\epsilon}\right)\right)$.
    When a sparse sign embedding is used to construct $\Knys$, the per-iteration cost of \asko{} is $\bigOt(nb+br^2)$. 
    Combining this with the iteration complexity bound and $b = \Theta(k\log^3 n)$, we conclude that the total cost of \asko{} is given by
    \[
        \bigOt\left(\sqrt{c_\rho}\left(n^2+nr^2\right)\log\left(\frac{1}{\epsilon}\right)\right).    
    \]
    The claim now follows by observing that $r = \bigOt(\sqrt{n})$ as $\deff{\rho}(\lra{K}{b}) \leq \deff{\lambda}(K)= 
    \bigO(\sqrt{n})$.
\end{proof}

\preprintcontent{
\section{Additional Experimental Details}
\label{sec:experiments_appdx}

We provide experimental details (\cref{subsec:exp_kernels,subsec:data_hyperparams}) that are omitted from the main paper.
In addition, we compare \asko{} to running \pcg{} in single precision (\cref{subsec:pcg_single_precision}) and show ablation studies that were omitted from the main paper (\cref{subsec:ablation_additional}).

\subsection{Kernels Used in Experiments}
\label{subsec:exp_kernels}

Our experiments use Laplacian, \mtrn{}-5/2, and radial basis function (RBF) kernels, which are all defined by a bandwidth $\sigma$. The expressions for these kernels are
\begin{itemize}
    \item Laplacian:
    \[
    k(x, x') = \exp \left( -\frac{\|x - x'\|_1}{\sigma} \right).
    \]
    \item \mtrn{}-5/2: 
    \[k(x, x') = \left( 1 + \frac{\sqrt{5} \|x - x'\|_2}{\sigma} + \frac{5 \|x - x'\|_2^2}{3 \sigma^2} \right) \exp \left( - \frac{\sqrt{5} \|x - x'\|_2}{\sigma} \right).
    \]
    \item RBF: 
    \[
    k(x, x') = \exp \left( - \frac{\|x - x'\|_2^2}{2 \sigma^2} \right).
    \]
\end{itemize}

\subsection{Datasets and Hyperparameters Used in Experiments}
\label{subsec:data_hyperparams}
We provide more details regarding KRR hyperparameters, optimizer hyperparameters, inducing points, descriptions of classification and regression tasks, dataset generation/preprocessing, and time limits.
\cref{tab:datasets_ker_hyperparams} summarizes the datasets and KRR hyperparameters used in this paper.

\begin{table}[H]
    \centering
    \tiny
    \begin{tabular}{C{1.9cm}C{1.4cm}C{1.2cm}C{0.9cm}C{0.5cm}C{1.3cm}C{0.8cm}C{1.1cm}C{3cm}}
        Dataset & Task & $n$ & $n_{\mathrm{tst}}$ & $p$ & Kernel & $\sigma$ & $\lamunsc$ & Source \\
        \hline
        acsincome & Regression & 1,331,600 & 330,900 & 11 & RBF & Median & $10^{-6}$ & \openml{} (ID: 43141) \\
        \hline
        aspirin & Regression & 169,409 & 42,353 & 210 & \mtrn{}-5/2 & $\sqrt{p}$ & $10^{-9}$ & \sgdml \\
        \hline
        benzene & Regression & 502,386 & 122,597 & 66 & \mtrn{}-5/2 & $\sqrt{p}$ & $10^{-9}$ & \sgdml \\
        \hline
        cifar10 & Classification & 50,000 & 10,000 & 1,280 & Laplacian & 20 & $10^{-6}$ & \torchv \\
        \hline
        click\_prediction & Classification & 1,597,928 & 399,482 & 11 & RBF & Median & $10^{-6}$ & \openml{} (ID: 1218) \\
        \hline
        comet\_mc & Classification & 609,552 & 152,388 & 4 & RBF & Median & $10^{-6}$ & \openml{} (ID: 23397) \\
        \hline
        covtype\_binary & Classification & 464,809 & 116,203 & 54 & RBF & 0.1 & $3.8 \cdot 10^{-7}$ & \libsvm \\
        \hline
        ethanol & Regression & 444,073 & 111,019 & 36 & \mtrn{}-5/2 & $\sqrt{p}$ & $10^{-9}$ & \sgdml \\
        \hline
        fashion\_mnist & Classification & 60,000 & 10,000 & 1,280 & Laplacian & 20 & $10^{-6}$ & \torchv \\
        \hline
        higgs & Classification & 10,500,000 & 500,000 & 28 & RBF & 3.8 & $3.0 \cdot 10^{-8}$ & \libsvm \\
        \hline
        malonaldehyde & Regression & 794,589 & 198,648 & 36 & \mtrn{}-5/2 & $\sqrt{p}$ & $10^{-9}$ & \sgdml \\
        \hline
        miniboone & Classification & 104,051 & 26,013 & 50 & RBF & 5 & $10^{-7}$ & \openml{} (ID: 41150) \\
        \hline
        mnist & Classification & 60,000 & 10,000 & 1,280 & Laplacian & 20 & $10^{-6}$ & \torchv \\
        \hline
        naphthalene & Regression & 261,000 & 65,250 & 153 & \mtrn{}-5/2 & $\sqrt{p}$ & $10^{-9}$ & \sgdml \\
        \hline
        qm9 & Regression & 100,000 & 33,728 & 435 & Laplacian & 5,120 & $10^{-8}$ & \cite{ruddigkeit2012enumeration,ramakrishnan2014quantum} \\
        \hline
        salicylic & Regression & 256,184 & 64,047 & 120 & \mtrn{}-5/2 & $\sqrt{p}$ & $10^{-9}$ & \sgdml \\
        \hline
        susy & Classification & 4,500,000 & 500,000 & 18 & RBF & 3 & $10^{-6}$ & \libsvm \\
        \hline
        svhn & Classification & 73,256 & 26,032 & 1,280 & Laplacian & 20 & $10^{-6}$ & \torchv \\
        % \hline
        % synthetic & Regression & 10,000 & \na & 10 & RBF & 1 & $10^{-6}$ & \na \\
        \hline
        taxi & Regression & 100,000,000 & 1,000,000 & 9 & RBF & 1 & $2 \cdot 10^{-7}$ & \href{https://github.com/toddwschneider/nyc-taxi-data}{nyc-taxi-data} \\
        \hline
        toluene & Regression & 354,232 & 88,558 & 105 & \mtrn{}-5/2 & $\sqrt{p}$ & $10^{-9}$ & \sgdml \\
        \hline
        uracil & Regression & 107,016 & 26,754 & 66 & \mtrn{}-5/2 & $\sqrt{p}$ & $10^{-9}$ & \sgdml \\
        \hline
        yearpredictionmsd & Regression & 463,715 & 51,630 & 90 & RBF & 7 & $2 \cdot 10^{-6}$ & \libsvm \\
        \hline
        yolanda & Regression & 320,000 & 80,000 & 100 & RBF & Median & $10^{-6}$ & \openml{} (ID: 42705) \\
        \hline
    \end{tabular}
    \caption{Datasets and hyperparameters for KRR problems in \cref{sec:experiments}. 
    $n$ is the number of training samples, $n_{\mathrm{tst}}$ is the number of test samples, $p$ is the dimension of the dataset, $\sigma$ is the kernel bandwidth, and $\lamunsc$ is the regularization parameter (before scaling by $n$). 
    ``Median'' corresponds to the median heuristic in \cite{gretton2012kernel}.}
    \label{tab:datasets_ker_hyperparams}
\end{table}

\subsubsection{KRR Hyperparameters}
We set the regularization $\lambda = n \lamunsc$, where $\lambda_{\mathrm{unsc}}$ is the unscaled regularization parameter in \cref{tab:datasets_ker_hyperparams}.
By default, we set the kernel bandwidth $\sigma$ based on the median heuristic from \cite{gretton2012kernel} and set the unscaled regularization parameter $\lamunsc$ equal to $10^{-6}$.
If $\sigma$ and/or $\lamunsc$ have been specified for a given dataset in previous work, we use those values instead; see \cref{tab:krr_hyperparam_sources} for more details.

\begin{table}[H]
    \centering
    \begin{tabular}{C{8cm}C{5cm}}
        Datasets & Source for $\sigma$ and/or $\lamunsc$ \\
        \hline
        aspirin, benzene, ethanol, malonaldehyde, napthalene, salicylic, toluene, uracil & \cite{epperly2025embrace} \\
        \hline
        cifar10, fashion\_mnist, mnist, svhn & \cite{abedsoltan2023toward} \\
        \hline
        covtype\_binary & \cite{avron2017faster} \\
        \hline
        higgs, susy, taxi, yearpredictionmsd & \cite{meanti2020kernel} \\
        \hline
        miniboone & \cite{frangella2023randomized} \\
        \hline
        qm9 & \cite{diaz2023robust} \\
        \hline
    \end{tabular}
    \caption{Previous works used to set KRR hyperparameters.}
    \label{tab:krr_hyperparam_sources}
\end{table}

\subsubsection{Inducing Points}
We sample inducing points uniformly without replacement from the training set.
\fal{} is run with $m \in \{10^4, 2 \cdot 10^4, 5 \cdot 10^4, 10^5, 2 \cdot 10^5\}$ inducing points for each dataset.
However, \fal{} does not scale beyond $5 \cdot 10^4$ inducing points in our experiments due to memory constraints.
\epro{} 3.0 is run with $m \in \{10^4, 2 \cdot 10^4, 5 \cdot 10^4, 10^5, 2 \cdot 10^5, 5 \cdot 10^5, 10^6\}$ inducing points for each dataset.
The run for \fal{} displayed for each dataset is the one that uses the largest possible number of inducing points.
The run for \epro{} 3.0 displayed for each dataset is the one that uses the largest possible number of inducing points without diverging.

\subsubsection{Descriptions of Classification and Regression Tasks}
\textit{Classification: computer vision.}
We use the cifar10 \citep{krizhevsky2009learning}, fashion\_mnist \citep{xiao2017fashionmnist}, mnist \citep{lecun1998mnist}, and svhn \citep{netzer2011reading} datasets, which are popular benchmarks in computer vision.
Following \cite{abedsoltan2023toward}, we extract features from these datasets using pretrained MobileNetV2 \citep{sandler2018mobilenetv2} and use a Laplacian kernel. 
For simplicity, we perform one vs. all classification, separating the examples with the smallest label (e.g., $0$ in mnist) from all other examples in the data.

\textit{Classification: particle physics.}
We use the miniboone \citep{roe2005boosted}, comet\_mc \citep{cometmontecarlo}, susy, and higgs \citep{baldi2014searching} datasets from particle physics.
The tasks associated with these datasets are detecting neutrino oscillations, identifying muon-to-electron conversions, discovering the Higgs boson, and searching for supersymmetry particles.
Following \cite{meanti2020kernel}, we use an RBF kernel.

\textit{Classification: other applications.}
We use the covtype\_binary \citep{collobert2001parallel} and click\_prediction \citep{aden2012kdd} datasets.
covtype\_binary is from ecological modeling, and the task is to predict the type of forest cover in a particular region of the United States.
click\_prediction is from online advertising, and the task is to predict whether or not a user will click on a given advertisement.
We use an RBF kernel for both datasets.

\textit{Regression: computational chemistry.}
We use the qm9 dataset \citep{ruddigkeit2012enumeration,ramakrishnan2014quantum} and eight different datasets corresponding to important molecules in chemistry (aspirin, benzene, ethanol, malonaldehyde, naphthalene, salicylic acid, toluene, and uracil) \citep{chimela2017machine}.
The task for qm9 is to predict the highest-occupied-molecular-orbital energy for organic molecules.
The task for the eight molecule datasets is to predict potential energies of atomic systems, which could be used to accelerate molecular dynamics simulations.
Following \cite{stuke2019chemical,diaz2023robust}, we use a Laplacian kernel for qm9; 
following \cite{chimela2017machine,epperly2025embrace}, we use a \mtrn{}-5/2 kernel for the molecule datasets.

\textit{Regression: other applications.}
We use the yolanda \citep{guyon2019analysis}, yearpredictionmsd \citep{bertinmahieux2011yearpredictionmsd}, and acsincome \citep{ding2021retiring} datasets.
Both yolanda and yearpredictionmsd are from music analysis, and the task is to predict the release year of songs given audio features.
acsincome is from socioeconomics, and the task is to predict incomes given demographic features such as age, employment, and education.
We use an RBF kernel for all three datasets.

\subsubsection{Dataset Generation and Preprocessing}
% We generate synthetic data $X \in \R^{10000 \times 10}$ by sampling a random matrix with i.i.d. $\mathcal N(0, 1)$ entries.
% The targets $y \in \R^{10000}$ are generated using $y = \mathrm{sign}(Xw)$, where $w \in \R^{10}$ is a random vector with i.i.d. $\mathcal N(0, 1)$ entries.
For taxi, we use yellow Taxi data between January 2009 and December 2015, removing outliers (trips with a duration of more than 5 hours) and trips whose pickup or dropoff locations fall outside of New York City\footnote{This preprocessing is consistent with \cite{meanti2020kernel}.}.
We then randomly subsample $10^8$ points to form a training set and $10^6$ points to form a test set.

We preprocess qm9 according to \cite{diaz2023robust} and preprocess the molecule datasets according to \cite{epperly2025embrace}. 

We always standardize features of each dataset before running experiments.
We also subtract the means of the targets from aspirin, benzene, ethanol, malonaldehyde, napthalene, salicylic, toluene, uracil, yearpredictionmsd, yolanda, and acsincome before running experiments.
By default, we use a 0.8/0.2 train/test split for each dataset, unless such a split has already been described in the literature.
All additional preprocessing details can be found in the \href{\codeurl}{codebase} for this paper.

\subsubsection{Time Limits for Datasets}
The time limit for each dataset is given in \cref{tab:time_limits}.
These time limits increase with the number of samples $n$.

\begin{table}[H]
    \centering
    \begin{tabular}{C{8cm}C{3cm}}
        Datasets & Time limit (s) \\ \hline
        cifar10, fashion\_mnist, mnist, svhn & 1800 \\ \hline
        aspirin, miniboone, naphthalene, qm9, salicylic, toluene, uracil, yearpredictionmsd, yolanda & 3600 \\ \hline
        benzene, comet\_mc, covtype\_binary, ethanol, malonaldehyde & 7200 \\ \hline
        acsincome, click\_prediction, higgs, susy & 10800 \\ \hline
        taxi & 86400 \\ \hline
    \end{tabular}
    \caption{Time limits for each dataset.}
    \label{tab:time_limits}
\end{table}

% \subsection{Performance Comparisons Omitted from the Main Paper}
% \label{subsec:performance_comparison_additional}
% \cref{fig:molecules_small_float64} contains the results for the four smallest molecule datasets, which were omitted from \cref{subsubsec:comp_chem}.
% Similar to the results in \cref{subsubsec:comp_chem}, \asko{} outperforms the competition, and full KRR yields better predictive performance than inducing points KRR.

\subsection{Comparisons to \fal{} and \pcg{} in Single Precision}
\label{subsec:pcg_single_precision}
The main paper shows \fal{} and \pcg{} in double precision.
However, using double precision causes \fal{} and \pcg{} to use twice as much memory and four times as much compute per iteration.
Here we demonstrate that \asko{} still outperforms these methods when they are run in single precision on the datasets used in the performance comparisons (\cref{subsec:performance_comparisons}).
We present a performance plot for this setting in \cref{fig:perf_float32}.
On classification tasks, \asko{} clearly outperforms \pcg{} and \fal{}.
On regression tasks, \asko{} initially appears to be worse than \pcg{} for most of the time budget.
However \asko{} eventually takes the lead over PCG.
This sudden improvement towards the end of the time budget occurs since \asko{} does not saturate the test MAE on the molecule datasets within the time budget.

\begin{figure}[htbp]
    \centering
    \includegraphics[width=0.8\linewidth]{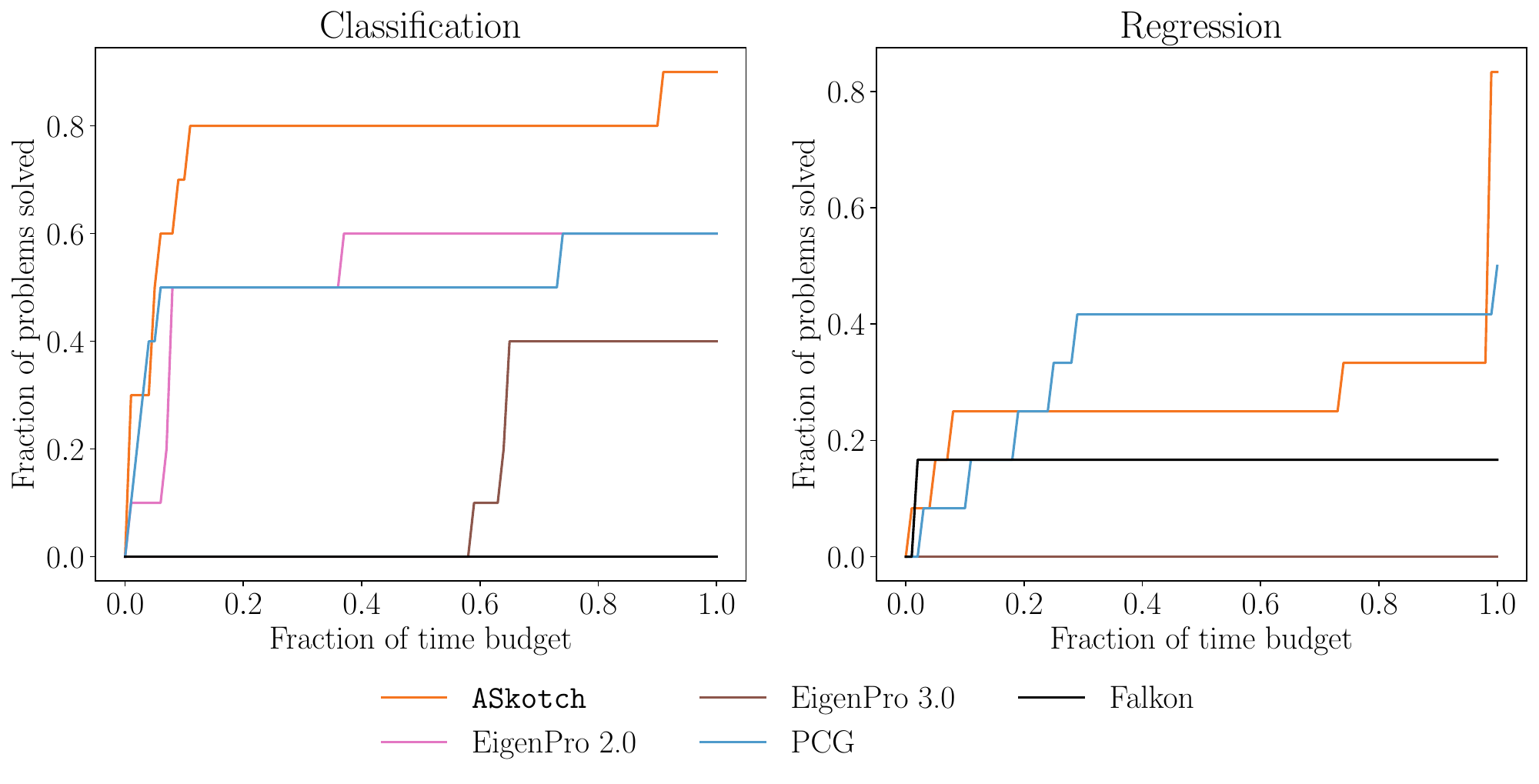}
    \caption{Performance comparison between \asko{} and competitors on 10 classification and 13 regression tasks.
    We designate a classification problem as ``solved'' when the method reaches within 0.001 of the highest test accuracy found across all the optimizer + hyperparameter combinations.
    We designate a regression problem as ``solved'' when the method reaches within 1\% of the lowest test MAE (in a relative sense) found across all the optimizer + hyperparameter combinations.
    \pcg{} and \fal{} are run in single precision.
    \asko{} outperforms the competition on both classification and regression.}
    \label{fig:perf_float32}
\end{figure}

\subsection{Ablation Plots Omitted from Main Paper}
\label{subsec:ablation_additional}
We provide results from the ablation study that were omitted from \cref{subsec:ablation}.
We provide an ablation study for each set of performance comparisons in \cref{subsec:performance_comparisons}.
The results are shown in \cref{fig:vision_ablation,fig:tabular_classification_ablation,fig:qm9_ablation,fig:tabular_regression_ablation}.
The takeaways from these figures are similar to those in \cref{subsec:ablation}:
\nys{} approximation with ``damped'' rho consistently yields the best performance,
acceleration is often beneficial, and approximate RLS sampling does not provide a noticeable improvement over uniform sampling.

\begin{figure}[htbp]
    \centering
    \includegraphics[width=\linewidth]{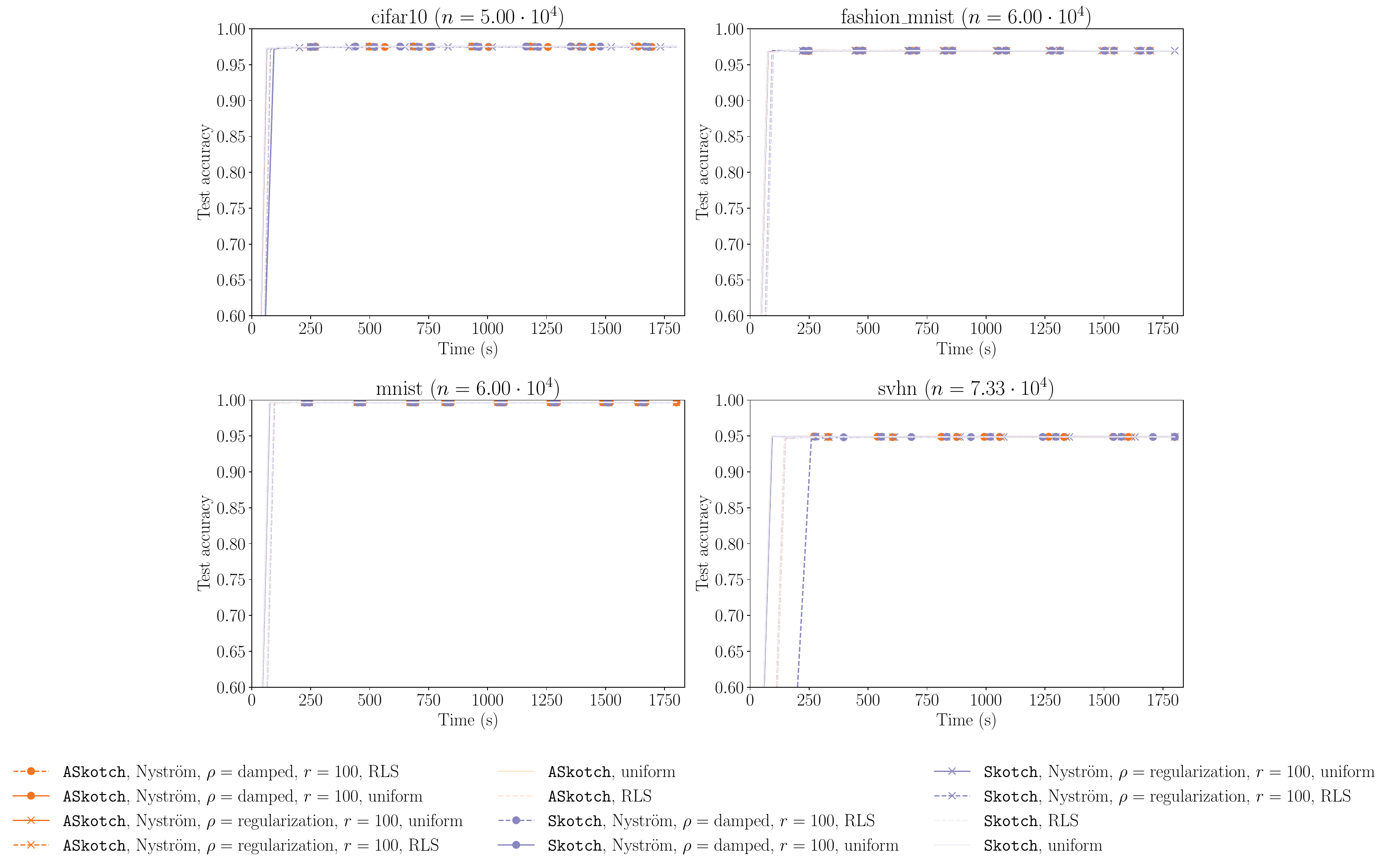}
    \caption{Ablation study of \sko{} and \asko{} on classification tasks from computer vision.}
    \label{fig:vision_ablation}
\end{figure}

\begin{figure}[htbp]
    \centering
    \includegraphics[width=\linewidth]{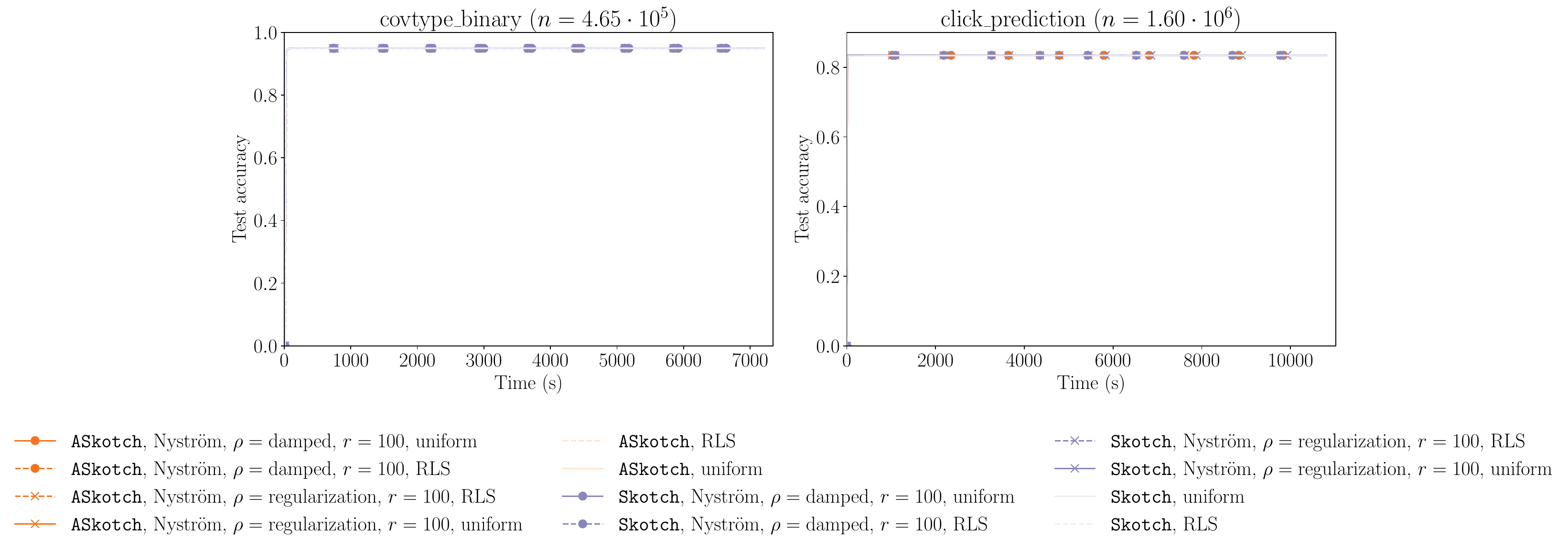}
    \caption{Ablation study of \sko{} and \asko{} on classification tasks from ecological modeling and online advertising.}
    \label{fig:tabular_classification_ablation}
\end{figure}

\begin{figure}[htbp]
    \centering
    \includegraphics[width=\linewidth]{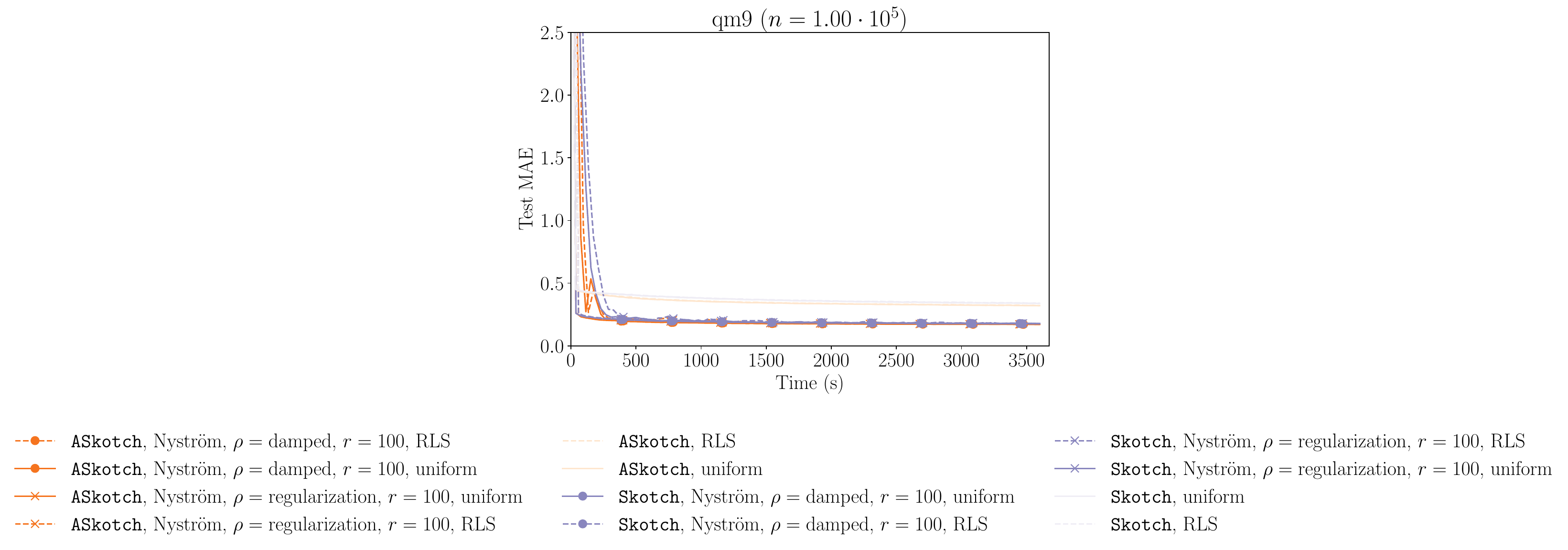}
    \caption{Ablation study of \sko{} and \asko{} on regression tasks for the qm9 dataset.}
    \label{fig:qm9_ablation}
\end{figure}

\begin{figure}[htbp]
    \centering
    \includegraphics[width=\linewidth]{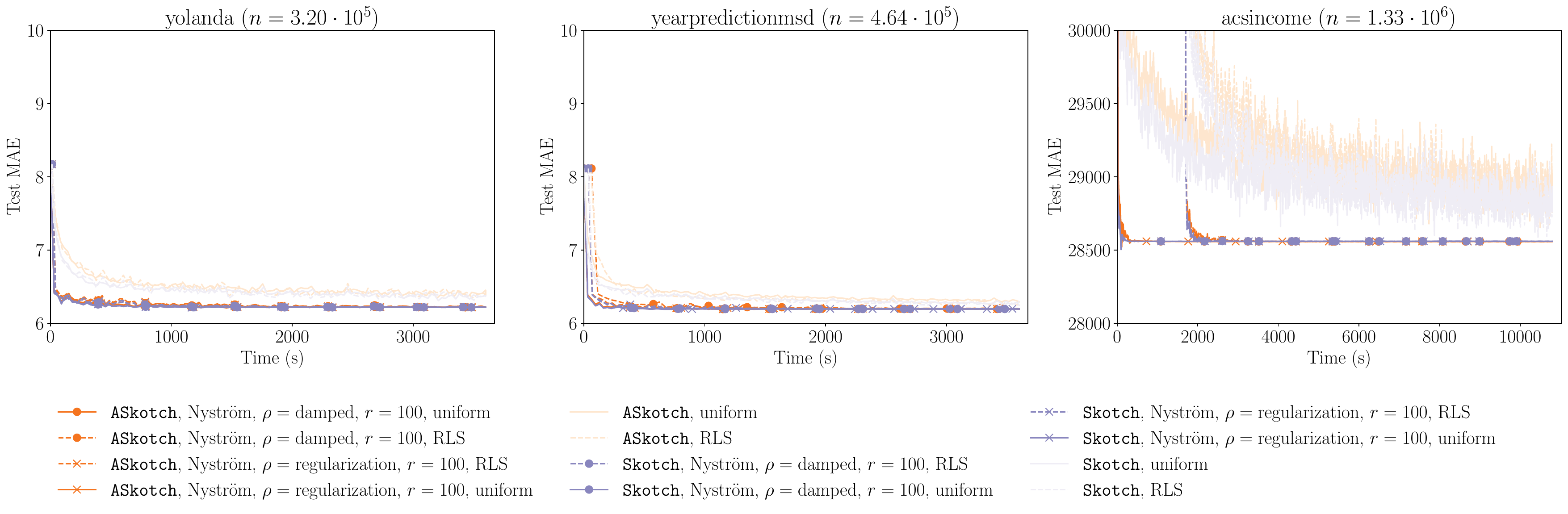}
    \caption{Ablation study of \sko{} and \asko{} on regression tasks from music analysis and socioeconomics.}
    \label{fig:tabular_regression_ablation}
\end{figure}
}

\vskip 0.2in
\bibliography{references}

\end{document}